\newenvironment{packed_enum}{
  \begin{enumerate}
    \setlength{\itemsep}{1pt}
    \setlength{\parskip}{-1pt}
    \setlength{\parsep}{0pt}
}{\end{enumerate}}
\newcommand{\insertprethmspacing}{\vspace{0.05in}}
\newcommand{\insertpostthmspacing}{\vspace{-0.05in}}
\newcommand{\poly}{\textrm{poly}}
\newcommand{\otil}{\tilde{\mathcal{O}}}
\newcommand{\order}{\mathcal{O}}
\newcommand{\Reg}{\textrm{Regret}}
\newcommand{\ramp}{\phi^\gamma}
\newcommand{\hinge}{\psi^\gamma}
\newcommand{\Vol}{\textrm{Vol}}
\newcommand{\lmc}{\textsc{Hinge-LMC}}
\newcommand{\ftl}{\textsc{SmoothFTL}}
\newcommand{\RRz}{\RR_{=0}}
\newcommand{\piramp}{\pi_{\textrm{ramp}}}
\newcommand{\pihinge}{\pi_{\textrm{hinge}}}
\newcommand{\Rhinge}{R^\psi}
\newcommand{\Rhathinge}{\hat{R}^\psi}
\newcommand{\expfour}{\textsc{Exp4}\xspace}
\newcommand{\hedge}{\textsc{Hedge}\xspace}
\newcommand{\linucb}{\textsc{LinUCB}\xspace}
\newcommand{\banditron}{\textsc{Banditron}\xspace}
\newcommand{\inftycover}{\cN_{\infty,\infty}}
\theoremstyle{definition}  
\theoremstyle{plain}
\xpatchcmd{\proof}{\itshape}{\normalfont\proofnameformat}{}{}
\newcommand{\proofnameformat}{\bfseries}
\newcommand{\pref}[1]{\prettyref{#1}}
\newcommand{\pfref}[1]{Proof of \prettyref{#1}}
\newcommand{\savehyperref}[2]{\texorpdfstring{\hyperref[#1]{#2}}{#2}}
\DeclarePairedDelimiter{\abs}{\lvert}{\rvert} %
\DeclarePairedDelimiter{\brk}{[}{]}
\DeclarePairedDelimiter{\crl}{\{}{\}}
\DeclarePairedDelimiter{\prn}{(}{)}
\DeclarePairedDelimiter{\nrm}{\|}{\|}
\DeclarePairedDelimiter{\tri}{\langle}{\rangle}
\DeclarePairedDelimiter{\dtri}{\llangle}{\rrangle}
\DeclarePairedDelimiter{\floor}{\lfloor}{\rfloor}
\DeclareMathOperator{\En}{\mathbb{E}}
\newcommand{\ls}{\ell}
\newcommand{\pmo}{\crl*{\pm{}1}}
\newcommand{\eps}{\epsilon}
\newcommand{\veps}{\varepsilon}
\newcommand{\ldef}{\vcentcolon=}
\newcommand{\wt}[1]{\widetilde{#1}}
\def\ddefloop#1{\ifx\ddefloop#1\else\ddef{#1}\expandafter\ddefloop\fi}
\def\ddef#1{\expandafter\def\csname bb#1\endcsname{\ensuremath{\mathbb{#1}}}}
\def\ddefloop#1{\ifx\ddefloop#1\else\ddef{#1}\expandafter\ddefloop\fi}
\def\ddef#1{\expandafter\def\csname b#1\endcsname{\ensuremath{\mathbf{#1}}}}
\def\ddef#1{\expandafter\def\csname c#1\endcsname{\ensuremath{\mathcal{#1}}}}
\def\ddef#1{\expandafter\def\csname h#1\endcsname{\ensuremath{\widehat{#1}}}}
\def\ddef#1{\expandafter\def\csname hc#1\endcsname{\ensuremath{\widehat{\mathcal{#1}}}}}
\def\ddef#1{\expandafter\def\csname t#1\endcsname{\ensuremath{\widetilde{#1}}}}
\def\ddef#1{\expandafter\def\csname tc#1\endcsname{\ensuremath{\widetilde{\mathcal{#1}}}}}
\newcommand{\Holder}{H{\"o}lder}
\renewcommand{\mb}[1]{\boldsymbol{#1}}
\newcommand{\bz}{\mb{z}}
\newcommand{\bx}{\mb{x}}
\newcommand{\bv}{\mb{v}}
\newcommand{\bw}{\mb{w}}
\newcommand{\bu}{\mb{u}}
\newcommand{\bls}{\mb{\ls}}
\newcommand{\defeq}{\triangleq}
\newcommand{\Otilde}{\tilde{O}}
\newcommand{\expweights}{exponential weights\xspace}
\newcommand{\bistro}{\textsc{BISTRO}\xspace}
\newcommand{\grad}{\nabla}
\newcommand{\dmid}{\;||\;}
\newcommand{\KL}{\mathrm{KL}}
\title{\huge Contextual bandits with surrogate losses:\\Margin bounds and efficient algorithms}
\date{}
\author[1]{
Dylan J. Foster
\thanks{djfoster@cs.cornell.edu}}
\author[2]{
Akshay Krishnamurthy
\thanks{akshay@cs.umass.edu}}
\affil[1]{Cornell University, Ithaca, NY}
\affil[2]{Microsoft Research, New York, NY}
\begin{document}

\maketitle

\begin{abstract}
  We use surrogate losses to obtain several new regret bounds and new
  algorithms for contextual bandit learning. Using the ramp loss, we
  derive new margin-based regret bounds in terms of standard
  sequential complexity measures of a benchmark class of real-valued
  regression functions. Using the hinge loss, we derive an efficient
  algorithm with a $\sqrt{dT}$-type mistake bound against benchmark
  policies induced by $d$-dimensional regressors. Under realizability
  assumptions, our results also yield classical regret bounds.
\end{abstract}

\section{Introduction}
\label{sec:intro}


We study sequential prediction problems with partial feedback,
mathematically modeled as \emph{contextual
  bandits}~\citep{langford2008epoch}. In this formalism, a learner
repeatedly (a) observes a \emph{context}, (b) selects an
\emph{action}, and (c) receives a \emph{loss} for the chosen
action. The objective is to learn a policy for selecting actions with
low loss, formally measured via \emph{regret} with respect to a
class of benchmark policies.  Contextual bandit algorithms have
been successfully deployed in online recommendation
systems~\citep{agarwal2016making}, mobile health
platforms~\citep{tewari2017ads}, and elsewhere.

In this paper, we use surrogate loss functions to derive new
\emph{margin-based} algorithms and regret bounds for contextual
bandits. Surrogate loss functions are ubiquitous in supervised
learning
(cf.~\cite{zhang2004statistical,bartlett2006convexity,schapire2012boosting}).
Computationally, they are used to replace NP-hard optimization
problems with tractable ones, e.g., the hinge loss
makes binary classification amenable to convex programming
techniques. Statistically, they also enable sharper generalization
analysis for models including boosting, SVMs, and neural
networks~\citep{schapire2012boosting,anthony2009neural}, by replacing dependence on dimension in VC-type bounds with
distribution-dependent quantities. 
For example, to agnostically learn $d$-dimensional halfspaces the optimal rates for excess risk are $\sqrt{d/n}$ for the $0/1$ loss benchmark
and $\frac{1}{\gamma}\cdot\sqrt{1/n}$ for the $\gamma$-margin loss benchmark~\citep{kakade2009complexity}, meaning \emph{the margin bound removes explicit dependence on dimension}.
Curiously, surrogate losses have
seen limited use in partial information settings (some exceptions are
discussed below).  This paper demonstrates that these desirable
computational and statistical properties indeed extend to contextual
bandits.

In the first part of the paper we focus on statistical issues, namely
whether \emph{any algorithm} can achieve a generalization of the
classical margin bound from statistical
learning \citep{boucheron2005theory} in the adversarial contextual
bandit setting. Our aim here is to introduce a theory of learnability
for contextual bandits, in analogy with statistical and online
learning, and our results provide an information-theoretic benchmark
for future algorithm designers. We consider benchmark policies induced
by a class of real-valued regression functions and obtain a regret
bound in terms of the class' \emph{sequential metric entropy}, a
standard complexity measure in online
learning~\citep{rakhlin2015sequential}.
As a consequence, we show that
$\otil(T^{\frac{d}{d+1}})$ regret is achievable for Lipschitz
contextual bandits in $d$-dimensional metric spaces, improving on a
recent result of~\citet{cesa2017algorithmic}, and that an
$\Otilde(T^{2/3})$ mistake bound is achievable for bandit multiclass
prediction in smooth Banach spaces, extending~\citet{kakade2008efficient}.


Technically, these results build on the non-constructive minimax analysis
of~\citet{rakhlin2015sequential}, which, for the online adversarial
setting, prescribes a recipe for characterizing statistical behavior
of arbitrary classes, and thus provides a counterpart to empirical
risk minimization in statistical learning. Indeed, for
full-information problems, this approach yields regret bounds in terms
of sequential analogues of standard complexity measures including
Rademacher complexity and metric entropy. However, since we work in
the contextual bandit setting, we must extend these arguments to
incorporate partial information. To do so, we leverage the adaptive
minimax framework of~\citet{foster2015adaptive} along with a careful
``adaptive" chaining argument.

In the second part of the paper, we focus on computational issues and
derive two new algorithms using the hinge loss as a convex
surrogate. The first algorithm, \lmc, provably runs in polynomial time
and achieves a $\sqrt{dT}$-mistake bound against $d$-dimensional
benchmark regressors with convexity properties.  \lmc\xspace is the
first efficient algorithm with $\sqrt{dT}$-mistake bound for bandit
multiclass prediction using a surrogate loss without curvature, and so
it provides a new resolution to the open problem
of~\citet{abernethy2009efficient}.  This algorithm is based on the
\expweights update, along with Langevin Monte Carlo for efficient
sampling and a careful action selection scheme.  
The second algorithm
is much simpler: in the stochastic setting,
Follow-The-Leader with appropriate smoothing matches our information-theoretic results for sufficiently large classes.


\subsection{Preliminaries}
\label{ssec:prelims}
Let $\Xcal$ denote a context space and $\Acal = \{1,\ldots,K\}$
a discrete action space. In the adversarial contextual bandits
problem, for each of $T$ rounds, an adversary chooses a pair
$(x_t,\ell_t)$ where $x_t \in \Xcal$ is the context and
$\ell_t \in [0,1]^K\triangleq\cL$ is a loss vector. The learner
observes the context $x_t$, chooses an action $a_t$, and incurs loss
$\ell_t(a_t) \in [0,1]$, which is also observed. The goal of the
learner is to minimize the cumulative loss over the $T$
rounds, and, in particular, we would like to design learning
algorithms that achieve low \emph{regret} against a class
$\Pi \subset \prn*{\Xcal \to \Acal}$ of benchmark policies:
\begin{equation*}
  \Reg(T,\Pi) \triangleq \sum_{t=1}^T\EE[\ell_t(a_t)] - \inf_{\pi \in \Pi} \sum_{t=1}^T \EE[\ell_t(\pi(x_t))].
\end{equation*}
In this paper, we always identify $\Pi$ with a class of vector-valued
regression functions $\Fcal \subset (\Xcal \to \RRz^K)$, where $\RRz^K
\triangleq \{s \in \RR^K: \sum_a s_a = 0\}$. We use the notation $f(x)
\in \RR^K$ to denote the vector-valued output and $f(x)_a$ to denote
the $a^{\textrm{th}}$ component.  Note that we are assuming $\sum_a
f(x)_a = 0$, which is a natural generalization of the standard
formulation for binary classification~\citep{bartlett2006convexity}
and appears in~\citet{pires2013cost}. Define
$B\triangleq\sup_{f\in\cF}\sup_{x\in\cX}\nrm*{f(x)}_{\infty}$ to be
the maximum value predicted by any regressor.

Our algorithms use \emph{importance weighting} to form unbiased loss
estimates. If at round $t$, the algorithm chooses action $a_t$ by
sampling from a distribution $p_t \in \Delta(\Acal)$, the loss
estimate is defined as
$\hat{\ell}_t(a) \triangleq \ell_t(a_t)\one\{a_t=a\}/p_t(a)$. Given
$p_t$, we also define a smoothed distribution as
$p_t^\mu \triangleq (1-K\mu)p_t + \mu$ for some parameter
$\mu \in [0,1/K]$.

We introduce two surrogate loss functions, the \emph{ramp loss} and
the \emph{hinge loss}, whose scalar versions are defined as $\ramp(s)
\triangleq \min(\max(1+s/\gamma,0),1)$ and $\hinge(s) \triangleq
\max(1+s/\gamma,0)$ respectively, for $\gamma > 0$. For $s\in\bbR^{K}$, $\ramp(s)$ and
$\hinge(s)$ are defined coordinate-wise.  We start with a simple
lemma, demonstrating how $\ramp,\hinge$ act as surrogates for
cost-sensitive multiclass losses.
\insertprethmspacing
\begin{lemma}[Surrogate Loss Translation]
\label{lem:calibration}
For $s \in \RRz^K$, define $\piramp(s),\pihinge(s) \in \Delta(\Acal)$ by
$\piramp(s)_a \propto \ramp(s_a)$ and $\pihinge(s)_a \propto \hinge(s_a)$. For any vector $\ell \in \RR^K_+$, we have
\begin{align*}
  \langle \piramp(s), \ell\rangle &\leq \langle \ell,\ramp(s)\rangle \leq \sum_{a \in \Acal}\ell(a)\textup{\one}\{s_a \ge -\gamma\}, \qquad\text{and}\qquad
  \langle \pihinge(s), \ell\rangle \leq K^{-1} \langle \ell,\hinge(s)\rangle.
\end{align*}
\end{lemma}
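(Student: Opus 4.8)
The plan is to verify the two displays by short direct computations, using the constraint $s \in \RRz^K$ (that is, $\sum_a s_a = 0$) only to lower-bound the normalization constants of $\piramp(s)$ and $\pihinge(s)$. Write $Z_\phi \defeq \sum_{b\in\Acal}\ramp(s_b)$ and $Z_\psi \defeq \sum_{b\in\Acal}\hinge(s_b)$, so that by definition $\langle \piramp(s),\ell\rangle = Z_\phi^{-1}\langle \ell,\ramp(s)\rangle$ and $\langle \pihinge(s),\ell\rangle = Z_\psi^{-1}\langle \ell,\hinge(s)\rangle$. Since $\ell \in \RR^K_+$ and both $\ramp,\hinge$ are nonnegative, the numerators $\langle \ell,\ramp(s)\rangle$ and $\langle\ell,\hinge(s)\rangle$ are nonnegative, so it suffices to show $Z_\phi \ge 1$ (for the left ramp inequality) and $Z_\psi \ge K$ (for the hinge inequality); these also show the normalizations are well defined.

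For the hinge bound I would first note the pointwise inequality $\hinge(t) = \max(1+t/\gamma,0) \ge 1+t/\gamma$. Summing over coordinates and invoking $\sum_b s_b = 0$ gives $Z_\psi \ge K + \gamma^{-1}\sum_b s_b = K$, and dividing the identity above by $Z_\psi$ yields $\langle \pihinge(s),\ell\rangle \le K^{-1}\langle\ell,\hinge(s)\rangle$.

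For the ramp bound there are two inequalities. The right-hand one, $\langle\ell,\ramp(s)\rangle \le \sum_{a}\ell(a)\one\{s_a \ge -\gamma\}$, follows coordinatewise (using $\ell(a)\ge0$) from $\ramp(t) \le \one\{t \ge -\gamma\}$: if $t < -\gamma$ then $1+t/\gamma < 0$ so $\ramp(t) = 0$, and if $t \ge -\gamma$ the right side is $1$, which dominates $\ramp(t)$ since $\ramp \le 1$ everywhere. For the left-hand one I use that $\sum_b s_b = 0$ forces the maximal coordinate $s_{b^\star}$ to be nonnegative, whence $1 + s_{b^\star}/\gamma \ge 1$ and $\ramp(s_{b^\star}) = 1$; as every term of $Z_\phi$ is nonnegative, $Z_\phi \ge 1$, and dividing gives $\langle\piramp(s),\ell\rangle \le \langle\ell,\ramp(s)\rangle$.

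I do not anticipate a real obstacle here: the content of the lemma is entirely in the observation that the subspace constraint $s\in\RRz^K$ is exactly what guarantees $Z_\phi\ge1$ and $Z_\psi\ge K$, so that passing from the loss vector to the normalized policy can only decrease the incurred loss; the remaining steps are one-line facts about the scalar maps $\ramp$ and $\hinge$.
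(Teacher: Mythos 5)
Your proof is correct and follows essentially the same route as the paper's: lower-bound the normalizers ($Z_\phi \ge 1$ via the nonnegativity of the maximal coordinate of $s$, $Z_\psi \ge K$ via $\hinge(t)\ge 1+t/\gamma$ and the sum-to-zero constraint) and handle the right-hand ramp inequality coordinatewise using $\ramp(t)\le \one\{t\ge-\gamma\}$. The only difference is that you spell out explicitly why $Z_\phi\ge 1$, which the paper leaves implicit.
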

\insertpostthmspacing 
Based on this lemma, it will be convenient to
define $L_T^\gamma(f) \triangleq \sum_{t=1}^T\sum_{a \in \Acal}
\ell_t(a) \one\{f(x_t)_a \ge -\gamma\}$, which is the
\emph{margin-based cumulative loss} for the regressor
$f$. $L_{T}^{\gamma}$ should be seen as a cost-sensitive multiclass
analogue of the classical margin loss in statistical learning
\citep{boucheron2005theory}. We use the term ``surrogate loss''
here because these quantities upper bound the cost-sensitive loss:
$\ell(\argmax_a s_a) \leq \langle \ell,\ramp(s)\rangle \leq \langle
\ell, \hinge(s)\rangle$.\footnote{On a related note, the information-theoretic results we present are also compatible with the surrogate function $\theta^{\gamma}(s)_a \ldef \max\crl*{1+(s_a - \max_{a'}s_{a'})/\gamma,0}$, which also satisfies $\ell(\argmax_a s_a) \leq \langle \ell,\theta^{\gamma}(s)\rangle$. This leads to a perhaps more standard notion of multiclass margin bound but does not lead to efficient algorithms.
} In the sequel, $\piramp$ and $\pihinge$ are
used by our algorithms, but do not define the benchmark policy class,
since we compare directly to $L_T^\gamma$ or the surrogate loss.

\paragraph{Related work.}
Contextual bandit learning has been the subject of intense
investigation over the past decade. The most natural categorization of
these works is between parametric, realizability-based, and agnostic
approaches. Parametric methods
(e.g.,~\cite{abbasi2011improved,chu2011contextual}) assume a
(generalized) linear relationship between the losses and the
contexts/actions. Realizability-based methods generalize parametric
ones by assuming the losses are predictable by some abstract
regression
class~\citep{agarwal2012contextual,foster2018practical}. Agnostic
approaches
(e.g.,~\cite{auer2002nonstochastic,langford2008epoch,agarwal2014taming,rakhlin2016bistro,syrgkanis2016efficient,syrgkanis2016improved})
avoid realizability assumptions and instead compete with VC-type
policy classes for statistical tractability. Our work contributes to
all of these directions, as our margin bounds apply to the agnostic
adversarial setting and yield true regret bounds under realizability
assumptions.

A special case of contextual bandits is \emph{bandit multiclass
  prediction}, where the loss vector is zero for one action and one
for all others~\citep{kakade2008efficient}. Several recent papers
obtain surrogate regret bounds and efficient algorithms for this
setting when the benchmark regressor class $\Fcal$ consists of linear
functions~\citep{kakade2008efficient,hazan2011newtron,beygelzimerOZ17,foster2018logistic}. Our
work contributes to this line in two ways: our bounds and algorithms extend
beyond linear/parametric classes, and
we consider the more general contextual bandit setting.

Our information-theoretic results on achievability are similar in spirit those of
\citet{daniely2013price}, who derive tight generic bounds for
bandit multiclass prediction in terms of the Littlestone
dimension. This result is incomparable to our own: their bounds are on
the $0/1$ loss regret directly rather than surrogate regret, but the
Littlestone dimension is not a tight complexity measure for
real-valued function classes in agnostic settings, which is our focus.

At a technical level, our work builds on several recent results.
 To derive achievable regret bounds, we use the adaptive minimax
 framework of \citet{foster2015adaptive}, along with a new adaptive
 chaining argument to control the supremum of a martingale
 process~\citep{rakhlin2015sequential}. Our \lmc\xspace algorithm is
 based on log-concave sampling~\citep{bubeck2015sampling}, and it uses
 randomized smoothing~\citep{duchi2012randomized} and the geometric
 resampling trick of~\citet{neu2013efficient}. We also use several
 ideas from classification
 calibration~\citep{zhang2004statistical,bartlett2006convexity}, and,
 in particular, the surrogate hinge loss we work with is studied
 by~\citet{pires2013cost}.


\section{Achievable regret bounds}
\label{sec:minimax}

This section provides generic surrogate regret bounds for contextual
bandits in terms of the sequential metric
entropy~\citep{RakSriTew14jmlr} of the regressor class $\cF$. Notably, our general techniques apply when the ramp loss is used as a surrogate, and so, via~\pref{lem:calibration}, they yield the main result of the section----a margin-based regret guarantee---as a special case.

To motivate our approach, consider a well-known reduction from bandits to full information online learning:
If a full information algorithm achieves a regret bound in terms of
the so-called \emph{local norms} $\sum_t\langle p_t,\ell_t^2\rangle$, then
running the full information algorithm on importance-weighted losses $\hat{\ls}_{t}(a)$
yields an expected regret bound for the bandit setting. For example, when $\Pi$ is finite,
\expfour{} \citep{auer2002nonstochastic} 
uses \hedge~\citep{freund1997decision} as the full information algorithm, and obtains a deterministic regret bound of 
\begin{align}
  \label{eq:exp4}
  \Reg(T,\Pi) \leq \frac{\eta}{2} \sum_{t=1}^T \En_{\pi\sim{}p_t}\tri*{\pi(x_t), \hat{\ls}_{t}}^{2} + \frac{\log (|\Pi|)}{\eta},
\end{align}
where $\eta>0$ is the learning rate and $p_{t}$ is the distribution
over policies in $\Pi$ (inducing an action distribution) for round
$t$. Evaluating conditional expectations and optimizing 
$\eta$ yields a regret bound of $\order(\sqrt{KT\log(|\Pi|)})$,
which is optimal for contextual bandits with a finite policy class. 

To use this reduction beyond the finite class case and with
surrogate losses we face two challenges:
\begin{enumerate}[leftmargin=*]
\item \textbf{Infinite classes.} The natural approach of using a
  pointwise (or sup-norm) cover for $\cF$ is insufficient---not only
  because there are classes that have infinite pointwise covers yet
  are online-learnable, but also because it yields sub-optimal rates
  even when a finite pointwise cover is available. Instead, we
  establish existence of a full-information algorithm for large
  nonparametric classes that has 1) strong adaptivity to loss scaling
  as in \pref{eq:exp4} and 2) regret scaling with the sequential
  covering number for $\cF$, which is the correct generalization of
  the empirical covering number in statistical learning to the
  adversarial online setting. This is achieved via non-constructive
  methods.
  
\item \textbf{Variance control}. With surrogate losses, controlling
  the variance/local norm term $\EE_{\pi}\langle\pi(x_t), \hat{\ell}_t\rangle^2$ in the
  reduction from bandit to full information is more challenging, since
  the surrogate loss of a policy depends on the scale of the underlying regressor, not just the action it selects. To address this, we develop a new sampling scheme tailored to scale-sensitive losses.
\end{enumerate}

\paragraph{Full-information regret bound.}
We consider the following full information protocol, which in the sequel will be instantiated via reduction from contextual bandits. Let the context space $\cX$ and $\Acal$ be fixed as in \pref{ssec:prelims}, and consider a function class $\cG\subset(\cX\to\cS)$, where $\cS\subseteq{}\bbR^{K}_{+}$. The reader may think of $\cG$ as representing $\ramp\circ\cF$ or $\hinge\circ\cF$, i.e. the surrogate loss composed with the regressor class, so that $\cS$ (which is not necessarily convex) represents the image of the surrogate loss over $\cF$.

The online learning protocol is: For time $t=1,\ldots,T$, (1) the learner observes $x_t$ and chooses a distribution $p_t\in\Delta(\cS)$, (2) the adversary picks a loss vector $\ls_{t}\in{}\cL\subset{}\bbR^{K}_{+}$, (3) the learner samples outcome $s_t\sim{}p_t$ and experiences loss $\tri*{s_t, \ls_t}$. Regret against the benchmark class $\cG$ is given by
\begin{equation*}
  \sum_{t=1}^{T}\En_{s_t\sim{}p_t}\tri*{s_t,\ls_{t}} - \inf_{g\in\cG}\sum_{t=1}^{T}\tri*{g(x_t),\ls_t}.
\end{equation*}
As our complexity measure, we use a multi-output generalization of \emph{sequential covering numbers} introduced by \citet{RakSriTew14jmlr}. Define a \emph{$\cZ$-valued tree} $\bz$ to be a sequence of mappings $\bz_{t}:\pmo^{t-1}\to\cZ$.
The tree $\bz$ is a complete rooted binary tree with nodes labeled by elements of $\cZ$, where for any ``path'' $\eps\in\pmo^{T}$, $\bz_{t}(\eps)\triangleq\bz_{t}(\eps_{1:t-1})$ is the value of the node at level $t$ on the path $\eps$.
\begin{definition}
  \label{def:cover}
  For a function class $\cG\subset(\cX\to\bbR^{K})$ and $\cX$-valued tree $\bx$ of length $T$, the $L_{\infty}/\ls_{\infty}$ sequential covering number\footnote{Sequential coverings for $L_p/\ls_q$ can be defined similarly, but do not appear in the present paper.} for $\cG$ on $\bx$ at scale $\veps$, denoted by $\inftycover(\veps, \cG, \bx)$, is the cardinality of the smallest set $V$ of $\RR^K$-valued trees for which
  \begin{equation}
\forall{}g\in\cG\;\forall{}\eps\in\pmo^{T}\;\exists{}\bv\in{}V\textnormal{ s.t. }\max_{t\in\brk*{T}}\nrm*{g(\bx_{t}(\eps)) - \bv_{t}(\eps)}_{\infty}\leq\veps.
    \end{equation}
  Define $\inftycover(\veps, \cG, T) \defeq \sup_{\bx : \mathrm{length}(\bx)=T}\inftycover(\veps, \cG, \bx)$.
\end{definition}
We refer to $\log\cN_{\infty,\infty}$ as the \emph{sequential metric entropy}. Note that in the binary case, for learning unit $\ls_2$ norm linear functions in $d$ dimensions, the pointwise metric entropy is $O(d\log(1/\veps))$, whereas the sequential metric entropy is $O(d\log(1/\veps)\wedge\veps^{-2}\log(d))$, leading to improved rates in high dimension.

With this definition, we can now state our main theorem for full information. 
\insertprethmspacing
\begin{theorem}
  \label{thm:chaining_v3}
  Assume\footnote{Measuring loss in $\ls_1$ may seem restrictive, but it is natural when working with the $1$-sparse importance-weighted losses, and it enables us to cover the output space in $\ls_{\infty}$ norm.} $\sup_{\ls \in \cL} \nrm*{\ls}_{1}\leq{}R$ and $\sup_{s\in\cS}\nrm*{s}_{\infty}\leq{}B$. Fix any constants $\eta\in(0,1]$, $\lambda>0$, and $\beta>\alpha>0$. Then there exists an algorithm with the following deterministic regret guarantee:
  {\small
  \begin{align*}
    \sum_{t=1}^{T}\En_{s_t\sim{}p_t}\tri*{
      s, \ls_t} -\inf_{g\in\cG}\sum_{t=1}^{T}\tri*{g(x_t), \ls_t} &\leq{} 
      \frac{2\eta}{RB}\sum_{t=1}^{T}\En_{s_t\sim{}p_t}\tri*{s_t, \ls_t}^{2} +  \frac{4RB}{\eta}\log\cN_{\infty,\infty}(\beta/2, \cG, T) + 3e^2\alpha\sum_{t=1}^{T}\nrm*{\ls_{t}}_{1}\\
   &~~~~~ +24e\prn*{\frac{\lambda}{4R}\sum_{t=1}^{T}\nrm*{\ls_{t}}_{1}^{2} + \frac{R}{\lambda}}\int_{\alpha}^{\beta}\sqrt{\log\cN_{\infty,\infty}(\veps, \cG, T)}d\veps.
  \end{align*}}
  \end{theorem}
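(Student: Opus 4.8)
The plan is to combine the non-constructive minimax machinery for sequential prediction of \citet{rakhlin2015sequential,RakSriTew14jmlr} with the adaptive relaxation framework of \citet{foster2015adaptive}, and then to run an ``adaptive'' chaining argument on the $L_\infty/\ls_\infty$ sequential cover of $\cG$. By the framework of \citet{foster2015adaptive} it is enough to produce an \emph{admissible relaxation}: data-dependent functionals $\mathrm{Rel}_t(x_{1:t},\ls_{1:t})$ and per-round penalties $\Psi_t(\ls_t)$ obeying, at every round, the one-step inequality
\[
\inf_{p_t\in\Delta(\cS)}\ \sup_{\ls_t\in\cL}\ \prn*{\En_{s_t\sim p_t}\tri*{s_t,\ls_t} - \tfrac{2\eta}{RB}\En_{s_t\sim p_t}\tri*{s_t,\ls_t}^2 - \Psi_t(\ls_t) + \mathrm{Rel}_t}\ \le\ \mathrm{Rel}_{t-1},
\]
with the terminal identification $\mathrm{Rel}_T = -\inf_{g\in\cG}\sum_{t}\tri{g(x_t),\ls_t}$. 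Telescoping this inequality reproduces the stated deterministic bound: the self-penalty $\tfrac{2\eta}{RB}\sum_t\En\tri{s_t,\ls_t}^2$ shows up on the right-hand side of the theorem precisely because the corresponding term is retained on the left above, while $\mathrm{Rel}_0 + \sum_t\Psi_t(\ls_t)$ will be chosen equal to the remaining terms. The induced algorithm plays $p_t$ equal to the pushforward under $g\mapsto g(x_t)$ of the distribution over $\cG$ attaining the infimum, hence supported on $\{g(x_t):g\in\cG\}\subseteq\cS$, respecting the protocol.

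To verify admissibility I would first \emph{symmetrize}: a minimax swap (Sion's theorem, using linearity of $\tri{s_t,\ls_t}$ in $s_t$, convexity in $p_t$, and compactness of $\cL$) followed by the introduction of a tangent Rademacher sequence reduces the recursion to controlling a \emph{self-offset sequential Rademacher complexity},
\[
\sup_{\bx}\ \En_{\eps}\ \sup_{g\in\cG}\ \sum_{t=1}^{T}\prn*{\eps_t\tri*{g(\bx_t(\eps)),\ls_t} - \tfrac{2\eta}{RB}\tri*{g(\bx_t(\eps)),\ls_t}^2},
\]
plus the additive penalties, the supremum being over $\cX$-valued trees $\bx$.

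The heart of the argument is then a chaining of this quantity through a dyadic family of $L_\infty/\ls_\infty$ sequential covers at scales $\veps_0=\beta\ge\veps_1=\beta/2\ge\dots\ge\veps_N$ with $\veps_N\le\alpha$. Writing each $g$ as its coarsest cover node $v^{(1)}[g]\in V_1$, plus a sum of increment trees (the $j$-th of $\ls_\infty$-norm at most $3\veps_{j+1}$ on every path, drawn from a family of at most $\inftycover(\veps_{j+1},\cG,T)^2$ trees), plus a residual of $\ls_\infty$-norm at most $\alpha$, and splitting the offset square correspondingly, I would bound three pieces. \emph{(i) Coarse scale:} $\En_{\eps}\sup_{v\in V_1}\bigl(\sum_t\eps_t\tri{v_t(\eps),\ls_t}-c\sum_t\tri{v_t(\eps),\ls_t}^2\bigr)$ with $c\asymp\eta/(RB)$ is controlled by the elementary offset estimate $\lesssim\log\abs{V_1}/c$, giving the $\tfrac{4RB}{\eta}\log\inftycover(\beta/2,\cG,T)$ term --- here the offset is essential, converting the sup-norm-$B$ diameter from a ruinous $\sqrt{\,\cdot\,}$ contribution into a $\log$-cardinality one. \emph{(ii) Increments:} each sum $\sum_t\eps_t\tri{w_t(\eps),\ls_t}$ is a mean-zero martingale of conditional variance at most $9\veps_{j+1}^2\sum_t\nrm{\ls_t}_1^2$, so a single exponential tilt by $\lambda$ (shared across all scales) together with a union bound over the $\inftycover(\veps_{j+1},\cG,T)^2$ trees gives a contribution of order $\bigl(\tfrac{\lambda}{R}\sum_t\nrm{\ls_t}_1^2+\tfrac{R}{\lambda}\bigr)\veps_{j+1}\sqrt{\log\inftycover(\veps_{j+1},\cG,T)}$, and summing the geometric series yields the Dudley-type term $24e\bigl(\tfrac{\lambda}{4R}\sum_t\nrm{\ls_t}_1^2+\tfrac{R}{\lambda}\bigr)\int_{\alpha}^{\beta}\sqrt{\log\inftycover(\veps,\cG,T)}\,d\veps$. \emph{(iii) Residual:} bounded deterministically by $\alpha\sum_t\nrm{\ls_t}_1$, which after the exponential-tilt bookkeeping appears as $3e^2\alpha\sum_t\nrm{\ls_t}_1$; the cross-terms produced by splitting the offset square are of order $\tfrac{\eta\beta^2}{RB}\sum_t\nrm{\ls_t}_1^2$ and are absorbed into the penalties. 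Assembling (i)--(iii) exhibits an admissible relaxation with $\mathrm{Rel}_0+\sum_t\Psi_t(\ls_t)$ equal to the theorem's right-hand side minus the self-penalty, completing the argument.

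The main obstacle is making the chaining genuinely adaptive: one must pin the offset square to the coarse scale only, keep the induced cross-scale cross-terms controlled by the same $\nrm{\ls_t}_1,\nrm{\ls_t}_1^2$ quantities, and use a \emph{single} tilt $\lambda$ across all scales so that the summed chaining bound is valid as a per-round penalty rather than merely at the horizon --- i.e., reconciling the exponential-weights layer (which forces a fixed, data-independent learning rate on the coarse cover) with the martingale-concentration layer (which wants per-scale sub-Gaussian control) inside one admissible functional. A secondary difficulty is the $\ls_1/\ls_\infty$ bookkeeping: covering the outputs in $\ls_\infty$ while the losses are measured in $\ls_1$ is exactly what keeps every error term expressed through $\nrm{\ls_t}_1$ and $\nrm{\ls_t}_1^2$ with no explicit dependence on $K$.
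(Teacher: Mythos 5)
Your proposal is correct and follows essentially the same route as the paper: the adaptive minimax/relaxation framework of Foster et al.\ with the variance offset retained as a self-penalty, a minimax swap plus tangent-sequence symmetrization to an offset sequential Rademacher process, a coarse cover at scale $\beta/2$ handled by an offset finite-class lemma (yielding the $\frac{4RB}{\eta}\log\cN$ term), and chaining of the increments with a single tilt $\lambda$ to produce the Dudley integral and the $\alpha\sum_t\nrm{\ls_t}_1$ residual. The only cosmetic difference is that the paper transfers the offset square to the coarse cover via a clipping construction ensuring coordinate-wise domination (avoiding cross-terms entirely) and proves the $R=B=1$ case first before rescaling, whereas you propose to bound the cross-terms directly; both work.
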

\insertpostthmspacing
  Observe that the bound involves the variance/local norms
  $\En_{s_t\sim{}p_t}\tri*{s_t, \ls_t}^{2}$, and has a very mild
  explicit dependence on the loss range $R$; this can be verified by
  optimizing over $\eta$ and $\lambda$.  This adaptivity to the loss
  range is crucial for our bandit reduction. Further observe that the
  bound contains a Dudley-type entropy integral, which is essential
  for obtaining sharp rates for complex nonparametric classes.

\insertpostthmspacing
\paragraph{Bandit reduction and variance control.}
To lift \pref{thm:chaining_v3} to contextual bandits we use the following reduction: 
First, initialize the full information algorithm from \pref{thm:chaining_v3} with $\cG=\ramp\circ \cF$. For each round $t$, receive $x_t$, and define $P_t(a) \defeq \En_{s_t\sim{}p_t}\frac{s_t(a)}{\sum_{a'\in\brk*{K}}s_t(a')}$ where $p_t$ is the full information algorithm's distribution. Then sample $a_t \sim P_t^\mu$, observe $\ls_t(a_t)$, and pass the importance-weighted loss $\hat{\ls}_{t}(a)$ back to the algorithm. 
For the hinge loss we use the same strategy, but with $\cG=\hinge\circ\cF$.

The following lemma shows that this strategy leads to sufficiently
small variance in the loss estimates. The definition of the action
distribution $P^\mu_t(a)$ in terms of the real-valued predictions is
crucial here.
\insertprethmspacing
\begin{lemma}
  \label{lem:ips_variance}
  Define a filtration $\cJ_{t}=\sigma((x_1,\ls_1,a_1),\ldots, (x_{t-1},\ls_{t-1},a_{t-1}), x_t, \ls_t)$. Then for any $\mu\in [0,1/K]$ the importance weighting strategy above guarantees
  \[
    \En_{a_t\sim P_t^\mu}\brk*{\En_{s_t\sim{}p_t}\tri*{s_t,\hat{\ls}_{t}}^{2}\mid{}\cJ_t} \leq{} \left\{
      \begin{array}{ll}
        K,&\quad\text{ for $\Scal\subset\Delta(\Acal)$.}\\
        K^{2},&\quad\text{ for $\Scal = \ramp\circ \Fcal$.}\\
        \prn*{1+\frac{B}{\gamma}}^{2}K^{2},&\quad\text{ for $\Scal = \hinge\circ\Fcal$.}
      \end{array}
    \right.
  \]
\end{lemma}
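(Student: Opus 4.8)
The plan is to unfold the one-sparse importance-weighted estimate, push the two conditionally independent expectations through, and then reduce the whole quantity to a single scalar inequality, exploiting crucially that the sampling probabilities $P_t$ are built from the \emph{normalized} predictions $s_t(a)/\sum_{a'}s_t(a')$ rather than from $s_t(a)$ directly. First, since $\hat{\ls}_t$ is supported only on the sampled action $a_t$, we have $\tri*{s_t,\hat{\ls}_t}=s_t(a_t)\ls_t(a_t)/P_t^\mu(a_t)$. Conditioned on $\cJ_t$ the full-information distribution $p_t$ (hence $P_t$ and $P_t^\mu$) is determined, so $s_t\sim p_t$ and $a_t\sim P_t^\mu$ are drawn independently; taking $\En_{s_t}$ first, then $\En_{a_t}$, and using $\ls_t(a)^2\le 1$, the left-hand side reduces to $\sum_{a\in\Acal}\En_{s_t\sim p_t}\brk*{s_t(a)^2}/P_t^\mu(a)$.

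The main step is then to bound $\En_{s_t}\brk*{s_t(a)^2}$ by a small constant times the selection probability $P_t(a)$, which I would do with two elementary pointwise comparisons. Let $M_{\Scal}\defeq\sup_{s\in\Scal}\nrm*{s}_\infty$ and $N_{\Scal}\defeq\sup_{s\in\Scal}\sum_{a'}s(a')$. Since $s_t\in\Scal$, pointwise $s_t(a)^2\le M_{\Scal}\,s_t(a)$; and writing $s_t(a)=\prn*{\sum_{a'}s_t(a')}\cdot\frac{s_t(a)}{\sum_{a'}s_t(a')}\le N_{\Scal}\cdot\frac{s_t(a)}{\sum_{a'}s_t(a')}$ (the denominator is strictly positive in all three cases, since $\sum_a f(x)_a=0$ forces some coordinate of $f(x)$ to be nonnegative) and taking expectations gives $\En_{s_t}\brk*{s_t(a)}\le N_{\Scal}\,P_t(a)$. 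Hence
\[
  \En_{a_t\sim P_t^\mu}\brk*{\En_{s_t\sim p_t}\tri*{s_t,\hat{\ls}_t}^2\mid\cJ_t}\;\le\;M_{\Scal}N_{\Scal}\sum_{a\in\Acal}\frac{P_t(a)}{P_t^\mu(a)}.
\]
Reading off $(M_{\Scal},N_{\Scal})$ in the three cases---$(1,1)$ for $\Scal\subset\Delta(\Acal)$; $(1,K)$ for $\Scal=\ramp\circ\Fcal$, since $\ramp$ takes values in $[0,1]$; and $(1+B/\gamma,\,K(1+B/\gamma))$ for $\Scal=\hinge\circ\Fcal$, since $\hinge$ maps $[-B,B]$ into $[0,1+B/\gamma]$---the prefactor $M_{\Scal}N_{\Scal}$ is $1$, $K$, and $(1+B/\gamma)^2K$ respectively. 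This is where sampling from the normalized surrogate values is essential: without the normalization, $\En_{s_t}[s_t(a)]$ need not be comparable to $P_t(a)$ at all.

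It then remains to show $\sum_{a\in\Acal}P_t(a)/P_t^\mu(a)\le K$, uniformly over the smoothing level $\mu\in[0,1/K]$. Writing $P_t^\mu(a)=(1-K\mu)P_t(a)+\mu$, the scalar map $q\mapsto\frac{q}{(1-K\mu)q+\mu}$ is concave on $[0,1]$ for $\mu\in(0,1/K]$ (the case $\mu=0$ being immediate), so Jensen's inequality at the uniform point $q=1/K$ gives $\sum_a P_t(a)/P_t^\mu(a)\le K\cdot\frac{1/K}{(1-K\mu)/K+\mu}=K$. Combining the three bounds yields $K$, $K^2$, and $(1+B/\gamma)^2K^2$. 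I expect the middle step to be the crux: the two normalizations $\nrm*{s}_\infty$ and $\sum_{a'}s(a')$ must be tracked separately to land the right constants, and it is where the specific action-selection rule is used; the unfolding and the concavity estimate are otherwise routine, with the pleasant feature that $\sum_a P_t(a)/P_t^\mu(a)\le K$ regardless of $\mu$, so the smoothing costs nothing in the variance.
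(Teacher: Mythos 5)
Your proposal is correct and follows essentially the same route as the paper's proof: expand the one-sparse importance-weighted estimate to $\sum_a \En[s_t(a)^2]/P_t^\mu(a)$, apply two pointwise (H\"older-type) comparisons $s_t(a)^2 \le M_{\cS}\, s_t(a)$ and $s_t(a) \le N_{\cS}\cdot s_t(a)/\sum_{a'}s_t(a')$ to relate the numerator to the selection probability $P_t(a)$, and finish with the concavity/Jensen argument showing $\sum_a P_t(a)/P_t^\mu(a) \le K$. The constants $M_{\cS}N_{\cS}\cdot K$ you read off in each case match the paper's bounds exactly.
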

\insertpostthmspacing

\pref{thm:chaining_v3} and \pref{lem:ips_variance} together imply our central theorem: a chaining-based margin bound for contextual bandits, generalizing classical results in statistical learning (cf.~\citep{boucheron2005theory}).

\insertprethmspacing
\begin{theorem}[Contextual bandit margin bound]
\label{thm:chaining_ramp}

For any fixed constants $\beta>\alpha>0$, smoothing parameter $\mu\in(0,1)$ and margin loss parameter $\gamma>0$ there exists an adversarial contextual bandit strategy with expected regret against the $\gamma$-margin benchmark bounded as
{\small
  \begin{align}
    \label{eq:chaining_ramp}
    \En\brk*{\sum_{t=1}^{T}\ls_t(a_t)} 
    \leq{} &\inf_{f\in\cF}\En\brk*{L_T^\gamma(f)} + 
        4\sqrt{2K^{2}T\log\cN_{\infty,\infty}(\beta/2, \cF, T)}  + \mu{}KT \\
    &~~~~~~~+  \frac{8}{\mu}\log\cN_{\infty,\infty}(\beta/2, \cF, T) + \frac{1}{\gamma}\prn*{3e^2\alpha{}KT + 24e\sqrt{\frac{KT}{\mu}}\int_{\alpha}^{\beta}\sqrt{\log\cN_{\infty,\infty}(\veps, \cF, T)}d\veps}.\notag
  \end{align}}
\end{theorem}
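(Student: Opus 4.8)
The plan is to run the full-information algorithm of \pref{thm:chaining_v3} on $\cG=\ramp\circ\cF$, feed it the importance-weighted losses $\hat\ell_t$ produced by the reduction above (play $a_t\sim P_t^\mu$ with $P_t(a)=\En_{s_t\sim p_t}\tfrac{s_t(a)}{\sum_{a'}s_t(a')}$), and then translate its surrogate-regret guarantee into a margin bound using \pref{lem:calibration}. The local-norm term that \pref{thm:chaining_v3} produces is exactly what \pref{lem:ips_variance} controls, and the parameters $B,R$ together with the covering numbers of $\cG$ will be re-expressed through $\cF$ using that $\ramp$ has range $[0,1]$ and is $1/\gamma$-Lipschitz coordinate-wise.

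\emph{From the played action to the surrogate loss of $p_t$.} Let $\cJ_t$ be the filtration of \pref{lem:ips_variance}. Every $s_t$ in the support of $p_t$ is $\ramp(f(x_t))$ for some $f\in\cF$, and since $f(x_t)\in\RRz^K$ forces $\sum_{a'}\ramp(f(x_t))_{a'}\ge1$, normalizing $s_t$ yields exactly $\piramp(f(x_t))$; hence $\tri*{P_t,\ell_t}=\En_{s_t\sim p_t}\tri*{\piramp(f(x_t)),\ell_t}\le\En_{s_t\sim p_t}\tri*{s_t,\ell_t}$ by \pref{lem:calibration}, and smoothing costs at most $\mu K$ per round, so $\En[\ell_t(a_t)\mid\cJ_t]\le\En_{s_t\sim p_t}\tri*{s_t,\ell_t}+\mu K$. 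Since $p_t,P_t^\mu$ are $\cJ_t$-measurable and $\En[\hat\ell_t\mid\cJ_t]=\ell_t$, taking expectations lets me replace $\En_{s_t\sim p_t}\tri*{s_t,\ell_t}$ by $\En_{s_t\sim p_t}\tri*{s_t,\hat\ell_t}$ and, for each fixed $f$, $\tri*{\ramp(f(x_t)),\hat\ell_t}$ by $\tri*{\ramp(f(x_t)),\ell_t}\le\sum_a\ell_t(a)\one\{f(x_t)_a\ge-\gamma\}$ (again \pref{lem:calibration}). Summing over $t$, invoking \pref{thm:chaining_v3} on the (arbitrary, adaptively chosen) sequence $\hat\ell_1,\dots,\hat\ell_T$, and using $\En[\inf_g\cdot]\le\inf_g\En[\cdot]$ for the comparator gives
\[
\En\Big[\sum\nolimits_t\ell_t(a_t)\Big]\le\inf_{f\in\cF}\En[L_T^\gamma(f)]+\mu KT+\En\big[\mathrm{RHS}(\hat\ell;\eta,\lambda,\beta',\alpha')\big],
\]
where $\mathrm{RHS}$ is the right-hand side of \pref{thm:chaining_v3} with parameters still free.

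\emph{Bounding the right-hand side.} I would then use $B=\sup_{s\in\cS}\nrm*{s}_\infty\le1$ and $R=\sup_t\nrm*{\hat\ell_t}_1\le1/\mu$ (deterministically, since $P_t^\mu(a)\ge\mu$); $\En[\sum_t\En_{s_t\sim p_t}\tri*{s_t,\hat\ell_t}^2]\le K^2T$ by the $\ramp\circ\cF$ case of \pref{lem:ips_variance}; $\En[\sum_t\nrm*{\hat\ell_t}_1]\le KT$ and $\En[\sum_t\nrm*{\hat\ell_t}_1^2]\le KT/\mu$ by a direct conditional computation; and $\cN_{\infty,\infty}(\veps,\ramp\circ\cF,T)\le\cN_{\infty,\infty}(\gamma\veps,\cF,T)$, obtained by applying $\ramp$ coordinate-wise to a sequential cover of $\cF$ at scale $\gamma\veps$. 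Choosing $\beta'=\beta/\gamma$ and $\alpha'=\alpha/\gamma$ turns the entropy term into $\tfrac{4}{\mu\eta}\log\cN_{\infty,\infty}(\beta/2,\cF,T)$, turns $3e^2\alpha'\sum_t\nrm*{\hat\ell_t}_1$ into at most $\tfrac1\gamma\cdot 3e^2\alpha KT$ in expectation, and (after the substitution $u=\gamma\veps$) turns the Dudley integral into $\tfrac1\gamma\int_\alpha^\beta\sqrt{\log\cN_{\infty,\infty}(\veps,\cF,T)}\,d\veps$. Optimizing $\lambda$ in $\tfrac{\lambda}{4}KT+\tfrac1{\mu\lambda}$ produces $\sqrt{KT/\mu}$, and taking $\eta=\min\{1,\sqrt{2\log\cN_{\infty,\infty}(\beta/2,\cF,T)}/(\mu K\sqrt T)\}$ in $2\eta\mu K^2T+\tfrac{4}{\mu\eta}\log\cN_{\infty,\infty}(\beta/2,\cF,T)$ produces $4\sqrt{2K^2T\log\cN_{\infty,\infty}(\beta/2,\cF,T)}+\tfrac8\mu\log\cN_{\infty,\infty}(\beta/2,\cF,T)$. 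Collecting terms yields exactly \pref{eq:chaining_ramp}.

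\emph{Main obstacle.} The delicate step is the local-norm/variance term $\En_{s_t\sim p_t}\tri*{s_t,\hat\ell_t}^2$: because the surrogate loss of a regressor scales with its \emph{magnitude}, not just the action it selects, a naive importance-weighting estimate would inflate this to order $B^2/\mu^2$ rather than $K^2$. The choice of $P_t$ as the expected normalized prediction is precisely what makes the conditional second moment $O(K^2)$, so the work is (i) propagating this control through the random, adaptively generated loss sequence while still invoking the \emph{deterministic} guarantee of \pref{thm:chaining_v3}, and (ii) bookkeeping the $\gamma$-rescaling of the scale parameters and the entropy integral consistently across the four terms of that guarantee; both are routine once the structure above is in place.
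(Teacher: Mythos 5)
Your proposal is correct and follows essentially the same route as the paper: the same reduction feeding importance-weighted losses to the full-information algorithm of \pref{thm:chaining_v3} instantiated with $\cG=\ramp\circ\cF$, the same use of \pref{lem:calibration} on both the learner's side and the comparator's side, the same variance/range bookkeeping ($B\le 1$, $R\le 1/\mu$, the three conditional moment bounds from \pref{lem:ips_variance} and the $1$-sparsity of $\hat\ell_t$), the same $1/\gamma$-Lipschitz translation of covering numbers from $\ramp\circ\cF$ to $\cF$ with the accompanying change of variables, and the same optimization of $\eta$ and $\lambda$. The numerical constants and parameter choices you give also match the paper's.
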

\insertpostthmspacing
We derive an analogous bound for the hinge loss in \pref{app:minimax}. The hinge loss bound differs only through stronger dependence on scale parameters.

Before showing the implications of \pref{thm:chaining_ramp} for specific classes $\cF$ we state a coarse upper bound in terms of the growth rate for the sequential metric entropy.
\insertprethmspacing
\begin{proposition}
  \label{prop:entropy_growth}
  Suppose that $\cF$ has sequential metric entropy growth $\log\cN_{\infty,\infty}(\veps, \cF, T)\propto\veps^{-p}$ for some $p>0$ (nonparametric case), or that $\log\cN_{\infty,\infty}(\veps, \cF, T)\propto{}d\log(1/\veps)$ (parametric case). Then there exists a contextual bandit strategy with the following regret guarantee:
  \begin{equation}
    \label{eq:entropy}
    \En\brk*{\sum_{t=1}^{T}\ls_t(a_t)} 
    \leq{} \inf_{f\in\cF}\En\brk*{L_T^\gamma(f)} + \left\{ 
      \begin{array}{ll}
        O(K\sqrt{dT\log(KT/\gamma)}),&~~\text{parametric case.}\\
        \Otilde((KT)^{\frac{p+2}{p+4}}\gamma^{-\frac{2p}{p+4}}),&~~\text{nonparametric w/~~} p\leq{} 2.\\
        \Otilde((KT)^{\frac{p}{p+1}}\gamma^{-\frac{p}{p+1}}),&~~ \text{nonparametric w/~~} p\geq{}2.
              \end{array}
      \right.
  \end{equation}
\end{proposition}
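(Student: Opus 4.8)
The plan is to instantiate \pref{thm:chaining_ramp} and to optimize its free parameters---the smoothing level $\mu$ and the chaining endpoints $0<\alpha<\beta$---separately in each of the three regimes, after substituting the assumed growth rate of $\log\cN_{\infty,\infty}(\cdot,\cF,T)$. Apart from the benchmark term $\inf_{f\in\cF}\En[L_T^\gamma(f)]$, the bound of \pref{thm:chaining_ramp} is a sum of five monomial-type quantities: $K\sqrt{T\log\cN_{\infty,\infty}(\beta/2,\cF,T)}$, $\mu KT$, $\log\cN_{\infty,\infty}(\beta/2,\cF,T)/\mu$, $\alpha KT/\gamma$, and $\gamma^{-1}\sqrt{KT/\mu}\int_{\alpha}^{\beta}\sqrt{\log\cN_{\infty,\infty}(\veps,\cF,T)}\,d\veps$. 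Once the growth rate is plugged in, the argument is purely a matter of balancing these; the only subtlety is that the optimal choice of $(\alpha,\beta,\mu)$ changes qualitatively according to whether the entropy integral converges at $0$, i.e.\ whether $p<2$, $p=2$, or $p>2$.

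\emph{Parametric case.} The integrand $\sqrt{d\log(1/\veps)}$ is integrable at $0$, so chaining is unnecessary. I would take $\alpha=\beta/2$ and $\beta=\Theta(\gamma/(KT))$; then $\alpha KT/\gamma=O(1)$, $\log\cN_{\infty,\infty}(\beta/2,\cF,T)=O(d\log(KT/\gamma))$, and the surviving terms are $O(K\sqrt{dT\log(KT/\gamma)})$, $\mu KT$, and $O(d\log(KT/\gamma)/\mu)$. Taking $\mu=\Theta(\sqrt{d\log(KT/\gamma)/(KT)})$ balances the latter two at $O(\sqrt{dKT\log(KT/\gamma)})$---dominated by the first term---and makes the entropy-integral term $o(1)$, giving regret $O(K\sqrt{dT\log(KT/\gamma)})$. (When this $\mu$ exceeds $1/K$ one has $T\lesssim Kd\log(KT/\gamma)$, so the claimed bound already exceeds the trivial bound $T$ and there is nothing to prove.)

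\emph{Nonparametric case, $p\le 2$.} Here chaining is essential. Since $p<2$ the integral $\int_{0}^{\beta}\veps^{-p/2}\,d\veps\asymp\beta^{1-p/2}$ converges at $0$, so I take $\alpha$ polynomially small in $T$ (this kills the $\alpha KT/\gamma$ term; at $p=2$ exactly it costs a $\log T$ factor, absorbed by $\Otilde$). Writing $E:=\log\cN_{\infty,\infty}(\beta/2,\cF,T)\asymp\beta^{-p}$, so that the entropy integral is $\asymp\beta\sqrt{E}$, the live terms are $K\sqrt{TE}$, $\mu KT$, $E/\mu$, and $\gamma^{-1}\sqrt{KT/\mu}\,\beta\sqrt{E}$. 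Choosing $\mu\asymp\sqrt{E/T}$ balances the first two and dominates the third; after back-substitution one balances the result against the fourth by choosing $\beta\asymp(\gamma\sqrt{K}\,T^{-1/4})^{4/(4+p)}$, which yields the stated $\Otilde((KT)^{\frac{p+2}{p+4}}\gamma^{-\frac{2p}{p+4}})$ rate (the $T$- and $\gamma$-exponents are exactly as claimed, and the $K$-exponent comes out of the same balancing).

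\emph{Nonparametric case, $p\ge 2$.} Now $\int_{\alpha}^{\beta}\veps^{-p/2}\,d\veps\asymp\alpha^{1-p/2}$ diverges as $\alpha\downarrow 0$, so $\alpha$ must stay bounded away from $0$; instead I fix $\beta$ at the constant scale above which the $\veps^{-p}$ growth saturates, so that $\log\cN_{\infty,\infty}(\beta/2,\cF,T)=O(1)$ and the first and third terms collapse to $O(K\sqrt{T})$ and $O(1/\mu)$. It remains to optimize $\alpha$ and $\mu$: balancing $\alpha KT/\gamma$ against $\gamma^{-1}\sqrt{KT/\mu}\,\alpha^{1-p/2}$ gives $\alpha\asymp(KT\mu)^{-1/p}$ and turns that term into $\gamma^{-1}(KT)^{1-1/p}\mu^{-1/p}$; balancing this against $\mu KT$ gives $\mu\asymp\gamma^{-p/(p+1)}(KT)^{-1/(p+1)}$ and a bound $\Otilde((KT)^{\frac{p}{p+1}}\gamma^{-\frac{p}{p+1}})$, the residual $O(K\sqrt{T})$ and $O(1/\mu)$ terms being lower order wherever this rate is nontrivial. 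The main obstacle, such as it is, is purely bookkeeping: carrying the three-parameter optimization through consistently in each phase, with all conceptual content residing in \pref{thm:chaining_ramp}.
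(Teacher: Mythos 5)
Your proposal is correct and follows essentially the same route as the paper's (very terse) proof: both simply substitute the assumed entropy growth into \pref{thm:chaining_ramp} and tune $(\alpha,\beta,\mu)$ per regime, and your parameter choices for the parametric and $p\ge 2$ cases coincide exactly with the paper's ($\alpha=\beta\asymp\gamma/KT$, $\mu\asymp\sqrt{d\log(KT/\gamma)/KT}$; and $\beta=\mathrm{rad}$, $\alpha\asymp(KT\mu)^{-1/p}$, $\mu\asymp(KT)^{-1/(p+1)}\gamma^{-p/(p+1)}$). In the $p\le 2$ case you balance the first term against the entropy integral rather than the $\mu KT$ term as the paper does, which shifts the $K$-power in $\beta$ and $\mu$, but this only affects the $K$-dependence (which neither derivation pins down to the stated exponent and which the paper itself flags as loose); the $T$- and $\gamma$-exponents come out as claimed.
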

\insertpostthmspacing
\pref{prop:entropy_growth} recovers the parametric rate of $\sqrt{dT}$ seen with e.g., \linucb~\citep{chu2011contextual} but is most interesting for complex classes. The rate exhibits a phase change between the ``moderate complexity'' regime of $p\in(0,2]$ and the ``high complexity'' regime of $p\geq{}2$. This is visualized in \pref{fig:exponent}. 

\begin{figure}[t]
\begin{minipage}{0.5\textwidth}
  \begin{center}
  \includegraphics[width=\textwidth]{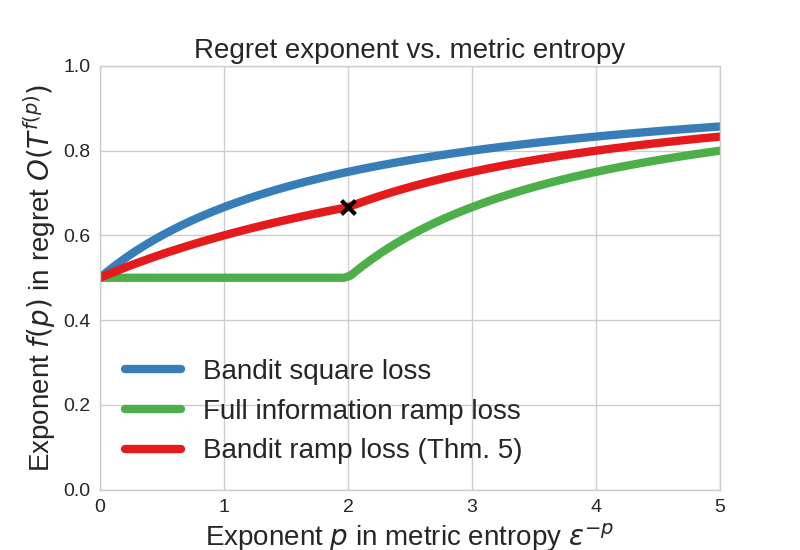}
\end{center}
\end{minipage}
\begin{minipage}{0.5\textwidth}
\caption{Regret bound exponent as a function of (sequential) metric entropy. The cross marks the point $p=2$ where the exponent from \pref{thm:chaining_ramp} changes growth rate. ``Full information'' refers to the optimal rate of $T^{\frac{1}{2}\vee{}(\frac{p-1}{p})}$ for the same setting under full information feedback \citep{RakSriTew14jmlr}. ``Square loss'' refers to the optimal rate of $T^{\frac{p+1}{p+2}}$ for Lipschitz contextual bandits over metric spaces of dimension $p$, which have sequential metric entropy $\veps^{-p}$, under square loss realizability \citep{slivkins2011contextual}.}
\label{fig:exponent}
\end{minipage}
\vspace{-0.5cm}
  \end{figure}
  
\insertprethmspacing
  \begin{remark}
    Under i.i.d. losses and hinge/ramp loss realizability, the standard tools of classification calibration \citep{bartlett2006convexity} can be used to deduce a proper policy regret bound from \pref{eq:chaining_ramp}. However, these realizability assumptions are somewhat non-standard, and moreover if one imposes the stronger assumption of a hard margin it is possible to derive improved rates~\citep{daniely2013price}. See also~\pref{app:comparison}.
  \end{remark}
\insertpostthmspacing
\insertprethmspacing
  \begin{remark}
Classical margin bounds typically hold for all
values of $\gamma$ simultaneously, but \pref{thm:chaining_ramp} requires that $\gamma$
is chosen in advance. Learning the best value of $\gamma$ online
appears challenging.
\end{remark}

\textbf{Rates for specific classes.}~~~~
We now instantiate our results for concrete classes of interest.
\insertprethmspacing
\begin{example}[Finite classes]
\label{ex:finite_cb}
In the finite class case there is an algorithm with
$O\prn*{K\sqrt{T\log\abs*{\cF}}}$ margin regret.  When
$\Pi \subset (\Xcal \to \Acal)$ is a finite policy class, directly 
reducing to \pref{thm:chaining_v3} yields the optimal
$O\prn*{\sqrt{KT\log\abs*{\Pi}}}$ policy regret, hinting at the
optimality of our approach.
\end{example}
\insertpostthmspacing
\insertprethmspacing
\begin{example}[Lipschitz CB]
\label{ex:lipschitz_cb}
  The class of Lipschitz functions over $\brk*{0,1}^{p}$ admits a sequential cover with metric entropy $\Otilde(\veps^{-p})$, so \pref{prop:entropy_growth} implies an $\otil(T^{\frac{p+2}{p+4}\vee\frac{p}{p+1}})$ regret bound.
Since our proof goes through~\pref{lem:calibration}, it also yields a \emph{policy regret} bound against the $\piramp(\cdot)$ class. 
Therefore, this result is directly comparable to the $\otil(T^{\frac{p+1}{p+2}})$ bound of~\citet{cesa2017algorithmic}, applied to the $\piramp$ policy class. Our bound achieves a smaller exponent for all values of $p$ (see \pref{fig:exponent}). 
\end{example}

Learnability in full information online learning is known to be characterized entirely by the \emph{sequential Rademacher complexity} of the hypothesis class~\citep{RakSriTew14jmlr}, and tight bounds on this quantity are known for standard classes including linear predictors, decision trees, and neural networks. The next example, a corollary of \pref{thm:chaining_ramp}, bounds contextual bandit margin regret in terms of sequential Rademacher complexity, which is defined for any scalar-valued function class $\cG\subseteq{}\prn*{\cX\to\bbR}$ as: 
\[
\mathfrak{R}(\cG,T) \triangleq \sup_{\bx}\En_{\eps}\sup_{g\in\cG}\sum_{t=1}^{T}\eps_tg(\bx_{t}(\eps)).
\]
\begin{example}
  \label{ex:rademacher}
  Let $\cF|_a\defeq\crl*{x\mapsto{}f(x)_a\mid{}f\in\cF}$ be the scalar restriction of $\cF$ to output coordinate $a$ and suppose that $\max_{a\in\brk*{K}}\mathfrak{R}(\cF|_{a},T)\geq{}1$ and $B\leq{}1$.\footnote{This restriction serves only to simplify calculations and can be relaxed.} Then there exists an adversarial contextual bandit algorithm with margin regret bound
  $\Otilde\prn*{\max_{a}K\prn*{\mathfrak{R}(\cF|_{a},T)/\gamma}^{2/3}T^{1/3}}$.
\end{example}
\vspace{-0.1cm}
Thus, for margin-based contextual bandits, full information
learnability is equivalent to bandit learnability. Since the optimal
regret in full information is
$\Omega(\max_{a}\mathfrak{R}(\cF|_a,T))$, it further shows that the
price of bandit information is at most
$\Otilde\prn*{\max_{a}K\prn*{T/\mathfrak{R}(\cF|_{a},T)}^{1/3}}$.
Note that while this bound is fairly user-friendly, it yields worse
rates than \pref{prop:entropy_growth} when translated to sequential
metric entropy, except when $p=2$~\citep{rakhlin2010online}.
For comparison,
\citet{RakSriTew14jmlr} obtain $\Otilde(\mathfrak{R}(\Fcal,T)/\gamma)$
margin regret in full information for binary classification. For
partial information, \bistro~\citep{rakhlin2016bistro} has an
$O(\sqrt{K T \mathfrak{R}(\Pi,T)})$ policy regret bound, which
involves the \emph{policy} complexity and a worse $T$ dependence than
our bound, but our bound (in terms of $\cF$) applies only to the
margin regret. A similar discussion applies to Theorem 4.4 of
\citet{lykouris2017small}.


Instantiating \pref{ex:rademacher} with linear classes generalizes the $O(T^{2/3})$ dimension-independent guarantee of \banditron{}~\citep{kakade2008efficient} from Euclidean geometry to arbitrary uniformly convex Banach spaces, essentially the largest linear class for which online learning is possible~\citep{srebro2011universality}. The result also generalizes \banditron{} from multiclass to general contextual bandits and strengthens it from hinge loss to ramp loss. Note that many subsequent works~\citep{abernethy2009efficient, beygelzimerOZ17,foster2018logistic} obtain dimension-dependent $O(\sqrt{dT})$ bounds for bandit multiclass prediction, as we will in the next section, but, none have explored dimension-independent $O(T^{2/3})$-type rates, which are more appropriate for high-dimensional settings.

\insertprethmspacing
\begin{example}\label{ex:linear}
Let $\cX$ be the unit ball in a Banach space $(\mathfrak{B}, \nrm*{\cdot})$, and let $\cF$ be induced by stacking $K-1$
linear predictors\footnote{Only $K-1$ predictors are needed due to the sum-to-zero constraint of $\bbR^{K}_{=0}$.} each in the unit ball of the dual space $(\mathfrak{B}^{\star}, \nrm*{\cdot}_{\star})$. Suppose that $\nrm*{\cdot}$ has martingale type 2 \citep{Pisier75}, which means there exists $\Psi:\mathfrak{B}\to\bbR$ such that $\frac{1}{2}\nrm*{x}^{2}\leq{}\Psi(x)$ and $\Psi$ is $\beta$-smooth w.r.t. $\nrm*{\cdot}$.\footnote{Norms that satisfy this property with dimension-independent or logarithmic constants include $\ls_{p}$ for all $p\geq{}2$, Schatten $S_p$ norms for $p\geq{}2$ (including the spectral norm), and $(2,p)$ group norms for $p\geq{}2$ \citep{kakade2009complexity,kakade2012regularization}.} Then there exists a contextual bandit strategy with margin regret $O(K(T/\gamma)^{2/3})$. 
\end{example}

Beyond linear classes, we also obtain $\Otilde(K(T/\gamma)^{2/3})$
margin regret when each $\cF_a$ is a class of neural networks with
weights in each layer bounded in the $(1,\infty)$ group norm, or when each $\cF_a$ is a
class of bounded depth decision trees on finitely many decision
functions. These results follow by appealing to the existing sequential
Rademacher complexity bounds derived in~\citet{RakSriTew14jmlr}.


As our last example, we consider $\ls_{p}$ spaces for $p<2$. These spaces fail to satisfy martingale type 2 in a dimension-independent fashion, but they do satisfy martingale type $p$ without dimension dependence, and so have sequential metric entropy of order $\veps^{-\frac{p}{p-1}}$ \citep{rakhlin2015equivalence}. Moreover, in $\bbR^{d}$ the $\ls_p$ spaces admit a pointwise cover with metric entropy $O(d\log(1/\veps))$, leading to the following dichotomy.
\insertprethmspacing
\begin{example}
Consider the setting of \pref{ex:linear}, with $(\mathfrak{B},\nrm*{\cdot})=(\bbR^{d}, \nrm*{\cdot}_{p})$ for $p\leq{}2$. Then there exists a contextual bandit strategy with margin regret $\Otilde(K(T/\gamma)^{\frac{p}{2p-1}}\wedge{}K\sqrt{dT\log(KT/\gamma)})$.
\end{example}
\insertpostthmspacing


\section{Efficient algorithms}
\label{sec:algorithms}

We derive two new algorithms for contextual bandits
using the hinge loss $\hinge$.  The first algorithm, \lmc, focuses on
the parametric setting; it is based on a continuous version of
\expweights using a log-concave sampler.
The second, \ftl, is simply Follow-The-Leader
with uniform smoothing. 
\ftl{} applies to the stochastic setting with classes that have ``high complexity'' in the sense of \pref{prop:entropy_growth}.

\subsection{Hinge-LMC}
\label{ssec:hinge_lmc}
For this section, we identify $\Fcal$ with a compact convex set
$\Theta\subset \RR^d$, using the notation $f(x;\theta) \in \RRz^K$ to
describe the parametrized function. We assume that
$\hinge(f(x;\theta)_a)$ is convex in $\theta$ for each $(x,a)$ pair,
$\sup_{x,\theta}\|f(x;\theta)\|_{\infty}\leq B$, $f(x;\cdot)_a$ is
$L$-Lipschitz in $\theta$ with respect to the $\ell_{2}$
norm, and that $\Theta$ contains the centered Euclidean ball of radius
$1$ and is contained within a Euclidean ball of radius $R$. These
assumptions are satisfied when $\cF$ is a linear class, under
appropriate boundedness conditions. 

\begin{figure*}
\begin{minipage}[t]{0.53\linewidth}
\begin{algorithm}[H]
\begin{algorithmic}
\State Input: Class $\Theta$, learning rate $\eta$, margin parameter $\gamma$. 
  \State Define $w_0(\theta) \defeq 1$ for all $\theta \in \Theta$.
  \For{$t = 1,\ldots,T$}
  \State Receive $x_t$, set $\theta_{t} \gets $ LMC($\eta w_{t-1}$).
  \State Set $p_t(\cdot;\theta_t) \propto \hinge(f(x_t;\theta_t))$.
  \State Set $p^\mu_t(\cdot;\theta_t) \defeq (1-K\mu)p_t + \mu$.
  \State Play $a_t \sim p_t^{\mu}(\cdot;\theta_t)$, observe $\ell_t(a_t)$. 
  \State {\textcolor{blue}{\small\texttt{// Geometric resampling.}}}
  \For{$m = 1,\ldots,M$}
  \State $\tilde{\theta}_t \gets $ LMC($\eta w_{t-1}$).
  \State Sample $\tilde{a}_t \sim p^\mu_t(\cdot;\tilde{\theta}_t)$, if $\tilde{a}_t = a_t$, break.
  \EndFor
  \State Set $m_{t}=m$, and 
$\tilde{\ell}_t(a) \defeq \ell_t(a_t)\cdot{}m_t\one\{a_t = a\}$.
  \State Update $w_{t}(\theta) \gets w_{t-1}(\theta) + \langle \tilde{\ell}_t, \hinge(f(x_t;\theta))\rangle$.
  \EndFor
\end{algorithmic}
\caption{\lmc}
\label{alg:hinge_lmc}
\end{algorithm}
\end{minipage}\hspace{0.005\textwidth}\begin{minipage}[t]{0.465\linewidth}
\begin{algorithm}[H]
\begin{algorithmic}
    \State \noindent Input: $F(\cdot)$, parameters
    $m,u,\lambda,N,\alpha$.
    \State {\textcolor{blue}{\small\texttt{// Parameter choices are in \pref{app:lmc}.}}}
    \State Set $\tilde{\theta}_0 \gets 0 \in \RR^d$
    \For{$k = 1,\ldots,N$}
    \State Draw $z_1,\ldots,z_m \iidsim \Ncal(0, u^2 I_{d})$. Define
    \begin{align*}
      \tilde{F}_k(\theta) \defeq \tfrac{1}{m}\textstyle\sum\nolimits_{i=1}^m F(\theta+z_i) + \frac{\lambda}{2} \|\theta\|_2^2
    \end{align*}
    \State Draw $\xi_k \sim \Ncal(0,I_d)$ and update
    \begin{align*}
      \tilde{\theta}_{k} \gets \Pcal_{\Theta}\left(\tilde{\theta}_{k-1} - \frac{\alpha}{2} \nabla \tilde{F}_k(\tilde{\theta}_{k-1}) + \sqrt{\alpha}\xi_k\right).
    \end{align*}
    \EndFor
    \State Return $\tilde{\theta}_N$. 
\end{algorithmic}
\caption{Langevin Monte Carlo (LMC)}
\label{alg:lmc}
\end{algorithm}
\end{minipage}
\end{figure*}

The pseudocode for \lmc\xspace is displayed in
\pref{alg:hinge_lmc}, and all parameters settings are given
in~\pref{app:lmc}. The algorithm is a continuous variant of
exponential weights~\citep{auer2002nonstochastic}, where 
at round $t$, we define the exponential weights
distribution via its density (w.r.t. the Lebesgue measure over
$\Theta$):
\begin{align*}
P_t(\theta) \propto \exp(-\eta w_{t-1}(\theta)), \qquad w_{t-1}(\theta) \defeq \sum_{s=1}^{t-1}\langle \tilde{\ell}_s, \hinge(f(x_s;\theta))\rangle,
\end{align*}

\noindent where $\eta$ is a learning rate and $\tilde{\ell}_s$ is a loss vector
estimate. At a high level, at each iteration the algorithm 
samples $\theta_t \sim P_t$, then samples the action $a_t$ from the
induced policy distribution $p_t(\cdot;\theta) =
\pihinge(f(x_t;\theta_t))$, appropriately smoothed.
The algorithm plays $a_t$
and constructs a loss estimate $\tilde{\ell}_t \defeq
m_t\cdot\ell_t(a)\one\{a = a_t\}$, where $m_t$ is an approximate
importance weight computed by repeatedly sampling from $P_t$. This
vector $\tilde{\ell}_t$ is passed to exponential weights
to define the distribution at the next round. To
sample from $P_t$ we use Projected Langevin Monte Carlo
(LMC), displayed in \pref{alg:lmc}.

The algorithm has many important subtleties.  Apart from passing to
the hinge surrogate loss to obtain a tractable log-concave sampling
problem, by using the induced policy distribution $\pihinge(\cdot)$,
we are also able to control the local norm term in the exponential
weights regret bound.\footnote{This seems specialized to surrogates
that can be expressed as an inner product between the loss vector and
(a transformation of) the prediction, so it does not apply to standard
loss functions in bandit multiclass prediction.} Then, the analysis
for Projected LMC~\cite{bubeck2015sampling} requires a smooth
potential function, which we obtain by convolving with the gaussian
density, also known as randomized
smoothing~\citep{duchi2012randomized}.  We also use $\ls_2$
regularization for strong convexity and to overcome sampling errors
introduced by randomized smoothing.  Finally,
we use the
geometric resampling technique~\citep{neu2013efficient} to approximate
the importance weight by repeated sampling. 


Here, we state the main guarantee and its consequences. A more
complete theorem statement, with exact parameter specifications and the
precise running time is provided in~\pref{app:lmc} as \pref{thm:proj_lmc}.
\insertprethmspacing
\begin{theorem}[Informal]
  \label{thm:hinge_lmc}
  Under the assumptions of \pref{ssec:hinge_lmc}, \lmc{} with appropriate parameter settings runs in time $\poly(T,d,B,K,\tfrac{1}{\gamma},R,L)$ and guarantees
  \begin{align*}
    \EE \sum_{t=1}^T \ell_t(a_t) \leq \inf_{\theta \in \Theta} \frac{1}{K}\EE \sum_{t=1}^T\langle \ell_t, \hinge(f(x_t;\theta))\rangle + \otil\left(\frac{B}{\gamma}\sqrt{dT}\right).
  \end{align*}
  
\end{theorem}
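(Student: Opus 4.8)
The plan is to analyze \lmc\ as continuous exponential weights over $\Theta$ run on the importance-weighted surrogate losses $g_t(\theta)\defeq\langle\tilde{\ell}_t,\hinge(f(x_t;\theta))\rangle$, establish a local-norm regret bound for this idealized process, and then control the error from the three computational shortcuts (truncated geometric resampling, randomized smoothing, and finitely many Langevin steps). I would first treat the idealization in which $\theta_t$ is drawn \emph{exactly} from $P_t(\theta)\propto\exp(-\eta w_{t-1}(\theta))$ and the weight $m_t$ is the exact $1/q_t(a_t)$, where $q_t\defeq\En_{\theta_t\sim{}P_t}p_t^{\mu}(\cdot;\theta_t)$ is the marginal action law. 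Since $\hinge(f(x;\cdot)_a)$ is convex and $\tilde{\ell}_t\geq0$, the potential $w_{t-1}$ is convex, so $P_t$ is log-concave and $g_t\geq0$; the standard exponential-weights telescoping (via $e^{-x}\leq1-x+x^{2}$ for $x\geq0$) then gives, simultaneously for every distribution $\nu$ on $\Theta$,
\[
\sum_{t=1}^{T}\En_{\theta\sim{}P_t}g_t(\theta)-\sum_{t=1}^{T}\En_{\theta\sim\nu}g_t(\theta)\;\leq\;\frac{\KL(\nu\dmid\mathrm{Unif}(\Theta))}{\eta}+\eta\sum_{t=1}^{T}\En_{\theta\sim{}P_t}g_t(\theta)^{2}.
\]
Taking $\nu$ uniform on the ball $B((1-\rho)\theta^{\star},\rho)$ around the shrunk minimizer $\theta^{\star}$ of the true surrogate loss, convexity of $\Theta$ together with $B(0,1)\subseteq\Theta\subseteq{}B(0,R)$ forces this ball into $\Theta$, so $\KL(\nu\dmid\mathrm{Unif}(\Theta))\leq{}d\log(R/\rho)$; and since $\theta\mapsto\langle\ell_t,\hinge(f(x_t;\theta))\rangle$ is $(KL/\gamma)$-Lipschitz, replacing $\theta^{\star}$ by a point of this ball costs at most $(KL/\gamma)(R+1)\rho$ per round, which is negligible for $\rho$ inverse-polynomial in $T$ while still keeping $d\log(R/\rho)=\Otilde(d)$.

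To pass to true regret I would, after taking expectations and using unbiasedness $\En[\tilde{\ell}_t\mid\cJ_t]=\ell_t$ (exact resampling), apply \pref{lem:calibration}: $\En[\ell_t(a_t)\mid\cJ_t,\theta_t]=\langle p_t^{\mu}(\cdot;\theta_t),\ell_t\rangle\leq\langle\pihinge(f(x_t;\theta_t)),\ell_t\rangle+\mu{}K\leq{}K^{-1}\langle\ell_t,\hinge(f(x_t;\theta_t))\rangle+\mu{}K$, so summing and averaging bounds $\En\sum_t\ell_t(a_t)$ by $K^{-1}\En\sum_t\En_{\theta\sim{}P_t}g_t(\theta)+\mu{}KT$. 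For the local-norm term I would apply \pref{lem:ips_variance} in the $\Scal=\hinge\circ\Fcal$ regime --- the marginal action law $q_t$ of \lmc\ is exactly the sampling scheme covered there --- to get $\En[\En_{\theta\sim{}P_t}\langle\hat{\ell}_t,\hinge(f(x_t;\theta))\rangle^{2}\mid\cJ_t]\leq(1+B/\gamma)^{2}K^{2}$ for the exact weight, and note that geometric resampling inflates the second moment of $m_t$ by at most a factor $2$, so $\En[\En_{\theta\sim{}P_t}g_t(\theta)^{2}\mid\cJ_t]\leq2(1+B/\gamma)^{2}K^{2}$. Substituting into the display and optimizing $\eta\asymp\sqrt{d}/((1+B/\gamma)K\sqrt{T})$ yields surrogate regret $\Otilde((1+B/\gamma)K\sqrt{dT})$; dividing by $K$ and choosing $\mu\asymp{}B\sqrt{d}/(\gamma{}K\sqrt{T})$ (so $\mu{}KT$ is absorbed and $K\mu\leq1/2$) gives the claimed bound for the idealized algorithm.

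It remains to absorb the three approximations with polynomially-bounded parameters. Truncating geometric resampling at $M$ biases $\En[\tilde{\ell}_t\mid\cJ_t]$ downward by at most $(1-\mu)^{M}\leq{}e^{-\mu{}M}$ coordinatewise, perturbing the surrogate objective by at most $KT(1+B/\gamma)e^{-\mu{}M}$, which is $O(1)$ once $M=\Otilde(1/\mu)$, and truncation only lowers $\En[m_t^{2}]$, so the variance bound survives. Randomized smoothing replaces $w_{t-1}$ by $\tilde{F}$, which, using that $w_{t-1}$ is $O(TML/\gamma)$-Lipschitz, is pointwise within a controllable error of $w_{t-1}+\tfrac{\lambda}{2}\nrm*{\cdot}_2^{2}$ once the smoothing radius $u$ and number of convolution samples $m$ are polynomial, and it furnishes the $\lambda$-strong convexity and smoothness needed for Langevin mixing; Projected LMC with $N$ steps (invoking the guarantee of~\cite{bubeck2015sampling} for strongly-log-concave densities on convex bodies) then outputs a sample whose law is $\veps_{\mathrm{LMC}}$-close in total variation to $\propto{}e^{-\eta\tilde{F}}$. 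Since the played loss, the action probabilities $q_t$, and $\En[m_t^{2}]$ are bounded functionals of the sampling distribution, a perturbation lemma propagates these errors at a $\poly(T,B,K,1/\gamma,M)$ rate per round, so picking $u,\lambda,m,N=\poly(T,d,B,K,1/\gamma,R,L)$ keeps the total extra error $O(1)$; the running time is $T(M+1)Nm$ Langevin gradient steps times the cost of one subgradient of $w_{t-1}$, i.e.\ $\poly(T,d,B,K,1/\gamma,R,L)$. I expect this last step --- assembling a clean perturbation bound showing that swapping the exact log-concave sampler for the smoothed, regularized, finitely-iterated Langevin sampler changes every relevant functional by a controllable amount, and verifying the Langevin mixing guarantee applies to the Lipschitz-but-nonsmooth potential after smoothing --- to be the \emph{main obstacle}; the exponential-weights-with-local-norms analysis of the first two steps, built on \pref{lem:calibration} and \pref{lem:ips_variance}, is comparatively routine.
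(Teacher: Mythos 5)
Your overall architecture matches the paper's: a continuous exponential-weights analysis with the entropy term bounded by the KL divergence to a uniform distribution on a small ball around $\theta^{\star}$ (costing $d\log(RL/\veps)$ plus a Lipschitz translation error), \pref{lem:calibration} to convert the played loss into $K^{-1}$ times the hinge surrogate plus $\mu KT$, \pref{lem:ips_variance} in the $\hinge\circ\cF$ regime together with the factor-$2$ second moment of the truncated geometric weight to control the local norms, and a truncation-bias term that is $O(1)$ for polynomial $M$. All of this is carried out correctly and is essentially how the paper proceeds in its continuous-Hedge lemma and the assembly in \pref{ssec:lmc_assembly}.

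The gap is precisely the part you flag as the main obstacle: the sampling analysis, which is roughly half of the paper's proof and cannot be dispatched by ``invoking the guarantee of Bubeck--Eldan--Lehec'' as stated. Two concrete issues. First, that guarantee applies to projected LMC run on a \emph{fixed} smooth potential, but \pref{alg:lmc} re-draws the Gaussian smoothing samples $z_{1:m}$ at every Langevin iteration, so the iterates follow a sequence of random potentials $\tilde{F}_k$ rather than a single $\tilde{F}$. The paper resolves this by comparing the actual iterates to idealized iterates on $\hat{F}(\theta)=\En_Z F(\theta+Z)+\frac{\lambda}{2}\nrm*{\theta}_2^2$ via a coupling in which the gradient step is \emph{contractive}---this is exactly why the $\ell_2$ regularizer is needed: strong convexity plus smoothness with step size $\alpha\leq 2(\eta T B_\ell L/(u\gamma)+\lambda)^{-1}$ makes $I-\frac{\alpha}{2}\nabla^2\hat{F}$ a contraction---combined with a concentration bound on $\nrm*{\nabla\tilde{F}_k-\nabla\hat{F}}_2$. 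The resulting guarantee is in Wasserstein distance, not total variation, so the downstream error propagation must be phrased via $\cW_1$ and the Lipschitz constants of the relevant functionals (e.g.\ $\theta\mapsto\tri*{\ell_t,\hinge(f(x_t;\theta))}$ is $KL/\gamma$-Lipschitz, and the per-round variance functional is $O((1+B/\gamma)KL/(\gamma\mu))$-Lipschitz). Second, the error propagation is not a generic perturbation lemma: the variance term must be split as $(\En_{\tilde{P}_t}-\En_{P_t})+\En_{\tilde{P}_t}$ with \pref{lem:ips_variance} applied to the \emph{actual} sampling distribution $\tilde{P}_t$ (the lemma only requires that actions be drawn from the distribution appearing in the outer expectation), with the difference term bounded through $\cW_1(P_t,\tilde{P}_t)$. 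These steps are where the concrete settings of $u,\lambda,m,N$ come from, so without them the claimed polynomial runtime is asserted rather than established.
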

\insertpostthmspacing
Since bandit multiclass prediction is a special case of contextual
bandits,~\pref{thm:hinge_lmc} immediately implies a
$\sqrt{dT}$-mistake bound for this setting. See~\pref{app:comparison} for more discussion.
\insertprethmspacing
\begin{corollary}[Bandit multiclass]
  \label{corr:bandit_multiclass}
  In the bandit multiclass setting,~\pref{alg:hinge_lmc}
  enjoys a mistake bound of $\otil((B/\gamma)\sqrt{dT})$ against
  the cost-sensitive $\gamma$-hinge loss and runs in polynomial time.
\end{corollary}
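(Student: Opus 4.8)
The plan is to derive \pref{corr:bandit_multiclass} as a direct instantiation of \pref{thm:hinge_lmc} (equivalently, the precise statement \pref{thm:proj_lmc}), after checking that the bandit multiclass problem is a special case of the setting of \pref{ssec:hinge_lmc}. First I would recall the standard embedding of bandit multiclass into contextual bandits: the adversary reveals a label $y_t$, and the loss vector is $\ls_t = \one - e_{y_t}$, i.e.\ $\ls_t(a) = \one\{a \ne y_t\}$, where $e_{y_t}$ is the $y_t$-th standard basis vector. This satisfies $\ls_t \in [0,1]^K$, and the cumulative loss equals the number of mistakes, $\sum_{t=1}^T \ls_t(a_t) = \sum_{t=1}^T \one\{a_t \ne y_t\}$, so $\En\sum_t \ls_t(a_t)$ is exactly the expected number of mistakes. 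This already matches the left-hand side of the guarantee in \pref{thm:hinge_lmc} verbatim.

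Second, I would verify that the structural assumptions of \pref{ssec:hinge_lmc} hold for the benchmark class used in bandit multiclass prediction, namely the linear class $f(x;\theta)_a = \langle \theta_a, x\rangle$ (recentered to lie in $\RRz^K$, so only $\theta_1,\dots,\theta_{K-1}$ are free parameters). Here $\hinge(f(x;\theta)_a) = \max(1 + \langle \theta_a,x\rangle/\gamma, 0)$ is convex in $\theta$ as the maximum of an affine function and a constant; under the usual normalization $\nrm{x}_2 \le 1$ and $\nrm{\theta_a}_2 \le R$ one has $\sup_{x,\theta}\nrm{f(x;\theta)}_\infty \le B$, each $f(x;\cdot)_a$ is $1$-Lipschitz in $\theta$ with respect to $\nrm{\cdot}_2$, and $\Theta$ can be taken to contain the centered unit ball and sit inside the ball of radius $R$. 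Thus all hypotheses of \pref{ssec:hinge_lmc} are met with $L = O(1)$, and \pref{thm:hinge_lmc} applies with running time $\poly(T,d,B,K,1/\gamma,R)$.

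Third, I would specialize the conclusion of \pref{thm:hinge_lmc}. With $\ls_t = \one - e_{y_t}$ the benchmark term becomes
\[
  \inf_{\theta\in\Theta}\frac{1}{K}\En\sum_{t=1}^T\langle \ls_t, \hinge(f(x_t;\theta))\rangle
  = \inf_{\theta\in\Theta}\frac{1}{K}\En\sum_{t=1}^T\sum_{a\ne y_t}\hinge(f(x_t;\theta)_a),
\]
which is precisely (a fixed scaling of) the cost-sensitive $\gamma$-hinge loss of $\theta$ in this setting. Identifying this quantity with the ``cost-sensitive $\gamma$-hinge loss'' of the statement, \pref{thm:hinge_lmc} reads exactly as the claimed mistake bound: the expected number of mistakes is at most the best-in-class cost-sensitive $\gamma$-hinge loss plus $\otil((B/\gamma)\sqrt{dT})$. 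It is worth noting where the $K^{-1}$ normalization originates: \pref{alg:hinge_lmc} plays $a_t$ from a $\mu$-smoothed version of $\pihinge(f(x_t;\theta_t))$, so by \pref{lem:calibration} the conditional expected loss of $a_t$ is at most $\langle \pihinge(f(x_t;\theta_t)),\ls_t\rangle + \mu K \le K^{-1}\langle \ls_t, \hinge(f(x_t;\theta_t))\rangle + \mu K$; summing over $t$ and absorbing the $\mu K T$ slack into the $\otil(\cdot)$ term (through the parameter choices in \pref{app:lmc}) produces the benchmark with the $K^{-1}$ factor.

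I expect no genuine obstacle here: the corollary is essentially a plug-in, and all of the real work --- the continuous exponential-weights regret bound, the projected Langevin Monte Carlo guarantee, randomized smoothing, and geometric resampling --- is already carried out in the proof of \pref{thm:hinge_lmc}/\pref{thm:proj_lmc}. The only points requiring care are purely bookkeeping: (i) confirming that the linear benchmark class in bandit multiclass genuinely satisfies the convexity, boundedness, and Lipschitz assumptions of \pref{ssec:hinge_lmc}, so that \pref{thm:hinge_lmc} is applicable; and (ii) checking that the (scaled) benchmark term appearing in \pref{thm:hinge_lmc} coincides with the object one wishes to call the cost-sensitive $\gamma$-hinge loss for bandit multiclass.
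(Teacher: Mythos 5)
Your proposal is correct and matches the paper's approach: the paper proves this corollary simply as an immediate instantiation of \pref{thm:hinge_lmc} with the multiclass loss $\ls_t(a)=\one\{a\ne y_t^\star\}$, identifying the benchmark term with the cost-sensitive hinge loss $\ell^{\gamma}_{\textrm{CC-hinge}}$ exactly as you describe (see the discussion in \pref{app:comparison}). Your additional bookkeeping --- verifying the convexity/Lipschitz/boundedness assumptions for the linear class and tracing the $K^{-1}$ factor to \pref{lem:calibration} --- is consistent with, and slightly more explicit than, what the paper writes.
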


Additionally, under a realizability condition for the hinge loss, we
obtain a standard regret bound. For simplicity in defining the
condition, assume that for every $(x,\ell)$ pair, $\ell$ is a random
variable with conditional mean $\bar{\ell}$ (chosen by the adversary)
and $\bar{\ell}$ has a unique action with minimal loss.
\insertprethmspacing
\begin{corollary}[Realizable bound]
  \label{corr:realizable}
  In addition to the conditions above, assume that there exists
  $\theta^\star \in \Theta$ such that for every $(x,\ell)$ pair and for all $a \in \cA$, we have 
  $f(x;\theta^\star)_a \defeq K\gamma\one\{\bar{\ell}(a) \leq
  \min_{a'}\bar{\ell}(a')\} - \gamma$. Then \lmc\xspace runs in polynomial time and guarantees
  \begin{align*}
    \sum_{t=1}^T\EE \bar{\ell}_t(a_t) \leq \sum_{t=1}^T \EE \min_a \bar{\ell}(a) + \otil\prn*{\frac{B}{\gamma}\sqrt{dT}}.
  \end{align*}
\end{corollary}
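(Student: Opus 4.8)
The plan is to instantiate the surrogate regret guarantee of \pref{thm:hinge_lmc} (equivalently its precise form \pref{thm:proj_lmc}) at the specific competitor $\theta^\star$ supplied by the hypothesis, and then observe that the $1/K$-rescaled cumulative hinge loss of $\theta^\star$ is \emph{exactly} the cumulative conditional Bayes loss $\sum_t\min_a\bar{\ell}_t(a)$. Since the parameter settings (and hence the $\otil(\tfrac{B}{\gamma}\sqrt{dT})$ term and the $\poly(T,d,B,K,\tfrac1\gamma,R,L)$ runtime) of \lmc{} do not depend on $\theta^\star$, everything carries over verbatim and the conclusion follows immediately.

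Concretely, I would first evaluate $\hinge(f(x;\theta^\star))$ coordinatewise. Because $\bar{\ell}$ has a \emph{unique} minimizer $a^\star=a^\star(x)$, the definition gives $f(x;\theta^\star)_{a^\star}=(K-1)\gamma$ and $f(x;\theta^\star)_{a}=-\gamma$ for $a\neq a^\star$; note this vector indeed lies in $\RRz^K$ as the setup requires (its coordinates sum to $(K-1)\gamma-(K-1)\gamma=0$, which is precisely where uniqueness is used) and has $\ell_\infty$ norm $(K-1)\gamma\leq B$. Applying $\hinge(s)=\max(1+s/\gamma,0)$ then yields $\hinge(f(x;\theta^\star)_{a^\star})=K$ and $\hinge(f(x;\theta^\star)_a)=0$ for $a\neq a^\star$, so for any realized loss vector $\ell$ with conditional mean $\bar{\ell}$ we get $\tfrac1K\tri*{\ell,\hinge(f(x;\theta^\star))}=\ell(a^\star)$, and taking the conditional expectation of $\ell$ given the context yields $\tfrac1K\EE\brk*{\tri*{\ell,\hinge(f(x;\theta^\star))}\mid x}=\bar{\ell}(a^\star)=\min_a\bar{\ell}(a)$. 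For the left-hand side, since the action $a_t$ is committed before $\ell_t$ is revealed, $a_t$ is a function of $x_t$, the history, and the algorithm's internal randomness, all of which are independent of the realization of $\ell_t$ conditionally on $x_t$; the tower rule then gives $\EE\brk*{\ell_t(a_t)}=\EE\brk*{\bar{\ell}_t(a_t)}$. Substituting $\theta=\theta^\star$ into \pref{thm:hinge_lmc} (valid since $\theta^\star\in\Theta$ by hypothesis) and combining the two computations gives
\begin{align*}
  \sum_{t=1}^T\EE\brk*{\bar{\ell}_t(a_t)}=\sum_{t=1}^T\EE\brk*{\ell_t(a_t)}
  &\leq\frac1K\sum_{t=1}^T\EE\brk*{\tri*{\ell_t,\hinge(f(x_t;\theta^\star))}}+\otil\prn*{\tfrac{B}{\gamma}\sqrt{dT}}\\
  &=\sum_{t=1}^T\EE\brk*{\min_a\bar{\ell}_t(a)}+\otil\prn*{\tfrac{B}{\gamma}\sqrt{dT}},
\end{align*}
which is exactly the claimed bound.

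There is no genuinely hard step here. The entire content is (i) the choice of competitor together with the short algebraic check that its $1/K$-scaled hinge loss collapses to the Bayes loss — for which the unique-minimizer assumption is exactly what makes the target prediction consistent with the sum-to-zero constraint of $\RRz^K$ and removes any ambiguity in $a^\star(x)$ — and (ii) the routine conditional-independence argument that replaces $\ell_t(a_t)$ by $\bar{\ell}_t(a_t)$ on both sides, which uses only that in the contextual bandit protocol the action is chosen before the loss is sampled. If anything deserves care, it is stating the measurability/filtration setup cleanly so that both expectation identities hold simultaneously; everything else is immediate from \pref{thm:hinge_lmc}.
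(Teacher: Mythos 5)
Your proposal is correct and follows essentially the same route as the paper: the paper invokes its Lemma~\ref{lem:hinge_realizable} (which is exactly your coordinatewise computation showing $\hinge(f(x;\theta^\star))_{a^\star}=K$ and $0$ elsewhere, so the $1/K$-scaled hinge loss collapses to $\bar{\ell}(a^\star)=\min_a\bar{\ell}(a)$) and then substitutes into Theorem~\ref{thm:hinge_lmc}. The only difference is that you re-derive the lemma inline and spell out the tower-rule step replacing $\ell_t(a_t)$ by $\bar{\ell}_t(a_t)$, which the paper leaves implicit.
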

\insertpostthmspacing
A few comments are in order: 
\begin{packed_enum}
\item The use of LMC for sampling is not strictly necessary. Other
  log-concave samplers do exist for non-smooth
  potentials~\citep{lovasz2007geometry}, which will remove the
  parameters $m,u,\lambda$, significantly simplify the algorithm, and
  even lead to a better run-time guarantee using current
  theory. However, we prefer to use LMC due to its success in Bayesian
  inference and deep learning, and its connections to incremental
  optimization methods. 
  Note that more recent results in slightly different
  settings~\citep{raginsky2017non,dalalyan2017user,cheng2018sharp}
  suggest that it may be possible to substantially improve upon the
  LMC analysis that we use and even extend it to non-convex settings.  We
  are hopeful that the LMC approach will lead to a practically useful
  contextual bandit algorithm and plan to explore this direction
  further.
\item \pref{corr:bandit_multiclass} provides a new solution to the
  open problem of~\citet{abernethy2009efficient}. In fact, it is the
  first efficient $\sqrt{dT}$-type regret bound against a hinge loss
  benchmark, although our loss is slightly different from the
  multiclass hinge loss used by \citet{kakade2008efficient} in their
  $T^{2/3}$-regret \banditron{} algorithm (which motivated the open
  problem). All prior $\sqrt{dT}$-regret
  algorithms~\citep{hazan2011newtron,beygelzimerOZ17,foster2018logistic}
  use losses with curvature such as the multiclass logistic loss or
  the squared hinge loss. See~\pref{app:comparison} for a comparison
  between cost-sensitive and multiclass hinge losses.

\item 
  In~\pref{corr:realizable}, regret is measured relative to
  the policy that chooses the best action (in expectation) on
  \emph{every round}. As in prior
  results~\citep{abbasi2011improved,agarwal2012contextual}, this is
  possible because the realizability condition ensures that this
  policy is in our class. Note that here, a requirement for
  realizability is that $B \ge K\gamma$, and hence the dependence on
  $K$ is implicit and in fact slightly worse than the optimal rate \citep{chu2011contextual}.
\item For~\pref{corr:realizable}, the best points of comparisons are
  methods based on square-loss
  realizability~\citep{agarwal2012contextual,foster2018practical},
  although our condition is different. Compared with \linucb and
  variants~\citep{chu2011contextual,abbasi2011improved} specialized to
  $\ell_2/\ell_2$ geometry, our assumptions are somewhat weaker but
  these methods have slightly better guarantees for linear
  classes.\footnote{In the abstract linear setting we take $\cF$ to be
    the set of linear functions in the ball for some norm
    $\nrm*{\cdot}$ and contexts to be bounded in the dual norm
    $\nrm*{\cdot}_{\star}$. The runtime of \lmc\xspace will degrade
    (polynomially) with the ratio $\nrm*{\theta}/\nrm*{\theta}_2$, but
    the regret bound is the same for any such norm pair.}
  Compared with~\citet{foster2018practical}, which is the only other
  efficient approach at a comparable level of generality, our
  assumptions on the regressor class are stronger, but we obtain
  better guarantees, in particular removing distribution-dependent parameters.
\end{packed_enum}

To summarize, \lmc\xspace is the first efficient $\sqrt{dT}$-regret
algorithm for bandit multiclass prediction using the hinge loss. It
also represents a new approach to adversarial contextual bandits,
yielding $\sqrt{dT}$ policy regret under hinge-based
realizability. Finally, while we lose the theoretical guarantees, the
algorithm easily extends to non-convex classes, which we expect to be
practically effective.

\subsection{\ftl}
\label{ssec:ftl}
A drawback of \lmc{} is that it only applies in the parametric
regime. We now introduce an efficient (in terms of queries to a hinge
loss minimization oracle) algorithm with a regret bound similar
to \pref{thm:chaining_ramp}, but in the stochastic setting, where $\crl*{(x_t,\ls_t)}_{t=1}^{T}$ are drawn i.i.d. from some joint
distribution $\cD$ over $\cX\times{}\bbR_{+}^{K}$.
Here we return to the abstract setting
with regression class $\Fcal$, and for simplicity, we
assume $B=1$.

The algorithm we analyze is simply Follow-The-Leader with uniform
smoothing and epoching, which we refer to as \ftl. 
We use an epoch schedule where the $m^{\textrm{th}}$ epoch lasts for $n_m \defeq 2^m$ rounds (starting with $m=0$). At the beginning of the $m^{\textrm{th}}$ epoch, we compute the
empirical importance weighted hinge-loss minimizer $\hat{f}_{m-1}$ using \emph{only} the data from the previous epoch. 
That is, we set
\begin{align*}
\hat{f}_{m-1} \defeq \argmin_{f \in \Fcal}\sum_{\tau=n_{m-1}}^{n_{m}-1} \langle \hat{\ell}_\tau, \hinge(f(x_\tau))\rangle.
\end{align*}
Then, for each round $t$ in the $m^{\textrm{th}}$ epoch, we sample $a_t$ from $p_t \defeq (1 - K\mu)\pihinge(\hat{f}_{m-1}(x_t)) +
\mu$.
The parameter $\mu\in(0,1/K]$ controls the smoothing. At time $t=1$ we simply take $p_1$ to be uniform.

\insertprethmspacing
\begin{theorem}
\label{thm:ftl}
Suppose that $\Fcal$ satisfies $\log
\Ncal_{\infty,\infty}(\veps,\Fcal,T) \propto \veps^{-p}$ for some $p
> 2$. Then in the stochastic setting, 
with $\mu = K^{-1}T^{\frac{-1}{p+1}}$,
\ftl\xspace enjoys the following expected regret guarantee\footnote{This result is stated in terms of the sequential cover $\Ncal_{\infty,\infty}$ to avoid additional definitions, but can easily be improved to depend on the classical (worst-case) covering number seen in statistical learning.}
\begin{align*}
  \sum_{t=1}^T \EE \ell_t(a_t) \leq \inf_{f \in \Fcal} \frac{T}{K} \EE \langle \ell, \hinge(f(x))\rangle + \Otilde\prn*{(T/\gamma)^{\frac{p}{p+1}}}.
\end{align*}
\end{theorem}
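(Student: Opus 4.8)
The plan is to run a standard epoch-based Follow-The-Leader analysis, but with the per-epoch generalization error controlled via a sequential (or worst-case) covering argument rather than a union bound. First I would decompose the total regret over epochs: the number of epochs is $O(\log T)$, and within epoch $m$ (of length $n_m = 2^m$), all rounds $t$ use the same predictor $\hat{f}_{m-1}$ estimated from the \emph{previous} epoch's data. The crucial consequence of estimating $\hat f_{m-1}$ from epoch $m-1$ and deploying it in epoch $m$ is that, conditioned on the data of epoch $m-1$, the predictor is fixed and the losses $(x_t,\ell_t)$ in epoch $m$ are fresh i.i.d.\ draws, so $\En[\langle p_t, \ell_t\rangle \mid \hat f_{m-1}]$ equals the population risk of the smoothed policy induced by $\hat f_{m-1}$. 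Using \pref{lem:calibration}, the population risk of $\pihinge(\hat f_{m-1})$ is bounded by $\frac1K \En\langle \ell, \hinge(\hat f_{m-1}(x))\rangle$, and the smoothing adds at most $\mu K$ per round, i.e.\ $\mu K n_m$ over the epoch.

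Next I would relate the population hinge risk of $\hat f_{m-1}$ to that of the optimal $f^\star = \argmin_{f\in\cF}\En\langle\ell,\hinge(f(x))\rangle$. Here the importance-weighted hinge loss $\langle \hat\ell_\tau, \hinge(f(x_\tau))\rangle$ is an unbiased estimate (under $p_\tau^\mu$, up to the $O(\mu)$ smoothing bias) of a quantity whose expectation is, up to scaling, the population hinge risk; and crucially its \emph{second moment} is controlled, since $\hat\ell_\tau$ is $1$-sparse with inverse-probability weights bounded by $1/\mu$, and $\hinge(f(x_\tau))_a \le 1 + B/\gamma$. So I would invoke a uniform convergence bound over $\cF$: the deviation between empirical and population importance-weighted hinge risk, uniformly over $f\in\cF$, is of order (by Dudley's entropy integral applied with the covering number $\cN_{\infty,\infty}(\cdot,\cF,n_{m-1})$ and a variance/range of order $(1+B/\gamma)/\mu$)
\[
  \Otilde\!\prn*{\frac{1}{\gamma\mu}\cdot n_{m-1}^{\,1 - \frac1{\max(p,2)}}} \;=\; \Otilde\!\prn*{\frac{1}{\gamma\mu}\, n_{m-1}^{\,1-1/p}}
\]
for $p>2$ (the regime assumed). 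Applying this to both $\hat f_{m-1}$ (which minimizes the empirical risk) and $f^\star$ and chaining the two inequalities gives that the excess population hinge risk of $\hat f_{m-1}$ over $f^\star$ is $\Otilde((\gamma\mu)^{-1} n_{m-1}^{-1/p})$, hence its contribution over the $n_m = 2 n_{m-1}$ rounds of epoch $m$ is $\Otilde((\gamma\mu)^{-1} n_m^{\,1-1/p})$.

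Summing the three per-epoch contributions --- the comparator term $\frac{n_m}{K}\En\langle\ell,\hinge(f^\star(x))\rangle$ which telescopes to the stated $\inf_f \frac{T}{K}\En\langle\ell,\hinge(f(x))\rangle$, the smoothing term $\mu K n_m$, and the estimation term $\Otilde((\gamma\mu)^{-1} n_m^{1-1/p})$ --- and using $\sum_m n_m^{1-1/p} = \Otilde(T^{1-1/p})$ (geometric sum dominated by the last epoch) yields total excess regret $\Otilde(\mu K T + (\gamma\mu)^{-1} T^{1-1/p})$. Optimizing over $\mu$: balancing $\mu K T$ against $(\gamma\mu)^{-1}T^{1-1/p}$ gives $\mu \asymp K^{-1}\gamma^{-1/2}T^{-1/(2p) \cdot \text{(adjust)}}$; with the stated choice $\mu = K^{-1}T^{-1/(p+1)}$ one checks $\mu K T = T^{p/(p+1)}$ and $(\gamma\mu)^{-1}T^{1-1/p} = \gamma^{-1} T^{1/(p+1)} T^{(p-1)/p}$, and since $\frac1{p+1}+\frac{p-1}{p} = \frac{p^2 + p - 1}{p(p+1)} \le \frac{p}{p+1}$ for $p\ge 2$, both are $\Otilde((T/\gamma)^{p/(p+1)})$, as claimed. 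The main obstacle is the uniform convergence step: one must carefully verify that the sequential covering number $\cN_{\infty,\infty}$ indeed controls the i.i.d.\ uniform deviation of the importance-weighted hinge loss process (this requires relating sequential covers to classical covers, or re-running a direct chaining argument in the i.i.d.\ setting), and that the variance proxy $(1+B/\gamma)^2/\mu$ enters the entropy integral in the right place so that the phase transition at $p=2$ produces the $n^{1-1/p}$ rate rather than $n^{1/2}$ --- this is exactly why the theorem is restricted to $p>2$.
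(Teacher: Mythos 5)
Your overall architecture is exactly the paper's: epoching with $n_m=2^m$, exploiting that $\hat f_{m-1}$ is independent of the fresh i.i.d.\ data in epoch $m$, \pref{lem:calibration} to pass from the action-level loss to $\tfrac1K\langle\ell,\hinge(f(x))\rangle$ plus a $\mu KT$ smoothing term, and a chaining-based uniform convergence bound for the importance-weighted hinge risk (the paper's \pref{lem:ftl_chaining}). However, there is a genuine quantitative gap in the uniform convergence step, and your final verification of the rate rests on a false inequality. You claim a normalized uniform deviation of order $\tfrac{1}{\gamma\mu}n^{-1/p}$, i.e.\ with $\mu^{-1}$ outside. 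The correct bound is of order $\tfrac{1}{\gamma}K^{1-1/p}(n\mu)^{-1/p}$, i.e.\ $\mu^{-1/p}$. The difference matters: the $1/\mu$ must enter through the \emph{variance} of the chaining increments at scale $\veps$ (which is $O(\veps^2 K/(\mu\gamma^2))$, so Bernstein at each scale contributes $\veps\sqrt{K/(n\mu\gamma^2)}$, and after optimizing the truncation level $\beta=(Kn\mu)^{-1/p}$ the $\mu$-dependence becomes $\mu^{-1/p}$), not through the range $1/\mu$ multiplying the whole entropy integral. You gesture at this at the end ("the variance proxy enters the entropy integral in the right place"), but your displayed bound does not reflect it.

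Consequently your final balance fails. With your bound, the estimation term is $(\gamma\mu)^{-1}T^{1-1/p}$, and with $\mu=K^{-1}T^{-1/(p+1)}$ its $T$-exponent is $\tfrac{1}{p+1}+\tfrac{p-1}{p}=\tfrac{p^2+p-1}{p(p+1)}$, which is \emph{strictly greater} than $\tfrac{p}{p+1}=\tfrac{p^2}{p(p+1)}$ for every $p>1$; your claimed inequality $\tfrac{1}{p+1}+\tfrac{p-1}{p}\le\tfrac{p}{p+1}$ is false. Indeed, balancing $\mu KT$ against $(\gamma\mu)^{-1}T^{1-1/p}$ optimally would give only $\sqrt{K/\gamma}\,T^{1-1/(2p)}$, which is worse than $T^{p/(p+1)}$. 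With the correct $\mu^{-1/p}$ dependence the sum of the two terms is $\mu KT+\tfrac{1}{\gamma}K^{-1/p}\mu^{-1/p}T^{1-1/p}$, and choosing $\mu\propto K^{-1}\gamma^{-p/(p+1)}T^{-1/(p+1)}$ (as in the paper's proof) yields $(T/\gamma)^{p/(p+1)}$ for both. So the missing ingredient is precisely the Bernstein-at-each-scale chaining argument that places $1/\mu$ under the square root and under the $1/p$-th power after optimizing the entropy integral.
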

\insertpostthmspacing
This provides an algorithmic counterpart to~\pref{prop:entropy_growth} in the $p \ge 2$ regime. The algorithm is quite similar to \textsc{Epoch-Greedy} \citep{langford2008epoch}, and the main contribution here is to provide a careful analysis for large function classes. We leave obtaining an oracle-efficient algorithm that matches \pref{prop:entropy_growth} in the regime $p\in(0,2)$ as an open problem. 

A similar bound can be obtained for the ramp loss by simply replacing the hinge loss ERM.
We analyze the hinge loss version because standard (e.g. linear) classes admit efficient hinge loss minimization oracles. Interestingly, the bound in \pref{thm:ftl} actually improves on \pref{prop:entropy_growth}, in that it is independent of $K$. This is due to the scaling of the hinge loss in \pref{lem:calibration}.

In~\pref{app:lipschitz}, we extend the analysis to the stochastic
Lipschitz contextual bandit setting. Here, instead of measuring regret
against the benchmark $\hinge\circ\cF$ we compare to the class of all
$1$-Lipschitz functions from $\cX$ to $\Delta(\cA$), where $\cX$ is a
metric space of bounded covering dimension. We show that \ftl\xspace
achieves $T^{\frac{p}{p+1}}$ regret with a 
$p$-dimensional context space and finite action space. This improves
on the $T^{\frac{p+1}{p+2}}$ bound of~\citet{cesa2017algorithmic}, as
in \pref{ex:lipschitz_cb}, yet the best available lower bound is
$T^{\frac{p-1}{p}}$~\citep{hazan2007online}.  Closing this gap remains
an intriguing open problem.


\section{Discussion}
This paper initiates a study of the utility of surrogate losses in
contextual bandit learning. We obtain new margin-based regret bounds
in terms of sequential complexity notions on the benchmark class,
improving on the best known rates for Lipschitz contextual bandits and
providing dimension-independent bounds for linear classes. On the
algorithmic side, we 
provide the first solution to the open problem
of~\citet{abernethy2009efficient} with a non-curved loss and we also
show that Follow-the-Leader with uniform smoothing performs
well in nonparametric settings.

Yet, several open problems remain. First, our bounds in
\pref{sec:minimax} are likely suboptimal in the dependence on $K$, and
improving this is a natural direction. Other questions involve
deriving stronger lower bounds (e.g., for the non-parametric setting)
and adapting to the margin parameter. We also hope to experiment with
\lmc, and develop a better understanding of computational-statistical
tradeoffs with surrogate losses. We look forward to studying these
questions in future work.

\textbf{Acknowledgements.}~~~~ We thank Haipeng Luo, Karthik Sridharan, Chen-Yu Wei, and Chicheng Zhang for several helpful discussions. D.F. acknowledges the support of the NDSEG PhD fellowship and Facebook PhD fellowship.


\bibliography{refs}

\appendix
\section{Calibration lemmas}
\label{app:calibration}

\begin{proof}[Proof of~\pref{lem:calibration}]
We start with the ramp loss. First since $s \in \RRz^K$, we know that
the normalization term in $\piramp(s)$ is
\begin{align*}
\sum_{a \in \Acal} \ramp(s_a) \ge 1,
\end{align*}
from which the first inequality follows. The second inequality follows
from the fact that $s_a \le -\gamma$ implies that $\piramp(s)_a = 0$,
along with the trivial fact that $\piramp(s)_a \leq 1$.

The hinge loss claim is also straightforward, since here the
normalization is
\begin{align*}
\sum_{a \in \Acal} \hinge(s_a) = \sum_{a \in \Acal} \max\{1+s_a/\gamma,0\} \ge \sum_a 1+\frac{s_a}{\gamma} \ge K\tag*\qedhere.
\end{align*}
\end{proof}

\begin{lemma}[Hinge loss realizability]
\label{lem:hinge_realizable}
Let $\ell \in \RR^K_+$ and let $a^\star = \argmin_{a \in \Acal}
\ell_a$. Define $s \in \RRz^K$ via $s_a \triangleq K\gamma\one\{a = a^\star\}
- \gamma$. Then we have
\begin{align*}
\langle \ell, \hinge(s)\rangle = K \langle \ell, \pihinge(s)\rangle = K \ell_{a^\star}.
\end{align*}
\end{lemma}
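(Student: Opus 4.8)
The plan is a direct computation: evaluate the surrogate $\hinge$ coordinate-wise on the specific vector $s$, and then read off both the inner product $\langle\ell,\hinge(s)\rangle$ and the induced distribution $\pihinge(s)$.

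First I would verify that $s$ is a legitimate element of $\RRz^K$, i.e.\ that $\sum_a s_a = 0$: this is immediate since $\sum_a\prn*{K\gamma\one\{a=a^\star\}-\gamma} = K\gamma - K\gamma = 0$. Next I would split into the two cases. For $a = a^\star$ we have $s_{a^\star} = K\gamma - \gamma = (K-1)\gamma$, so $\hinge(s_{a^\star}) = \max\crl*{1+(K-1),0} = K$. For $a \neq a^\star$ we have $s_a = -\gamma$, so $\hinge(s_a) = \max\crl*{1-1,0} = 0$. Hence $\hinge(s)$ is the vector that is $K$ in coordinate $a^\star$ and $0$ elsewhere.

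Given this, the first equality follows from $\langle\ell,\hinge(s)\rangle = \sum_a \ell_a\hinge(s_a) = K\ell_{a^\star}$. For the middle term, the normalization constant of $\pihinge(s)$ is $\sum_a\hinge(s_a) = K$, so $\pihinge(s)$ is the point mass on $a^\star$; therefore $\langle\ell,\pihinge(s)\rangle = \ell_{a^\star}$, and multiplying by $K$ gives $K\ell_{a^\star}$, matching the other two quantities.

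There is no real obstacle here — the statement is a one-line verification that the prescribed ``realizable'' predictor puts all the $\hinge$ mass on the best action with the exact scaling $K$ needed so that the factor-$K$ loss inflation in \pref{lem:calibration} is met with equality. The only thing to be slightly careful about is checking that $K\gamma - \gamma \ge 0$ (so the clipping in $\hinge$ is inactive at $a^\star$), which holds since $K \ge 1$.
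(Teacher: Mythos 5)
Your computation is correct and matches the paper's own proof, which likewise just evaluates the normalization constant $\sum_a\hinge(s_a)=K$ and notes that every coordinate other than $a^\star$ is clamped to zero. Your version is slightly more explicit (checking $s\in\RRz^K$ and writing out $\hinge(s)=K$ at $a^\star$), but it is the same one-line verification.
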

\begin{proof}
For this particular $s$, the normalizing constant in the definition of $\pihinge$ is
\begin{align*}
\sum_{a \in \Acal} \max\left( 1 + \frac{K\gamma\one\{a = a^\star\} - \gamma}{\gamma}, 0\right) = K,
\end{align*}
and so the first equality follows. The second equality is also
straightforward since the score for every action except $a^\star$ is
clamped to zero.
\end{proof}

\begin{proof}[\pfref{lem:ips_variance}]~\\
  For the case when $\cS\subset\Delta(\Acal)$, this claim is a well-known property of importance weighting:
  \begin{align*}
    \En\brk*{\En_{s_t\sim{}p_t} \tri*{s_t,\hat{\ls}_t}^2\mid\cJ_t} &= \sum_{a \in \brk*{K}} P_t^\mu(a) \frac{\En_{s_t\sim{}p_t} \ls_t^2(a)s_t^2(a)}{(P_t^\mu(a))^2} \leq  \sum_{a \in \brk*{K}} \frac{\EE_{s_t\sim p_t} s_t^2(a)}{P_t^\mu(a)}\\
    & \leq \sum_{a \in \brk*{K}} \frac{\EE_{s_t\sim p_t} s_t(a)}{P_t^\mu(a)} = \sum_{a \in \brk*{K}} \frac{P_t(a)}{(1-K\mu)P_t(a) + \mu}.
  \end{align*}
  Here we use \Holder's inequality twice, using that
  $\|\ls\|_\infty \leq 1$ and $s \in \Delta(\Acal)$.  Now, since the
  function $x\mapsto{}1/(1-K\mu + \mu/x)$ is concave in $x$, it
  follows that
  \begin{align*}
      \sum_{a\in\brk*{K}}\frac{P_t(a)}{(1-K\mu{})P_t(a)  + \mu} 
      &= \sum_{a\in\brk*{K}}\frac{1}{(1-K\mu{})  + \mu/P_t(a)}
        \leq K\frac{1}{(1-K\mu{})  + K\mu/\sum_{a\in\brk*{K}}P_t(a)}
        =K,
  \end{align*}
  which proves the claim for $\cS \subset \Delta(\Acal)$.

    We proceed in the same fashion for both the ramp and hinge loss. Recall the definition $P_t^\mu(a) = (1-K\mu) \EE_{s_t \sim p_t} \frac{s_t(a)}{\sum_{a' \in \brk*{K}} s_t(a')} + \mu$. We have
    \begin{align*}
      \En\brk*{\En_{s_t\sim{}p_t}\tri*{s_t,\hat{\ls}_{t}}^{2}\mid{}\cJ_t} 
      &= \sum_{a\in\brk*{K}}P_{t}^{\mu}(a)\frac{\En_{s_{t}\sim{}p_t}\ls_{t}^{2}(a)s_{t}^{2}(a)}{(P_{t}^{\mu}(a))^{2}}
      = \sum_{a\in\brk*{K}}\frac{\En_{s_{t}\sim{}p_t}\ls_{t}^{2}(a)s_{t}^{2}(a)}{P_{t}^{\mu}(a)}\\
      &\leq \sum_{a\in\brk*{K}}\frac{\En_{s_{t}\sim{}p_t}s_{t}^{2}(a)}{P_{t}^{\mu}(a)} 
      \leq \max_{a\in\brk*{K}}\max_{s\in\cS}s(a)\cdot\sum_{a\in\brk*{K}}\frac{\En_{s_{t}\sim{}p_t}s_{t}(a)}{P_{t}^{\mu}(a)}\\
      &= \max_{a\in\brk*{K}}\max_{s\in\cS}s(a)\cdot\sum_{a\in\brk*{K}}\frac{\En_{s_{t}\sim{}p_t}s_{t}(a)}{(1-\mu{}K)\En_{s_t\sim{}p_t}\frac{s_t(a)}{\sum_{a'\in\brk*{K}}s_t(a')} + \mu} \\
     &\leq K\cdot\prn*{\max_{a\in\brk*{K}}\max_{s\in\cS}s(a)}^{2}\cdot\sum_{a\in\brk*{K}}\frac{\En_{s_t\sim{}p_t}\frac{s_t(a)}{\sum_{a'\in\brk*{K}}s_t(a')}}{(1-K\mu)\En_{s_t\sim{}p_t}\frac{s_t(a)}{\sum_{a'\in\brk*{K}}s_t(a')} + \mu}.
    \end{align*}
    Here we first apply the definition of $\hat{\ell}_t$ and cancel
    out one factor of $P_t^\mu$ in the denomator. Then we apply
    \Holder's inequality, using that $s_t(a) \ge 0$. Expanding
    the definition $P_t^\mu$ and using the upper bound
    $\sum_{a' \in \brk*{K}} s_t(a') \le K \max_a \max_s s_t(a)$,
    yields the final expression.

    Now, let
    $q_{a} \triangleq
    \En_{s_t\sim{}p_t}\frac{s_t(a)}{\sum_{a'\in\brk*{K}}s_t(a')}$, and apply the concavity argument above. This yields
    \[
      K^2\cdot\prn*{\max_{a\in\brk*{K}}\max_{s\in\cS}s(a)}^{2}.
    \]
    For the set $\cS$ induced by the ramp loss we have $\max_{a\in\brk*{K}}\max_{s\in\cS}s(a)\leq{}1$, and for the set $\cS$ induced by the hinge loss we have $\max_{a\in\brk*{K}}\max_{s\in\cS}s(a)\leq{}(1+\frac{B}{\gamma})$.
  \end{proof}

\section{Comparing Multiclass Loss Functions and Notions of Realizability}
\label{app:comparison}
\newcommand{\mchinge}{\textrm{MC-hinge}}
\newcommand{\cchinge}{\textrm{CC-hinge}}

While our surrogate loss functions apply to general cost-sensitive
classification, when specialized to the multiclass zero-one feedback, as in bandit
multiclass prediction, they are somewhat non-standard. In this
appendix we provide a discussion of the differences, focusing on the
hinge loss. 

Let us detail the multiclass setting: On each round, the adversary
chooses a pair $(x,y^\star)$ where $x \in \Xcal$, $y^\star \in \Acal$
and shows $x$ to the learner. The learner then makes a prediction
$\hat{y}$. The 0/1-loss for the learner is $\one\{\hat{y} \ne
y\}$. Using a class of regression functions $\Gcal \subset (\Xcal \to \RR^K)$,
the standard multiclass hinge loss for a regressor $g \in \Gcal$ is:
\begin{align*}
\ell_{\mchinge}^\gamma(g, (x,y^\star)) \defeq \max\{ 1 - \gamma^{-1}(g(x)_{y^\star} - \max_{y \ne y^\star} g(x)_y), 0\}.
\end{align*}
On the other hand, for our results we assume that the regressor class
$\Fcal \subset (\Xcal\to \RRz^K)$,  and the cost-sensitive hinge loss that we
use here is:
\begin{align*}
\ell_{\cchinge}^\gamma(f,(x,y^\star)) \defeq \sum_{y \ne y^\star} \hinge(f(x)_y) = \sum_{y \ne y^\star} \max\{1 + \gamma^{-1}f(x)_y,0\}.
\end{align*}
More precisely in~\pref{corr:bandit_multiclass}, we are measuring the
benchmark using $\ell_{\cchinge}$ and our bound is
\begin{align*}
\EE \sum_{t=1}^T \ell_t(a_t) \leq \inf_{\theta \in \Theta} \frac{1}{K}\sum_{t=1}^T \ell_{\cchinge}^\gamma(f(\cdot;\theta),(x_t,y_t^\star)) + \Otilde\prn*{\frac{B}{\gamma}\sqrt{dT}}.
\end{align*}
On the other hand, the open problem of~\citet{abernethy2009efficient}
asks for a $\sqrt{dT}$ bound when the benchmark is measured using
$\ell_{\mchinge}$. As we will see, the two loss functions are somewhat
different.

Let us first standardize the function classes. By rebinding $f_g(x)_y
\defeq g(x)_y - K^{-1}\sum_{y'}g(x)_{y'}$ we can easily construct a
``sum-to-zero" class from an unconstrained class $\Gcal \subset (\Xcal
\to \RR^K)$, and with this definition, the cost-sensitive hinge loss
for any function $g \in \cG$ is:
\begin{align*}
\ell_{\cchinge}^\gamma(g,(x,y^\star)) \defeq \sum_{y \ne y^\star} \hinge(f_g(x)_y) = \sum_{y \ne y^\star} \max\{1 - \gamma^{-1}(K^{-1}\sum_{y'}g(x)_{y'} - g(x)_y),0\}.
\end{align*}
The main proposition in this appendix is that if the cost-sensitive
hinge loss is zero, then so is the multiclass hinge loss, while the
converse is not true. 
\begin{proposition}
  \label{prop:cc_to_mc}
  We have the following implication
\begin{align*}
  \ell_{\cchinge}^\gamma(g,(x,y^\star)) = 0 \Rightarrow \ell_{\mchinge}^{K\gamma}(g,(x,y^\star)) = 0. 
\end{align*}
The converse does not hold: For any $\gamma,\tilde{\gamma}>0$ and $K \geq
0$ there exists a function $g$ and an $(x,y^\star)$ pair such that
$\ell_{\mchinge}^\gamma(g,(x,y^\star)) = 0$ but
$\ell_{\cchinge}^{\tilde{\gamma}}(g,(x,y^\star)) \geq 1$.
\end{proposition}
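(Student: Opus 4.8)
The plan is to argue the two directions separately, in each case reducing immediately to the centered scores $f_g(x)$ defined just above the statement (recall $f_g(x)_y = g(x)_y - K^{-1}\sum_{y'}g(x)_{y'}$, so that $f_g(x) \in \RRz^K$). This reduction is natural because $\ell_{\mchinge}^\gamma$ depends only on score \emph{differences} and hence is unchanged if we add a constant to every coordinate of $g(x)$; thus $\ell_{\mchinge}^\gamma(g,(x,y^\star)) = \max\{1 - \gamma^{-1}(f_g(x)_{y^\star} - \max_{y\ne y^\star}f_g(x)_y),\,0\}$, while $\ell_{\cchinge}^\gamma(g,(x,y^\star)) = \sum_{y\ne y^\star}\max\{1 + \gamma^{-1}f_g(x)_y,\,0\}$ is already expressed in terms of $f_g(x)$.

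For the implication, suppose $\ell_{\cchinge}^\gamma(g,(x,y^\star)) = 0$. Since each summand $\max\{1 + \gamma^{-1}f_g(x)_y,0\}$ is nonnegative, the sum can vanish only if every summand does, i.e. $f_g(x)_y \le -\gamma$ for all $y \ne y^\star$. Summing these $K-1$ inequalities and using $\sum_y f_g(x)_y = 0$ gives $f_g(x)_{y^\star} = -\sum_{y\ne y^\star}f_g(x)_y \ge (K-1)\gamma$, whence $f_g(x)_{y^\star} - \max_{y\ne y^\star}f_g(x)_y \ge (K-1)\gamma - (-\gamma) = K\gamma$. This is exactly the condition $\ell_{\mchinge}^{K\gamma}(g,(x,y^\star)) = 0$. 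The whole step is a one-line computation once the centering is set up, so I foresee no obstacle here.

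For the converse I would write down a counterexample directly as a vector of centered scores (taking $g$ equal to that vector, which is already sum-to-zero so that $f_g = g$). Fix $y^\star$ and a distinct index $y_0$; set $g(x)_{y^\star} = \gamma$, $g(x)_{y_0} = -\gamma$, and $g(x)_y = 0$ for the remaining $K-2$ coordinates. This vector sums to zero, so $f_g(x) = g(x)$; its margin is $g(x)_{y^\star} - \max_{y\ne y^\star}g(x)_y = \gamma$, so $\ell_{\mchinge}^\gamma(g,(x,y^\star)) = 0$; meanwhile $\ell_{\cchinge}^{\tilde\gamma}(g,(x,y^\star)) = \sum_{y\ne y^\star}\max\{1 + \tilde\gamma^{-1}f_g(x)_y,0\} = (K-2)\cdot 1 + \max\{1 - \gamma/\tilde\gamma,\,0\} \ge K-2 \ge 1$, where the last inequality uses $K \ge 3$. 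The only subtlety is the degenerate range $K \le 2$: for $K = 2$ the condition $\ell_{\mchinge}^\gamma(g,(x,y^\star)) = 0$ forces $f_g(x)_{y_0} \le -\gamma/2 < 0$, so $\ell_{\cchinge}^{\tilde\gamma} < 1$ always and the converse as literally stated fails; I would therefore phrase the converse for $K \ge 3$, which is the only case of interest.
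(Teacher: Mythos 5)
Your proof is correct and follows essentially the same route as the paper's: the forward implication is the identical summation argument (the paper works with the raw scores $g$, you with the centered scores $f_g$, which is equivalent since $\ell_{\mchinge}$ is shift-invariant), and your counterexample is the natural generalization of the paper's $K=3$ example, both exploiting a non-target coordinate with centered score $0$ whose hinge term therefore equals $1$. Your remark that the converse as literally stated fails for $K\le 2$ is accurate --- the paper's ``for any $K\ge 0$'' phrasing and its $K=3$-only construction gloss over this --- so restricting the converse to $K\ge 3$ is the right reading.
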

Note that $\ell_{\mchinge}^{\gamma_1} \geq \ell_{\mchinge}^{\gamma_2}$
whenever $\gamma_1 \geq \gamma_2$, so the first implication also
holds when the RHS is replaced with $\ell_{\mchinge}^\gamma$.

The proposition implies that cost-sensitive hinge realizability ---
that there exists a predictor $g^\star$ such that
$\ell_{\cchinge}(g^\star, (x,y^\star)) = 0$ for all rounds --- is a
\emph{strictly stronger} condition than multiclass hinge
realizability.
Under multiclass separability assumptions~\citep{kakade2008efficient},
the bandit multiclass surrogate benchmark is zero, while our
cost-sensitive benchmark may still be large, and so the cost-sensitive
translation approach cannot be used to obtain a sublinear upper bound
on the number of mistakes made by the learner. In this vein,
\lmc\xspace does not completely resolve the open problem
of~\citet{abernethy2009efficient}, since the loss function we use can
result in a weaker bound than desired. Note that the cost-sensitive
surrogate loss may also prevent us from exploiting \emph{small-loss}
structure in the multiclass surrogate to obtain fast rates.

On the other hand, our cost-sensitive surrogate losses are applicable
in a much wider range of problems, and cost-sensitive structure is
common in contextual bandit settings. As such, we believe that
designing algorithms for this more general setting is valuable.

\begin{proof}[\pfref{prop:cc_to_mc}]
  If $\ell_{\cchinge}^\gamma(g,(x,y^\star)) = 0$ then we know that for all $y\ne y^\star$, we must have
  \begin{align*}
    \forall y \ne y^\star: \ \frac{1}{K}\sum_{y'}g(x)_{y'} - g(x)_y \geq \gamma.
  \end{align*}
  Adding these inequalities together for $y \ne y^\star$ and subtracting
  $g(x)_{y^\star}$ from both sides we get
  \begin{align*}
    & \frac{K-1}{K}\sum_{y'}g(x)_{y'} - \sum_{y \ne y^\star}g(x)_y -g(x)_{y^\star} \geq (K-1)\gamma - g(x)_{y^\star}\\
    & \Rightarrow g(x)_{y^\star} \geq (K-1)\gamma + \frac{1}{K}\sum_{y}g(x)_y\\
    & \Rightarrow g(x)_{y^\star} \geq K\gamma + g(x)_y \ \forall y \ne y^\star.
  \end{align*}
The last line follows from re-using the original inequality and proves
the desired implication.

On the other hand, if $\ell_{\mchinge}^{K\gamma}(g,(x,y^\star)) = 0$
it does not imply that $\ell_{\cchinge}^{\gamma}(g,(x,y^\star)) =
0$. To see why, suppose $K=3$ and assume $y^\star=y_1$ with $y_2,y_3$
as the other labels. Set $g(x)_{y_1}=3\gamma$, $g(x)_{y_2}=0$ and
$g(x)_{y_3} = -3\gamma$. With these predictions, we have $\sum_y
g(x)_y = 0$ and also that $\ell_{\mchinge}^{3\gamma}(g,(x,y_1))=
0$. On the other hand since $g(x)_{y_2}=\sum_y g(x)_y = 0$, we get:
\begin{align*}
\ell_{\cchinge}^{\tilde{\gamma}}(g,(x,y)) \geq \max\{1 - \tilde{\gamma}^{-1}( \sum_yg(x)_y - g(x)_{y_2}),0\} = 1,
\end{align*}
for any $\tilde{\gamma}$. This proves that the converse cannot be
true.
\end{proof}

\section{Proofs from \pref{sec:minimax}}
\label{app:minimax}

Let us start with an intermediate result, which will simplify the proof of~\pref{thm:chaining_v3}.
\begin{theorem}
    \label{thm:chaining_v2}
  Assume $\nrm*{\ls}_{1}\leq{}1$ for all $\ls\in\cL$\footnote{Measuring loss in $\ls_1$ may seem restrictive, but this is natural when working with importance-weighted losses since these are $1$-sparse, and by duality this enables us to cover in $\ls_{\infty}$ norm on the output space.} and $\sup_{s\in\cS}\nrm*{s}_{\infty}\leq{}1$. Further assume that $\cS$ and $\cL$ are compact. Fix any constants $\eta\in(0,1]$, $\lambda>0$, and $\beta>\alpha>0$. Then there exists an algorithm with the following deterministic regret guarantee:
  {\small
  \begin{align*}
    \sum_{t=1}^{T}\En_{s_t\sim{}p_t}\tri*{
      s, \ls_t} -\inf_{g\in\cG}\sum_{t=1}^{T}\tri*{g(x_t), \ls_t} &\leq{} 
      2\eta\sum_{t=1}^{T}\En_{s_t\sim{}p_t}\tri*{s_t, \ls_t}^{2} +  \frac{4}{\eta}\log\cN_{\infty,\infty}(\beta/2, \cG, T) + 3e^2\alpha\sum_{t=1}^{T}\nrm*{\ls_{t}}_{1}\\
   &~~~~~ + 24e\prn*{\frac{\lambda}{4}\sum_{t=1}^{T}\nrm*{\ls_{t}}_{1}^{2} + \frac{1}{\lambda}}\int_{\alpha}^{\beta}\sqrt{\log\cN_{\infty,\infty}(\veps, \cG, T)}d\veps.
  \end{align*}}
  \end{theorem}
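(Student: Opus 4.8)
This is an adaptive, chaining-based minimax bound, and the plan is to combine the non-constructive minimax machinery of \citet{rakhlin2015sequential,foster2015adaptive} with a multi-scale (Dudley-type) chaining argument carried out ``with an offset''. First I would recast the claim as the assertion that the minimax value of the modified game---in which the learner pays regret minus the entire right-hand side---is nonpositive. With the free constants $\eta,\lambda,\alpha,\beta$ fixed, the only offset coupling to the learner's per-round move $p_t\in\Delta(\cS)$ is $2\eta\sum_t\En_{s_t\sim p_t}\tri*{s_t,\ls_t}^{2}$, which is affine in $p_t$; the offsets are convex in $\ls_t$; and $\cS,\cL$ are compact. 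This is exactly the setting of the sequential minimax swap of \citet{foster2015adaptive}, so $\inf_{p_t}$ and $\sup_{\ls_t}$ may be interchanged at every round. Introducing a ghost sample and Rademacher signs symmetrizes the value into (twice) a quantity of the shape
\[
\sup_{\bx}\,\sup_{\boldsymbol{\ls}}\,\En_{\eps}\brk*{\,\sup_{g\in\cG}\Bigl(\sum_{t=1}^{T}\eps_t\tri*{g(\bx_t(\eps)),\boldsymbol{\ls}_t(\eps)}\Bigr)-(\text{remaining offsets})\,},
\]
with the suprema over $\cX$-valued trees $\bx$ and $\cL$-valued trees $\boldsymbol{\ls}$. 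Releasing the nonnegative local-norm offset in its original played-$s_t$ form recovers the $2\eta\sum_t\En_{s_t\sim p_t}\tri*{s_t,\ls_t}^{2}$ term; the remaining offsets (the $\tfrac{4}{\eta}\log\cN$ term, the $3e^2\alpha\sum\nrm*{\ls_t}_1$ term, and the $\lambda$-weighted Dudley integral) must be manufactured by the chaining bound below.

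The core step is the chaining. Fix $(\bx,\boldsymbol{\ls})$ and dyadic scales $\veps_j=\beta2^{-j}$ from $j=0$ down to the first index with $\veps_j\le\alpha$. For each $g\in\cG$ and path $\eps$, use \pref{def:cover} to select approximating $\RR^K$-valued trees $\bv^{(j)}$ with $\max_t\nrm*{g(\bx_t(\eps))-\bv^{(j)}_t(\eps)}_\infty\le\veps_j$, and telescope $g(\bx_t(\eps))=\bv^{(0)}_t(\eps)+\sum_{j\ge1}\bigl(\bv^{(j)}_t(\eps)-\bv^{(j-1)}_t(\eps)\bigr)+\bigl(g(\bx_t(\eps))-\bv^{(N)}_t(\eps)\bigr)$. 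The coarsest approximation, at scale $\beta/2$, ranges over $\inftycover(\beta/2,\cG,\bx)$ trees; an offset-Hoeffding (sub-Gaussian MGF) inequality together with a union bound controls $\En_{\eps}$ of $\sup$ over this finite set of $\sum_t\eps_t\tri*{\bv^{(0)}_t,\ls_t}$ minus its quadratic offset by $\tfrac{4}{\eta}\log\inftycover(\beta/2,\cG,T)$ (the quadratic part being absorbed into the released local norm). Each increment $\bv^{(j)}-\bv^{(j-1)}$ takes at most $\inftycover(\veps_j,\cG,\bx)^{2}$ values and satisfies $\nrm*{\bv^{(j)}_t-\bv^{(j-1)}_t}_\infty\le3\veps_j$, so by \Holder $\abs*{\tri*{\bv^{(j)}_t-\bv^{(j-1)}_t,\ls_t}}\le3\veps_j\nrm*{\ls_t}_1$ with variance proxy $\sum_t\tri*{\cdot}^{2}\le9\veps_j^{2}\sum_t\nrm*{\ls_t}_1^{2}$; a per-scale offset-Hoeffding bound, leaving the single step $\sqrt{XY}\le\tfrac12(\lambda X+Y/\lambda)$ \emph{unoptimized} in $\lambda$, produces a contribution of order $\bigl(\lambda\sum_t\nrm*{\ls_t}_1^{2}+\tfrac1\lambda\bigr)\veps_j\sqrt{\log\inftycover(\veps_j,\cG,T)}$. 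Summing over $j$ turns $\sum_j(\veps_{j-1}-\veps_j)\sqrt{\log\inftycover(\veps_j,\cG,T)}$ into the upper Riemann sum for $\int_\alpha^\beta\sqrt{\log\inftycover(\veps,\cG,T)}\,d\veps$; the residual $g(\bx_t)-\bv^{(N)}_t$, at most $\alpha$ in $\ell_\infty$, contributes $O(\alpha\sum_t\nrm*{\ls_t}_1)$ by \Holder, and tracking the symmetrization constants converts the explicit numbers into the stated $3e^2$ and $24e$.

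The hard part will be performing the chaining \emph{adaptively}, i.e.\ carrying the quadratic offset through every scale. An ordinary in-expectation chaining bound would yield the Dudley integral but neither the local-norm term nor a deterministic guarantee, so one is forced to use exponential-weights/MGF estimates (rather than a Rademacher contraction) at each scale; moreover the cover elements $\bv^{(j)}$ depend on both $g$ and the path $\eps$, so the increments at different scales are correlated. The sequential---as opposed to pointwise---cover is exactly what lets the union bound be applied path-by-path, but threading the offset bookkeeping so that the coarse-scale term ($\tfrac1\eta\log\cN$ with the local norm), the fine-scale terms (the $\lambda$-parameterized Dudley integral), and the truncation term combine into one clean inequality valid for \emph{every} adversary---with the local norm expressed in the algorithm's own $p_t$---is the delicate point. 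Passing from this normalized regime ($\nrm*{\ls}_1,\nrm*{s}_\infty\le1$) to the general $R,B$ scaling of \pref{thm:chaining_v3} is then a routine rescaling.
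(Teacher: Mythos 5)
Your plan follows the same route as the paper: recast the claim as nonpositivity of the minimax value of an offset game, swap $\inf_{p_t}$ and $\sup_{\ls_t}$ round-by-round via compactness and convexity, symmetrize into a Rademacher tree process carrying the quadratic offsets, handle the coarse scale $\beta/2$ with an offset (Freedman/Hoeffding-type) inequality over the finite sequential cover, chain the remaining scales down to $\alpha$ with per-scale MGF bounds and an unoptimized AM-GM in $\lambda$, and pay $O(\alpha\sum_t\nrm*{\ls_t}_1)$ for the truncation residual. All of that is right and matches the paper's proof in structure and in where each term of the bound comes from.

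There is one genuine gap, at the step you dismiss as "the quadratic part being absorbed into the released local norm." After the rearrangement and symmetrization, the quadratic offset available to you is $-\eta\sum_t\tri*{g(\bx_t(\eps)),\bls_t(\eps)}^{2}$, i.e.\ it is attached to the \emph{function} $g$, whereas the offset inequality over the finite coarse cover needs an offset attached to the \emph{cover element}, $-\eta\sum_t\tri*{\bv^{(0)}_t(\eps),\bls_t(\eps)}^{2}$. These are not comparable for an arbitrary $\veps$-cover: a covering tree can have $\abs*{\tri*{\bv^{(0)}_t,\bls_t}}$ larger than $\abs*{\tri*{g(\bx_t),\bls_t}}$, so the variance of the covered process is not controlled by the offset you released. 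The paper resolves this by building a modified coarse cover whose elements are clipped downward, $\bv_t(\eps)_a=\max\crl*{\bv'_t(\eps)_a-\beta/2,0}$, so that the chosen cover element satisfies $\bv^{(0)}_t[\eps,g]_a\le g(\bx_t(\eps))_a$ coordinate-wise; combined with nonnegativity of $\cS$ and $\cL$ this gives $\tri*{\bv^{(0)}_t,\bls_t}^{2}\le\tri*{g(\bx_t),\bls_t}^{2}$ pointwise on every path, and only then can the offset be transferred to the finite class. Your proposal never constructs such a dominated cover and never invokes the nonnegativity of $\cS\subseteq\bbR^{K}_{+}$ that makes it work, so as written the coarse-scale bound $\tfrac{4}{\eta}\log\cN_{\infty,\infty}(\beta/2,\cG,T)$ does not follow. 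A secondary, smaller omission: the symmetrization must be done on $H(x)=x-\eta x^{2}$ (keeping half the quadratic un-signed as an offset), and the Lipschitz constant of $H$ on $\brk*{-1,1}$ is what propagates through the chaining constants; your sketch symmetrizes the linear process only.
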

  The difference here is that have set $R,B=1$. The first part of this
  section will be devoted to proving this theorem, and \pref{thm:chaining_v3} will follow from this result via \pref{corr:full_info_scaled}.

\subsection{Preliminaries}

\begin{definition}[Cover for a collection of trees]
  \label{def:cover_tree}
  For a collection of $\bbR^K$-valued trees $U$ of length $T$, we let $\inftycover(\veps, U)$, denote the cardinality of the smallest set $V$ of $\bbR^K$ valued trees for which
  \[
    \forall{}\bu\in{}U\;\forall{}\eps\in\pmo^{T}\;\exists{}\bv\in{}V\textnormal{ s.t. }\max_{t\in\brk*{T}}\nrm*{\bu_{t}(\eps) - \bv_{t}(\eps)}_{\infty}\leq\veps.
  \]
\end{definition}
\begin{definition}[$L_{\infty}/\ls_{\infty}$ radius]
For a function class $\cF$, define \[\mathrm{rad}_{\infty,\infty}(\cF,T) = \min\crl*{\veps\mid{}\log\cN_{\infty,\infty}(\veps,\cF,T)=0}.\] For a collection $U$ of trees, define $\mathrm{rad}_{\infty,\infty}(U) = \min\crl*{\veps\mid{}\log\cN_{\infty,\infty}(\veps,U)=0}$.
\end{definition}

The following two lemmas are Freedman-type inequalities for Rademacher
tree processes that we will use in the sequel. The first has an
explicit dependence on the range, while the second does not.
  \begin{lemma}
  \label{lem:offset_finite_R}
  For any collection of $\brk*{-R,+R}$-valued trees $V$ of length $T$, for any $\eta>0$ and $\alpha>0$,
\begin{align*}
\En_{\eps}\sup_{\bv\in{}V}\brk*{
      \sum_{t=1}^{T}\eps_{t}\prn*{\bv_{t}(\eps) - \eta{}\bv_{t}^{2}(\eps)}   - \alpha\eta{}\bv_{t}^{2}(\eps)
      } \leq{} 2\log\abs*{V}\cdot{}\prn*{\frac{1}{\alpha\eta}\vee\frac{\eta{}R^{2}}{\alpha}}.
\end{align*}

\end{lemma}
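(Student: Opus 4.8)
The plan is to run the standard exponential-moment (Chernoff / soft-max) argument for offset Rademacher complexities, exploiting the fact that a fixed tree $\bv$ induces a martingale when evaluated along a uniformly random path $\eps$. Write $Z^{\bv} \defeq \sum_{t=1}^{T}\prn*{\eps_{t}\bv_{t}(\eps) - \eta\eps_{t}\bv_{t}^{2}(\eps) - \alpha\eta\bv_{t}^{2}(\eps)}$ for the bracketed quantity, and fix a parameter $\lambda>0$ to be chosen at the end. First I would pass to the moment generating function via the elementary bound $\sup_{\bv\in V}Z^{\bv}\le \frac{1}{\lambda}\log\sum_{\bv\in V}\exp\prn*{\lambda Z^{\bv}}$, take expectations, and use Jensen together with linearity to get
\[
  \En_{\eps}\sup_{\bv\in V}Z^{\bv}\ \le\ \frac{1}{\lambda}\log\sum_{\bv\in V}\En_{\eps}\exp\prn*{\lambda Z^{\bv}}.
\]

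The core of the argument is then a per-tree bound: for each fixed $\bv$, I would show $\En_{\eps}\exp(\lambda Z^{\bv})\le 1$ whenever $\lambda$ is taken small enough. This is done by peeling off one level of the tree at a time, working backward from $t=T$ to $t=1$. Letting $\cF_{t}=\sigma(\eps_{1},\dots,\eps_{t})$ and using that $\bv_{t}(\eps)=\bv_{t}(\eps_{1:t-1})$ is $\cF_{t-1}$-measurable, conditioning on $\cF_{T-1}$ freezes $v\defeq \bv_{T}(\eps_{1:T-1})$ and gives, via $\En_{\eps_{T}}e^{\lambda\eps_{T}x}=\cosh(\lambda x)\le e^{\lambda^{2}x^{2}/2}$,
\[
  \En\brk*{\exp\prn*{\lambda\prn*{\eps_{T}v-\eta\eps_{T}v^{2}-\alpha\eta v^{2}}}\,\big|\,\cF_{T-1}}\ =\ e^{-\lambda\alpha\eta v^{2}}\cosh\prn*{\lambda(v-\eta v^{2})}\ \le\ \exp\prn*{-\lambda\alpha\eta v^{2}+\tfrac{\lambda^{2}}{2}v^{2}(1-\eta v)^{2}}.
\]
Since $\abs*{v}\le R$ we have $(1-\eta v)^{2}\le(1+\eta R)^{2}$, so the right-hand exponent is $\le 0$ as soon as $\lambda(1+\eta R)^{2}\le 2\alpha\eta$; in that regime the conditional MGF factor is $\le 1$, and iterating the tower property from $t=T$ down to $t=1$ yields $\En_{\eps}\exp(\lambda Z^{\bv})\le 1$.

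To finish, I would take $\lambda=2\alpha\eta/(1+\eta R)^{2}$, which is admissible, giving $\En_{\eps}\sup_{\bv}Z^{\bv}\le\frac{\log\abs*{V}}{\lambda}=\frac{(1+\eta R)^{2}}{2\alpha\eta}\log\abs*{V}$. The claimed form then follows from $2\eta R\le 1+\eta^{2}R^{2}$, hence $(1+\eta R)^{2}\le 2(1+\eta^{2}R^{2})$ and $\frac{(1+\eta R)^{2}}{2\alpha\eta}\le\frac{1}{\alpha\eta}+\frac{\eta R^{2}}{\alpha}\le 2\prn*{\frac{1}{\alpha\eta}\vee\frac{\eta R^{2}}{\alpha}}$. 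The only step requiring genuine care is making the backward peeling rigorous—spelling out that the node value at level $t$ depends only on $\eps_{1:t-1}$ so that conditioning on $\cF_{t-1}$ indeed freezes it—together with the trivial but sign-sensitive pointwise check that $\tfrac{\lambda^{2}}{2}(v-\eta v^{2})^{2}\le\lambda\alpha\eta v^{2}$ over $\abs*{v}\le R$; the rest is bookkeeping.
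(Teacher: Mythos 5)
Your proposal is correct and follows essentially the same route as the paper's proof: the soft-max/Chernoff bound over the finite collection, the conditional Rademacher MGF bound peeled backward from $t=T$, and a pointwise check that the per-step exponent is nonpositive for a suitable $\lambda$. The only difference is cosmetic — you bound $(v-\eta v^2)^2$ by $v^2(1+\eta R)^2$ and convert to the stated form at the end, whereas the paper bounds it by $2(1+\eta^2R^2)v^2$ and picks $\lambda=\tfrac{1}{2}\min\{\alpha\eta,\alpha/(\eta R^2)\}$ directly; both yield the claimed constant.
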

\begin{proof}[\pfref{lem:offset_finite_R}]
  Take $V$ to be finite without loss of generality (otherwise the bound is vacuous). As a starting point, for any $\lambda>0$ we have
  \begin{align*}
    &\En_{\eps}\sup_{\bv\in{}V}\brk*{
      \sum_{t=1}^{T}\eps_{t}\prn*{\bv_{t}(\eps) - \eta{}\bv_{t}^{2}(\eps)}   - \alpha\eta{}\bv_t^{2}(\eps)
      } \\
    &\leq\frac{1}{\lambda}\log\prn*{\sum_{\bv\in{}V}\En_{\eps}\exp\prn*{
      \sum_{t=1}^{T}\eps_{t}\lambda\prn*{\bv_{t}(\eps) - \eta{}\bv_{t}^{2}(\eps)}   - \lambda\alpha\eta{}\bv_{t}^{2}(\eps)
      }}.
      \intertext{Applying the standard Rademacher mgf bound $\En_{\eps}e^{\lambda\eps}\leq{}e^{\frac{1}{2}\lambda^{2}}$ conditionally at each time starting from $t=T$, this is upper bounded by}
    &\leq\frac{1}{\lambda}\log\prn*{\sum_{\bv\in{}V}\max_{\eps}\exp\prn*{
      \sum_{t=1}^{T}\frac{1}{2}\lambda^{2}\prn*{\bv_{t}(\eps) - \eta{}\bv_{t}^{2}(\eps)}^{2}   - \lambda\alpha\eta{}\bv_{t}^{2}(\eps)
      }}.
  \end{align*}
  Since $v$ takes values in $\brk*{-R,+R}$, the exponent at time $t$ can be upper bounded as 
  \[
    \frac{1}{2}\lambda^{2}\prn*{\bv_{t}(\eps) - \eta{}\bv_{t}^{2}(\eps)}^{2}   - \lambda\alpha\eta{}\bv_{t}^{2}(\eps)
    \leq{} \lambda^{2}\prn*{1+\eta^{2}R^{2}}\bv_t^{2}(\eps)   - \lambda\alpha\eta{}\bv_{t}^{2}(\eps).
  \]
By setting $\lambda=\frac{1}{2}\min\crl*{\alpha\eta,\alpha/(\eta{}R^{2})}$, this is bounded by zero, which leads to a final bound of $\log\abs*{V}/\lambda$.  
\end{proof}

  \begin{lemma}
  \label{lem:offset_finite_simple}
  For any collection of trees $V$ of length $T$, for any $\eta>0$,
\begin{align*}
\En_{\eps}\sup_{\bv\in{}V}\brk*{
      \sum_{t=1}^{T}\eps_{t}\bv_{t}(\eps)   - \eta{}\bv_{t}^{2}(\eps)
      } \leq{} \frac{\log\abs*{V}}{2\eta}.
\end{align*}

\end{lemma}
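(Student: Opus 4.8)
The plan is to use the standard exponential-moment (soft-max) argument for offset Rademacher processes, peeling the tree one level at a time starting from the last coordinate, exactly as in the proof of \pref{lem:offset_finite_R} but without the range term. First I would reduce to the case in which $V$ is finite, since otherwise the right-hand side is infinite and there is nothing to prove. Then, for any $\lambda>0$, I would bound the supremum by a log-sum-exp:
\[
\En_{\eps}\sup_{\bv\in{}V}\brk*{\sum_{t=1}^{T}\eps_{t}\bv_{t}(\eps) - \eta{}\bv_{t}^{2}(\eps)} \leq \frac{1}{\lambda}\log\prn*{\sum_{\bv\in{}V}\En_{\eps}\exp\prn*{\lambda\sum_{t=1}^{T}\eps_{t}\bv_{t}(\eps) - \lambda\eta\bv_{t}^{2}(\eps)}}.
\]

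Next I would evaluate the inner expectation by conditioning successively on $\eps_{1:t-1}$ and integrating over $\eps_t$, starting at $t=T$ and moving down to $t=1$. The key structural point is that in a tree $\bv$ the label $\bv_t(\eps)$ depends only on $\eps_{1:t-1}$, so at step $t$ the quantity $\bv_t(\eps)$ is a constant with respect to $\eps_t$ and the only $\eps_t$-dependence remaining in the exponent is the linear term $\lambda\eps_t\bv_t(\eps)$. Applying the Rademacher moment generating function bound $\En_{\eps_t}e^{\lambda\eps_t\bv_t(\eps)}\leq e^{\frac{1}{2}\lambda^{2}\bv_t^{2}(\eps)}$ turns the contribution of level $t$ into $\exp\prn*{\frac{1}{2}\lambda^{2}\bv_t^{2}(\eps)-\lambda\eta\bv_t^{2}(\eps)}$. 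Iterating this down to $t=1$ shows that each $\En_{\eps}\exp(\cdots)$ is at most $\max_{\eps}\exp\prn*{\sum_{t=1}^{T}\prn*{\tfrac{1}{2}\lambda^{2}-\lambda\eta}\bv_t^{2}(\eps)}$.

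Finally, choosing $\lambda=2\eta$ makes the coefficient $\tfrac{1}{2}\lambda^{2}-\lambda\eta$ vanish, so every summand is at most $1$, the sum over $V$ is at most $\abs*{V}$, and the overall bound collapses to $\log\abs*{V}/\lambda=\log\abs*{V}/(2\eta)$, as claimed. The only step that requires genuine care is the top-down conditioning: one must check that the tree structure forces $\bv_t(\eps)$ to be $\eps_{1:t-1}$-measurable, so that it behaves as a constant when integrating over $\eps_t$ and the one-dimensional Rademacher MGF bound applies cleanly at each level; everything else is a routine computation.
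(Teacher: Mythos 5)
Your proposal is correct and follows essentially the same argument as the paper's proof: reduce to finite $V$, apply the log-sum-exp bound with parameter $\lambda$, peel off the Rademacher MGF level by level using that $\bv_t(\eps)$ depends only on $\eps_{1:t-1}$, and set $\lambda=2\eta$ so that the exponent $\tfrac{1}{2}\lambda^2\bv_t^2(\eps)-\lambda\eta\bv_t^2(\eps)$ vanishes, giving $\log\abs*{V}/(2\eta)$. No gaps.
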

\begin{proof}[\pfref{lem:offset_finite_simple}]
  Take $V$ to be finite without loss of generality. As in the proof
  of~\pref{lem:offset_finite_R}, using the standard Rademacher mgf
  bound and working backward from $T$, for any $\lambda>0$ we have
  \begin{align*}
    \En_{\eps}\sup_{\bv\in{}V}\brk*{
      \sum_{t=1}^{n}\eps_{t}\bv_{t}(\eps)   - \eta{}\bv_{t}^{2}(\eps)
      } 
    &\leq\frac{1}{\lambda}\log\prn*{\sum_{\bv\in{}V}\En_{\eps}\exp\prn*{
      \sum_{t=1}^{n}\eps_{t}\lambda\bv_{t}(\eps)   - \eta{}\lambda\bv_{t}^{2}(\eps)
      }}\\
    &\leq\frac{1}{\lambda}\log\prn*{\sum_{\bv\in{}V}\max_{\eps}\exp\prn*{
      \sum_{t=1}^{n}\frac{1}{2}\lambda^{2}\bv_{t}(\eps)^{2}   - \eta{}\lambda\bv_{t}^{2}(\eps)
      }}.
  \end{align*}
The exponent at time $t$ is
  \[
    \frac{1}{2}\lambda^{2}\bv_{t}^{2}(\eps)   - \eta{}\lambda\bv_{t}^{2}(\eps).
  \]
By setting $\lambda=2\eta$, this is exactly zero, which leads to a final bound of $\log\abs*{V}/\lambda$.  
\end{proof}

\begin{lemma}
\label{lem:symmetrize}
Let $\cZ$, $\cW$, and $\cG$ be abstract sets and let functions $A_{g}:\cW\times{}\cZ\times{}\cZ\to{}\bbR$ and $B_{g}:\cW\times{}\cZ\times{}\cZ\to\bbR$ be given for each element $g\in\cG$. Suppose that for any $z,z'\in\cZ$  and $w\in\cW$ it holds that $A(w, z,z')=-A(w, z',z)$ and $B(w, z,z')=B(w, z',z)$.
Then
\begin{align}
&\dtri*{\sup_{w_t\in\cW}\sup_{q_{t}\in\Delta(\cZ)}\En_{z_t,z'_t\sim{}q_t}}_{t=1}^{T}\sup_{g\in\cG}\sum_{t=1}^{T}A_g(w_t, z_t,z'_t) + B_g(w_t,z_t,z'_t)
\\&\leq{} \dtri*{\sup_{w_t\in\cW}\sup_{q_{t}\in\Delta(\cZ)}\En_{\eps_t}\En_{z_t,z'_t\sim{}q_t}}_{t=1}^{T}\sup_{g\in\cG}\sum_{t=1}^{T}\eps_tA_g(w_t, z_t,z'_t) + B_g(w_t,z_t,z'_t),
\end{align}
where $\eps$ is a sequence of independent Rademacher random variables.
\end{lemma}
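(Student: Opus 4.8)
The plan is to prove the stronger fact that the two sides are actually \emph{equal} (the claimed $\leq$ is then immediate), via the standard ``swap trick'' behind symmetrization in sequential complexity (cf.~\citet{rakhlin2015sequential}): introduce the Rademacher variables $\eps_1,\ldots,\eps_T$ one at a time, processing the time indices backward from $t=T$ down to $t=1$. At each step I would use that, conditionally on everything before time $t$, the two samples $z_t,z_t'$ are i.i.d.\ draws from the same $q_t$, hence exchangeable, so swapping them inside a nested expectation leaves its value unchanged.

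Concretely, fix $t$ and condition on $w_{1:t}$, $z_{1:t-1}$, $z_{1:t-1}'$. After the signs $\eps_{t+1},\ldots,\eps_T$ have already been inserted (induction hypothesis), the level-$t$ expression has the form $\sup_{w_t\in\cW}\sup_{q_t\in\Delta(\cZ)}\En_{z_t,z_t'\sim q_t}\Phi$, where $\Phi=\Phi(w_{1:t},z_{1:t},z_{1:t}')$ is built from the nested operators for times $>t$, the supremum over $g$, and the sum $\sum_{s}A_g(w_s,z_s,z_s')+B_g(w_s,z_s,z_s')$ (with the summands for $s>t$ already carrying their $\eps_s$ inside expectations). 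Since the time-$s$ operator and summand depend only on $(w_s,z_s,z_s')$, swapping $z_t\leftrightarrow z_t'$ in $\Phi$ changes \emph{only} the summand at index $t$; by antisymmetry $A_g(w_t,z_t',z_t)=-A_g(w_t,z_t,z_t')$ and by symmetry $B_g(w_t,z_t',z_t)=B_g(w_t,z_t,z_t')$. Writing $\Phi^{\eps_t}$ for $\Phi$ with $A_g(w_t,z_t,z_t')$ replaced by $\eps_tA_g(w_t,z_t,z_t')$, this gives $\Phi^{-1}(\ldots,z_t,z_t',\ldots)=\Phi(\ldots,z_t',z_t,\ldots)$ pointwise, so exchangeability yields $\En_{z_t,z_t'\sim q_t}\Phi^{-1}=\En_{z_t,z_t'\sim q_t}\Phi=\En_{z_t,z_t'\sim q_t}\Phi^{+1}$, and averaging over $\eps_t\in\pmo$ gives the identity $\En_{\eps_t}\En_{z_t,z_t'\sim q_t}\Phi^{\eps_t}=\En_{z_t,z_t'\sim q_t}\Phi$. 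Since $q_t$ and $w_t$ do not depend on $\eps_t$, the freshly introduced $\En_{\eps_t}$ can be slotted in between $\sup_{q_t}$ and $\En_{z_t,z_t'}$; substituting the identity back into the operator string leaves the overall value unchanged. Iterating from $t=T$ to $t=1$ transforms the left-hand side into the right-hand side verbatim.

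The step I expect to require the most care is the bookkeeping around the nested operator $\dtri{\,\cdot\,}$: one must check that after inserting $\eps_{t+1},\ldots,\eps_T$, the inner object $\Phi$ still depends on $(z_t,z_t')$ only through the single summand $A_g(w_t,\cdot)+B_g(w_t,\cdot)$, so that the swap argument stays ``local'' to index $t$ and the induction hypothesis applies unchanged. This locality is exactly what the additive structure $\sum_t A_g+B_g$ and the per-coordinate form of the operators buy us, but it is the place where one has to be precise rather than wave hands. Everything else is a one-line exchangeability computation applied $T$ times.
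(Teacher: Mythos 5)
Your proof is correct and is essentially the same argument the paper relies on: the paper's proof simply cites Lemma 3 of \citet{rakhlin2010online}, which is exactly this backward-in-time induction using exchangeability of the i.i.d.\ pair $(z_t,z_t')$ under $q_t$, the antisymmetry of $A_g$ and symmetry of $B_g$ to localize the swap to the index-$t$ summand, and averaging over $\eps_t$. Your observation that the step actually yields equality (not just $\leq$) is also consistent with how the paper later invokes the lemma (``with equality'').
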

See \pref{ssec:chaining_proof} for a discussion of the $\dtri*{\star}$ notation used in the above lemma statement.
\begin{proof}[\pfref{lem:symmetrize}]
See proof of Lemma 3 in \cite{rakhlin2010online}.
\end{proof}

\subsection{Proof of \pref{thm:chaining_v2}}
\label{ssec:chaining_proof}
Before proceeding, we note that this proof uses a number of techniques which are now somewhat standard in minimax analysis of online learning, and the reader may wish to refer to, e.g., \cite{RakSriTew14jmlr} for a comprehensive introduction to this type of analysis.

  Let $\eta_1,\eta_2,\eta_3>0$ be fixed constants to be chosen later in the proof, and define
  \[
    B(p_{1:T}, \ls_{1:T}) \triangleq \underbrace{\eta_{1}\sum_{t=1}^{T}\nrm*{\ls_{t}}_{1} + \eta_{2}\sum_{t=1}^{T}\nrm*{\ls_{t}}_{1}^{2}}_{\triangleq B_{1}(\ls_{1:T})} + \underbrace{2\eta_{3}\sum_{t=1}^{T}\En_{s\sim{}p_t}\tri*{s, \ls_t}^{2}}_{\triangleq B_{2}(p_{1:T},\ls_{1:T})}.
  \]
  We consider a game where the goal of the learner is to achieve
  regret bounded by $B$, plus some additive constant that will depend
  on $\eta_1,\eta_2,\eta_3$, and the complexity of the class $\cF$.
  The value of the game is given by:
\begin{align*}
\cV \triangleq \dtri*{\sup_{x_t\in\cX}\inf_{p_t\in\Delta(\cS)}\sup_{\ls_t\in\cL}\En_{s\sim p_t}}_{t=1}^{T}\brk*{
\sum_{t=1}^{T}\tri*{s, \ls_t} -\inf_{g\in\cG}\sum_{t=1}^{n}\tri*{g(x_t), \ls_t}- B(p_{1:T}, \ls_{1:T})
}.
\end{align*}
Here we are using the notation $\dtri{\star}_{t=1}^{T}$ to denote sequential application of the operator $\star$ (indexed by $t$) from time $t=1,\ldots,T$, following e.g. \citep{RakSriTew14jmlr}. This notation means that first the adversary
chooses $x_1$, then the learner chooses $p_1$, and then the adversary chooses
$\ell_1$ while the learner samples $s_1$ and suffers the loss
$\langle s_1,\ell_1\rangle$. Then we proceed to round $2$ and so on,
so that the learner is trying to minimize the (offset) regret after $T$
rounds while the adversary is trying to maximize it. If we show that
$\cV\leq{}C$ for some constant $C$ then we have established existence of a
randomized strategy that achieves an adaptive regret bound of
$B(\cdot)+C$.  See~\citet{foster2015adaptive} for a more extensive discussion of this principle.

\subsubsection{Minimax swap}

At time $t$ the value to go is given by
{\small
\begin{align*}
\sup_{x_t\in\cX}\inf_{p_t\in\Delta(\cS)}\sup_{\ls_t\in\cL}\biggl[&
\En_{s\sim{}p_t}\tri*{s, \ls_t} - 2\eta_3\En_{s\sim{}p_t}\tri*{s,\ls_t}^{2} - \eta_1\nrm*{\ls_t}_1 - \eta_2\nrm*{\ls_t}_{1}^{2} \\
&+ 
\dtri*{\sup_{x_\tau\in\cX}\inf_{p_\tau\in\Delta(\cS)}\sup_{\ls_\tau\in\cL}}_{\tau=t+1}^{T}\brk*{
\sum_{\tau=t+1}^{T}\En_{s\sim{}p_\tau}\tri*{s, \ls_\tau} -\inf_{g\in\cG}\sum_{\tau=1}^{T}\tri*{g(x_\tau), \ls_\tau}- B(p_{\tau+1:T}, \ls_{\tau+1:T})
}
\biggr].
\end{align*}
}Note that the benchmark's loss is only evaluated at the end, while we
are incorporating the adaptive term into the instantaneous value.
Convexifying the $\ls_t$ player by allowing them to select a randomized strategy $q_t$, this is equal to
{\small
  \begin{align*}
\sup_{x_t\in\cX}\inf_{p_t\in\Delta(\cS)}\sup_{q_t\in\Delta(\cL)}\En_{\ls_t\sim{}q_t}\biggl[&
\En_{s\sim{}p_t}\tri*{s, \ls_t} - 2\eta_3\En_{s\sim{}p_t}\tri*{s,\ls_t}^{2} - \eta_1\nrm*{\ls_t}_1 - \eta_2\nrm*{\ls_t}_{1}^{2} \\
&+ 
\dtri*{\sup_{x_\tau\in\cX}\inf_{p_\tau\in\Delta(\cS)}\sup_{\ls_\tau\in\cL}}_{\tau=t+1}^{T}\brk*{
\sum_{\tau=t+1}^{T}\En_{s\sim{}p_\tau}\tri*{s, \ls_\tau} -\inf_{g\in\cG}\sum_{\tau=1}^{T}\tri*{g(x_\tau), \ls_\tau}- B(p_{\tau+1:T}, \ls_{\tau+1:T})
}
\biggr].
  \end{align*}
}This quantity is convex in $p_{t}$ and linear in $q_{t}$ so, under
the compactness assumption on $\cS$ and $\cL$, the minimax theorem implies that this is equal to
{\small
\begin{align*}
\sup_{x_t\in\cX}\sup_{q_t\in\Delta(\cL)}\inf_{p_t\in\Delta(\cS)}\En_{\ls_t\sim{}q_t}\biggl[&
\En_{s\sim{}p_t}\tri*{s, \ls_t} - 2\eta_3\En_{s\sim{}p_t}\tri*{s,\ls_t}^{2} - \eta_1\nrm*{\ls_t}_1 - \eta_2\nrm*{\ls_t}_{1}^{2} \\
&+ 
\dtri*{\sup_{x_\tau\in\cX}\inf_{p_\tau\in\Delta(\cS)}\sup_{\ls_\tau\in\cL}}_{\tau=t+1}^{T}\brk*{
\sum_{\tau=t+1}^{T}\En_{s\sim{}p_{\tau}}\tri*{s, \ls_\tau} -\inf_{g\in\cG}\sum_{\tau=1}^{T}\tri*{g(x_\tau), \ls_\tau}- B(p_{\tau+1:T}, \ls_{\tau+1:T})
} \biggr].
\end{align*}
}Repeating this analysis at each timestep and expanding the terms from $B_2$, we arrive at the expression
{\small
\begin{align*}
  \cV = \dtri*{\sup_{x_t\in\cX}\sup_{q_t\in\Delta(\cL)}\inf_{p_t\in\Delta(\cS)}\En_{\ls_t\sim{}q_t}}_{t=1}^{T}\brk*{
    \sum_{t=1}^{T}\En_{s\sim{}p_{t}}\brk*{\tri*{s, \ls_t} - 2\eta_3\tri*{s, \ls_t}^{2}} -\inf_{g\in\cG}\sum_{t=1}^{T}\tri*{g(x_t), \ls_t} -B_1(\ls_{1:T})}.
\end{align*}}
\subsubsection{Upper bound by martingale process}
We now use a standard ``rearrangement'' trick (see \citep{RakSriTew14jmlr}, Theorem 1) to show that 
{\small
\begin{align*}
  \cV&= \dtri*{\sup_{x_t\in\cX}\sup_{q_t\in\Delta(\cL)}\En_{\ls_t\sim{}q_t}}_{t=1}^{T}\brk*{\sup_{g\in\cG}\brk*{
    \sum_{t=1}^{T}\inf_{p_t\in\Delta(\cS)}\En_{s\sim{}p_{t}}\En_{\ls'_t\sim{}q_t}\brk*{\tri*{s, \ls'_t} - 2\eta_3\tri*{s, \ls'_t}^{2}} -  \sum_{t=1}^{T}\tri*{f(x_t), \ls_t}
                     } -B_1(\ls_{1:T})},
                     \end{align*}
}where $\ls'_{1:T}$ is a sequence of ``tangent'' samples, where $\ls'_t$ is an independent copy of $\ls_t$ conditioned on $\ls_{1:t-1}$. This can be seen by working backwards from time $T$. Indeed, at time $T$, expanding the $\dtri*{\star}_{t=1}^{T}$ operator, we have
{\small
                      \begin{align*}
  \cV = \dtri*{\cdots}_{t=1}^{T-1}
  \sup_{x_T\in\cX}\sup_{q_T\in\Delta(\cL)}\inf_{p_T\in\Delta(\cS)}\En_{\ls_T\sim{}q_T}
    \biggl[&\sum_{t=1}^{T-1}\En_{s\sim{}p_{t}}\brk*{\tri*{s, \ls_t} - 2\eta_3\tri*{s, \ls_t}^{2}}
    + \En_{s\sim{}p_{T}}\brk*{\tri*{s, \ls_T} - 2\eta_3\tri*{s, \ls_T}^{2}}  \\
    &-\inf_{g\in\cG}\sum_{t=1}^{T}\tri*{g(x_t), \ls_t} -B_1(\ls_{1:T}) \biggr].
\end{align*}
}Using linearity of expectation:
{\small
                      \begin{align*}
= \dtri*{\cdots}_{t=1}^{T-1}
  \sup_{x_T\in\cX}\sup_{q_T\in\Delta(\cL)}\inf_{p_T\in\Delta(\cS)}\En_{\ls_T\sim{}q_T}
    \biggl[&\sum_{t=1}^{T-1}\En_{s\sim{}p_{t}}\brk*{\tri*{s, \ls_t} - 2\eta_3\tri*{s, \ls_t}^{2}}
    + \En_{\ls'_T\sim{}q_T}\En_{s\sim{}p_{T}}\brk*{\tri*{s, \ls'_T} - 2\eta_3\tri*{s, \ls'_T}^{2}}  \\
    &-\inf_{g\in\cG}\sum_{t=1}^{T}\tri*{g(x_t), \ls_t} -B_1(\ls_{1:T}) \biggr].
\end{align*}
}Using that only a single term has functional dependence on $p_{T}$:
{\small
                      \begin{align*}
= \dtri*{\cdots}_{t=1}^{T-1}
  \sup_{x_T\in\cX}\sup_{q_T\in\Delta(\cL)}\En_{\ls_T\sim{}q_T}
    \biggl[&\sum_{t=1}^{T-1}\En_{s\sim{}p_{t}}\brk*{\tri*{s, \ls_t} - 2\eta_3\tri*{s, \ls_t}^{2}}
    + \inf_{p_T\in\Delta(\cS)}\En_{\ls'_T\sim{}q_T}\En_{s\sim{}p_{T}}\brk*{\tri*{s, \ls'_T} - 2\eta_3\tri*{s, \ls'_T}^{2}}  \\
    &-\inf_{g\in\cG}\sum_{t=1}^{T}\tri*{g(x_t), \ls_t} -B_1(\ls_{1:T}) \biggr].
\end{align*}
}Expanding the infimum over $g\in\cG$:
{\small
                      \begin{align*}
= \dtri*{\cdots}_{t=1}^{T-1}
  \sup_{x_T\in\cX}\sup_{q_T\in\Delta(\cL)}\En_{\ls_T\sim{}q_T}
    \sup_{g\in\cG}\biggl[&\sum_{t=1}^{T-1}\En_{s\sim{}p_{t}}\brk*{\tri*{s, \ls_t} - 2\eta_3\tri*{s, \ls_t}^{2}}
    + \inf_{p_T\in\Delta(\cS)}\En_{\ls'_T\sim{}q_T}\En_{s\sim{}p_{T}}\brk*{\tri*{s, \ls'_T} - 2\eta_3\tri*{s, \ls'_T}^{2}}  \\
    &-\sum_{t=1}^{T}\tri*{g(x_t), \ls_t} -B_1(\ls_{1:T}) \biggr].
\end{align*}
}We handle time $T-1$ in a similar fashion by first splitting the $\dtri*{\star}_{t=1}^{T-1}$ operator:
{\small                      \begin{align*}
= \dtri*{\cdots}_{t=1}^{T-2}&
  \sup_{x_{T-1}\in\cX}\sup_{q_{T-1}\in\Delta(\cL)}\inf_{p_{T-1}\in\Delta(\cS)}\En_{\ls_{T-1}\sim{}q_{T-1}}\sup_{x_T\in\cX}\sup_{q_T\in\Delta(\cL)}\En_{\ls_T\sim{}q_T}\\
    \sup_{g\in\cG}\biggl[&\sum_{t=1}^{T-2}\En_{s\sim{}p_{t}}\brk*{\tri*{s, \ls_t} - 2\eta_3\tri*{s, \ls_t}^{2}}
       + \En_{s\sim{}p_{T-1}}\brk*{\tri*{s, \ls_{T-1}} - 2\eta_3\tri*{s, \ls_{T-1}}^{2}}\\
    &+ \inf_{p_T\in\Delta(\cS)}\En_{\ls'_T\sim{}q_T}\En_{s\sim{}p_{T}}\brk*{\tri*{s, \ls'_T} - 2\eta_3\tri*{s, \ls'_T}^{2}}-\sum_{t=1}^{T}\tri*{g(x_t), \ls_t} -B_1(\ls_{1:T}) \biggr].
\end{align*}
}Rearranging the supremums to make dependence on terms from time $T-1$ clear:
{\small
\begin{align*}
= \dtri*{\cdots}_{t=1}^{T-2}&
  \sup_{x_{T-1}\in\cX}\sup_{q_{T-1}\in\Delta(\cL)}\inf_{p_{T-1}\in\Delta(\cS)}\En_{\ls_{T-1}\sim{}q_{T-1}}\\
    \biggl[&\sum_{t=1}^{T-2}\En_{s\sim{}p_{t}}\brk*{\tri*{s, \ls_t} - 2\eta_3\tri*{s, \ls_t}^{2}}
       + \En_{s\sim{}p_{T-1}}\brk*{\tri*{s, \ls_{T-1}} - 2\eta_3\tri*{s, \ls_{T-1}}^{2}}\\
    &+ \sup_{x_T\in\cX}\sup_{q_T\in\Delta(\cL)}\En_{\ls_T\sim{}q_T}\sup_{g\in\cG}\brk*{\inf_{p_T\in\Delta(\cS)}\En_{\ls'_T\sim{}q_T}\En_{s\sim{}p_{T}}\brk*{\tri*{s, \ls'_T} - 2\eta_3\tri*{s, \ls'_T}^{2}}-\sum_{t=1}^{T}\tri*{g(x_t), \ls_t} -B_1(\ls_{1:T})} \biggr].
\end{align*}
}Using linearity of expectation and moving the infimum over $q_{T-1}$:
{\small
\begin{align*}
= \dtri*{\cdots}_{t=1}^{T-2}&
  \sup_{x_{T-1}\in\cX}\sup_{q_{T-1}\in\Delta(\cL)}\En_{\ls_{T-1}\sim{}q_{T-1}}\\
    \biggl[&\sum_{t=1}^{T-2}\En_{s\sim{}p_{t}}\brk*{\tri*{s, \ls_t} - 2\eta_3\tri*{s, \ls_t}^{2}}
       + \inf_{p_{T-1}\in\Delta(\cS)}\En_{\ls'_{T-1}\sim{}q_{T-1}}\En_{s\sim{}p_{T-1}}\brk*{\tri*{s, \ls'_{T-1}} - 2\eta_3\tri*{s, \ls'_{T-1}}^{2}}\\
    &+ \sup_{x_T\in\cX}\sup_{q_T\in\Delta(\cL)}\En_{\ls_T\sim{}q_T}\sup_{g\in\cG}\brk*{\inf_{p_T\in\Delta(\cS)}\En_{\ls'_T\sim{}q_T}\En_{s\sim{}p_{T}}\brk*{\tri*{s, \ls'_T} - 2\eta_3\tri*{s, \ls'_T}^{2}}-\sum_{t=1}^{T}\tri*{g(x_t), \ls_t} -B_1(\ls_{1:T})} \biggr].
\end{align*}
}The last step is to move the supremums from time $t=T$ and the supremum over $g\in\cG$ outside the entire expression, similar to what was done at time $t=T$.
{\small
\begin{align*}
= \dtri*{\cdots}_{t=1}^{T-2}&
  \sup_{x_{T-1}\in\cX}\sup_{q_{T-1}\in\Delta(\cL)}\En_{\ls_{T-1}\sim{}q_{T-1}}\sup_{x_T\in\cX}\sup_{q_T\in\Delta(\cL)}\En_{\ls_T\sim{}q_T}\sup_{g\in\cG}\\
    \biggl[&\sum_{t=1}^{T-2}\En_{s\sim{}p_{t}}\brk*{\tri*{s, \ls_t} - 2\eta_3\tri*{s, \ls_t}^{2}}
       + \inf_{p_{T-1}\in\Delta(\cS)}\En_{\ls'_{T-1}\sim{}q_{T-1}}\En_{s\sim{}p_{T-1}}\brk*{\tri*{s, \ls'_{T-1}} - 2\eta_3\tri*{s, \ls'_{T-1}}^{2}}\\
    &+ \inf_{p_T\in\Delta(\cS)}\En_{\ls'_T\sim{}q_T}\En_{s\sim{}p_{T}}\brk*{\tri*{s, \ls'_T} - 2\eta_3\tri*{s, \ls'_T}^{2}}-\sum_{t=1}^{T}\tri*{g(x_t), \ls_t} -B_1(\ls_{1:T}) \biggr].
\end{align*}
}Repeating this argument down from time $t=T-2$ to time $t=1$ yields the result.
             
             To conclude this portion of the proof, we move to an upper bound by choosing the infimum over $p_t$ at each timestep $t$ to match $g$, which is possible because each infimum now occurs inside the expression for which the supremum over $g\in\cG$ is taken:
{\small
\begin{align}
 \cV &= \dtri*{\sup_{x_t\in\cX}\sup_{q_t\in\Delta(\cL)}\En_{\ls_t\sim{}q_t}}_{t=1}^{T}\brk*{\sup_{g\in\cG}\brk*{
    \sum_{t=1}^{T}\inf_{p_t\in\Delta(\cS)}\En_{s\sim{}p_{t}}\En_{\ls'_t\sim{}q_t}\brk*{\tri*{s, \ls'_t} - 2\eta_3\tri*{s, \ls'_t}^{2}} -  \sum_{t=1}^{T}\tri*{f(x_t), \ls_t}
                     } -B_1(\ls_{1:T})} \notag \\
  &\leq \dtri*{\sup_{x_t\in\cX}\sup_{q_t\in\Delta(\cL)}\En_{\ls_t\sim{}q_t}}_{t=1}^{T}\brk*{\sup_{g\in\cG}\brk*{
    \sum_{t=1}^{T}\En_{\ls'_t\sim{}q_t}\brk*{\tri*{g(x_t), \ls'_t} - 2\eta_3\tri*{g(x_t), \ls'_t}^{2}} -  \sum_{t=1}^{T}\tri*{g(x_t), \ls_t}
                     } -B_1(\ls_{1:T})} \notag \\
  &= \dtri*{\sup_{x_t\in\cX}\sup_{q_t\in\Delta(\cL)}\En_{\ls_t\sim{}q_t}}_{t=1}^{T}\brk*{\sup_{g\in\cG}\brk*{
    \sum_{t=1}^{T}\En_{\ls'_t\sim{}q_t}\brk*{\tri*{g(x_t), \ls'_t}} - \tri*{g(x_t), \ls_t} -  2\eta_3\sum_{t=1}^{T}\En_{\ls'_t\sim{}q_t}\brk*{\tri*{g(x_t), \ls'_t}^{2}}
                     } -B_1(\ls_{1:T})}\label{eq:pre_sym}.
\end{align}}

\subsubsection{Symmetrization}

Introduce the notation $H(x) = x - \eta_3 x^2$.
We now claim that the quantity appearing in \pref{eq:pre_sym} is bounded by
\begin{align}
  \label{eq:symmetrized}
  2\cdot\sup_{\bx}\sup_{\bls}\En_{\eps}\biggl[\sup_{g\in\cG}\biggl[ 
                          \sum_{t=1}^{T}\eps_{t}H(\tri*{g(\bx_t(\eps)),\bls_t(\eps)}) 
                          -  \eta_3\sum_{t=1}^{T}\tri*{g(\bx_t(\eps)), \bls_t(\eps)}^{2}
              \biggr] -  B_{1}(\bls_{1:T}(\eps))\biggr],
\end{align}
where the supremum ranges over all $\cX$-valued trees $\bx$ and $\cL$-valued trees $\bls$, both of length $T$.

  The value
  {\small
  \begin{align*}
        \dtri*{\sup_{x_t\in\cX}\sup_{q_t\in\Delta(\cL)}\En_{\ls_t\sim{}q_t}}_{t=1}^{T}\brk*{\sup_{g\in\cG}\brk*{
        \sum_{t=1}^{T}\En_{\ls'_t\sim{}q_t}\brk*{\tri*{g(x_t), \ls'_t}} - \tri*{g(x_t), \ls_t} -  2\eta_3\sum_{t=1}^{T}\En_{\ls'_t\sim{}q_t}\brk*{\tri*{g(x_t), \ls'_t}^{2}}
    } -B_1(\ls_{1:T})},
    \end{align*}
    }by adding and subtracting the same term, is equal to
  {\small
    \begin{align*}
    &\dtri*{\sup_{x_t\in\cX}\sup_{q_t\in\Delta(\cL)}\En_{\ls_t\sim{}q_t}}_{t=1}^{T}\biggl[\sup_{g\in\cG}\biggl[ 
      \sum_{t=1}^{T}\En_{\ls'_t\sim{}q_t}\brk*{\tri*{g(x_t), \ls'_t} - \eta_3\tri*{g(x_t),\ls'_t}^{2}} - \prn*{\tri*{g(x_t), \ls_t}-\eta_3\tri*{g(x_t), \ls_t}^{2}} \\
      & \qquad \qquad \qquad \qquad \qquad \qquad \qquad -  \eta_3\sum_{t=1}^{T}\prn*{\En_{\ls_t\sim{}q_t}\brk*{\tri*{g(x_t), \ls_t}^{2}} + \tri*{g(x_t), \ls_t}^{2}}
      \biggr] -B_1(\ls_{1:T})\biggr] \\
    = & \dtri*{\sup_{x_t\in\cX}\sup_{q_t\in\Delta(\cL)}\En_{\ls_t\sim{}q_t}}_{t=1}^{T}\biggl[\sup_{g\in\cG}\biggl[ 
      \sum_{t=1}^{T}\En_{\ls'_t\sim{}q_t}\brk*{H(\tri*{g(x_t), \ls'_t})} - H(\tri*{g(x_t), \ls_t})  \\
      & \qquad \qquad \qquad \qquad \qquad \qquad \qquad  -  \eta_3\sum_{t=1}^{T}\prn*{\En_{\ls_t\sim{}q_t}\brk*{\tri*{g(x_t), \ls_t}^{2}} + \tri*{g(x_t), \ls_t}^{2}}
      \biggr] -B_1(\ls_{1:T})\biggr].
    \end{align*}
    }Using Jensen's inequality, this is upper bounded by
      \begin{align}
    \dtri*{\sup_{x_t\in\cX}\sup_{q_t\in\Delta(\cL)}\En_{\ls_t,\ls'_t\sim{}q_t}}_{t=1}^{T}\biggl[\sup_{g\in\cG}\biggl[ 
      \sum_{t=1}^{T}H(\tri*{g(x_t), \ls'_t}) - H(\tri*{g(x_t), \ls_t})  & \notag\\
      -  \eta_3\sum_{t=1}^{T}\prn*{\tri*{g(x_t), \ls'_t}^{2} + \tri*{g(x_t), \ls_t}^{2}}
      &\biggr] -B_1(\ls_{1:n})\biggr],\label{eq:tangent}
      \end{align}
      where $\ls'_{1:T}$ is a tangent sequence.
      We now claim that this is equal to
            \begin{align*}
    &\dtri*{\sup_{x_t\in\cX}\sup_{q_t\in\Delta(\cL)}\En_{\ls_t,\ls'_t\sim{}q_t}}_{t=1}^{T}\biggl[\sup_{g\in\cG}\biggl[ 
      \sum_{t=1}^{T}H(\tri*{g(x_t), \ls'_t}) - H(\tri*{g(x_t), \ls_t})   \\
      & \qquad \qquad \qquad \qquad \qquad \qquad \qquad -  \eta_3\sum_{t=1}^{T}\prn*{\tri*{g(x_t), \ls'_t}^{2} + \tri*{g(x_t), \ls_t}^{2}}
      \biggr] -\frac{1}{2}B_1(\ls_{1:T})- \frac{1}{2}B_1(\ls'_{1:T})\biggr].
            \end{align*}
          This can be seen as follows: Let $Q$ be the joint
          distribution over $\ls_1,\ldots,\ls_T$ obtaining the
          supremum above, or if the supremum is not obtained let it be
          any point in a limit sequence approaching the supremum. Then
          the value of the $B_{1}$ term in \pref{eq:tangent} is equal
          to (respectively, $\veps$-close to)
            \begin{align*}
              \En_{Q}B_{1}(\ls_{1:T}) &= \eta_{1}\sum_{t=1}^{T}\En_{Q}\nrm*{\ls_{t}}_{1} + \eta_{2}\sum_{t=1}^{T}\En_{Q}\nrm*{\ls_{t}}_{1}^{2} \\
                                      &= \eta_{1}\sum_{t=1}^{T}\En_{\ls_{1:t-1}}\En\brk*{\nrm*{\ls_{t}}_{1}\mid{}\ls_{1:t-1}} + \eta_{2}\sum_{t=1}^{T}\En_{\ls_{1:t-1}}\En\brk*{\nrm*{\ls_{t}}_{1}^{2}\mid{}\ls_{1:t-1}} \\
                                      &= \eta_{1}\sum_{t=1}^{T}\En_{\ls_{1:t-1}}\En\brk*{\nrm*{\ls_{t}'}_{1}\mid{}\ls_{1:t-1}} + \eta_{2}\sum_{t=1}^{T}\En_{\ls_{1:t-1}}\En\brk*{\nrm*{\ls_{t}'}_{1}^{2}\mid{}\ls_{1:t-1}}\\
                                      &= \En_{\ls_{1:T}}\En_{\ls'_{1:T}\mid{}\ls_{1:T}}B_1(\ls'_{1:T}).
            \end{align*}
            Replacing $\ls_t$ with $\ls_t'$ follows from the
            definition of the tangent sequence, since $\ls_t'$ and
            $\ls_t$ are identically distributed, conditioned on
            $\ls_{1:t-1}$.  This shows that we can replace
            $B_1(\ls_{1:T})$ with
            $B_1(\ls_{1:T})/2 + B_1(\ls'_{1:T})/2$ above, since we are
            working with the expectation.

            We have now established that \pref{eq:tangent} is equal to
            \begin{align*}
    \dtri*{\sup_{x_t\in\cX}\sup_{q_t\in\Delta(\cL)}\En_{\ls_t,\ls'_t\sim{}q_t}}_{t=1}^{T}\biggl[&\sup_{g\in\cG}\biggl[ 
      \sum_{t=1}^{T}\underbrace{H(\tri*{g(x_t), \ls'_t})}_{A_1} - \underbrace{H(\tri*{g(x_t), \ls_t})}_{A_2} -  \eta_3\biggl(\sum_{t=1}^{T}\underbrace{\tri*{g(x_t), \ls'_t}^{2} + \tri*{g(x_t), \ls_t}^{2}}_{A_3}\biggr)
              \biggr]  \\
       &-\frac{\eta_1}{2}\biggl(\sum_{t=1}^{T}\underbrace{\nrm*{\ls_t}_{1} + \nrm*{\ls'_t}_{1}}_{A_4}\biggr)-\frac{\eta_2}{2}\biggl(\sum_{t=1}^{T}\underbrace{\nrm*{\ls_t}_{1}^{2} + \nrm*{\ls'_t}_{1}^{2}}_{A_5}\biggr)\biggr].
            \end{align*}
            
            Fix a time $t$ and suppose the values of $\ls_t$ and $\ls'_t$ are exchanged. In this case the value of $A_{1}-A_{2}$ is switched to $A_{2}-A_{1}$, while the values of $A_3$, $A_4$, and $A_5$ are left unchanged. Appealing to \pref{lem:symmetrize}, we can therefore introduce Rademacher random variables $\eps_1,\ldots,\eps_T$ with equality as follows:
            {\small
                        \begin{align*}
                          \dtri*{\sup_{x_t\in\cX}\sup_{q_t\in\Delta(\cL)}\En_{\ls_t,\ls'_t\sim{}q_t}\En_{\eps_t}}_{t=1}^{T}\biggl[&\sup_{g\in\cG}\biggl[ 
      \sum_{t=1}^{T}\eps_{t}\prn*{H(\tri*{g(x_t), \ls'_t}) - H(\tri*{g(x_t), \ls_t})} -  \eta_3\prn*{\sum_{t=1}^{T}\tri*{g(x_t), \ls'_t}^{2} + \tri*{g(x_t), \ls_t}^{2}}
              \biggr]  \\
      & -\frac{\eta_1}{2}\prn*{\sum_{t=1}^{T}\nrm*{\ls_t}_{1} + \nrm*{\ls'_t}_{1}}-\frac{\eta_2}{2}\prn*{\sum_{t=1}^{T}\nrm*{\ls_t}_{1}^{2} + \nrm*{\ls'_t}_{1}^{2}}\biggr].
                        \end{align*}
                        }Splitting the supremum, this is upper bounded by
                        {\small
                        \begin{align*}
                          & 2\cdot\dtri*{\sup_{x_t\in\cX}\sup_{q_t\in\Delta(\cL)}\En_{\ls_t\sim{}q_t}\En_{\eps_t}}_{t=1}^{T}\biggl[\sup_{g\in\cG}\biggl[ 
                          \sum_{t=1}^{T}\eps_{t}H(\tri*{g(x_t), \ls_t})
                          -  \eta_3\sum_{t=1}^{T}\tri*{g(x_t), \ls_t}^{2}
                                         \biggr] -\frac{\eta_1}{2}\sum_{t=1}^{T}\nrm*{\ls_t}_{1} -\frac{\eta_2}{2}\sum_{t=1}^{T}\nrm*{\ls_t}_{1}^{2}\biggr] \\
                          & = 2\cdot\dtri*{\sup_{x_t\in\cX}\sup_{\ls_t\in\cL}\En_{\eps_t}}_{t=1}^{T}\biggl[\sup_{g\in\cG}\biggl[ 
                          \sum_{t=1}^{T}\eps_{t}H(\tri*{g(x_t), \ls_t})
                          -  \eta_3\sum_{t=1}^{T}\tri*{g(x_t), \ls_t}^{2}
                            \biggr] -\frac{\eta_1}{2}\sum_{t=1}^{T}\nrm*{\ls_t}_{1} -\frac{\eta_2}{2}\sum_{t=1}^{T}\nrm*{\ls_t}_{1}^{2}\biggr] \\
                          & = 2\cdot\sup_{\bx}\sup_{\bls}\En_{\eps}\biggl[\sup_{g\in\cG}\biggl[ 
                          \sum_{t=1}^{T}\eps_{t}H(\tri*{g(\bx_t(\eps)), \bls_t(\eps)})
                          -  \eta_3\sum_{t=1}^{T}\tri*{g(\bx_t(\eps)), \bls_t(\eps)}^{2}
              \biggr] -\frac{\eta_1}{2}\sum_{t=1}^{T}\nrm*{\bls_t(\eps)}_{1} -\frac{\eta_2}{2}\sum_{t=1}^{T}\nrm*{\bls_t(\eps)}_{1}^{2}\biggr].
            \end{align*}
          }The first equality is somewhat subtle, but holds because at time $T$, the expression is linear in $q_T$ so
          it is maximized at a point $\ls_T$, allowing us to work
          backwards to remove the $q_t$ distributions.

\subsubsection{Introducing a coarse cover}

We now break the process appearing in \pref{eq:symmetrized} into multiple terms, each of which will be handled by covering. Consider any fixed pair of trees $\bx$, $\bls$. Note that with the trees fixed \pref{eq:pre_sym} is at most
\begin{align*}
2\cdot\En_{\eps}\sup_{g\in\cG}\biggl[ 
  \sum_{t=1}^{T}\eps_{t}H(\tri*{g(\bx_t(\eps)), \bls_t(\eps)})
  -  \eta_3\sum_{t=1}^{T}\tri*{g(\bx_t(\eps)), \bls_t(\eps)}^{2}
  \biggr] -  \En_{\eps}B_{1}(\bls_{1:T}(\eps)).
\end{align*}
We will focus on the supremum for now. We begin by adapting a trick
from~\citet{RakSri15a} to introduce a coarse sequential cover at scale
$\beta$. Let $V'$ be a cover for $\cG$ on the tree $\bx$ with respect
to $L_{\infty}/\ls_{\infty}$ at scale $\beta/2$. Then the size of $V'$
is $\cN_{\infty,\infty}(\beta/2,\cG,\bx)$, and
\[\max_{g\in\cG}\max_{\eps\in\pmo^{T}}\min_{\bv'\in{}V'}\max_{t\in\brk*{T}}\nrm*{g(\bx_{t}(\eps))-\bv'_{t}(\eps)}_{\infty}\leq{}\beta/2.\] Recall
that since $g(x)\in\bbR^{K}_{+}$ for all $g\in\cG$, we may take each
$\bv'\in{}V'$ to have non-negative coordinates without loss of
generality. Likewise, it follows that we may take each $\bv'\in{}V'$
to have
$\nrm*{\bv'_{t}(\eps)}_{\infty}\leq{}\sup_{x\in\cX}\sup_{g\in\cG}\nrm*{g(x)}_{\infty}$
without loss of generality.

We construct a new $\beta$-cover $V^{1}$ from $V'$ by defining for
each tree $\bv'\in{}V'$ a new tree $\bv$ as follows:
\begin{align*}
  \forall{}\eps\in\pmo^{T}\;\forall{}t\in\brk*{T}\;\forall{}a\in\brk*{K}:\quad{} \bv_{t}(\eps)_{a} = \max\crl*{\bv'_{t}(\eps)_{a}-\beta/2,0}.
\end{align*}
It is easy to verify that for each time $t$ and path $\eps$ we have
$\nrm*{\bv_{t}(\eps)-\bv'_{t}(\eps)}_{\infty}\leq{}\beta/2$, so
$V^{1}$ is indeed a $\beta$-cover with respect to
$L_{\infty}/\ls_{\infty}$. More importantly, for each $g\in\cG$ and
path $\eps$, there exists a tree $\bv\in{}V'$ that is $\beta$-close in
the $L_{\infty}/\ls_{\infty}$ sense and has
$\bv_{t}(\eps)_{a}\leq{}g(\bx_{t}(\eps))_{a}$ coordinate-wise. We will
let $\bv^{1}[\eps,g]$ denote this tree, and it is constructed by
taking the $\beta/2$-close tree $\bv'$ promised by the definition of
$V'$, then performing the clipping operation above to get the
corresponding $\beta$-close element of $V^1$. The clipping operation
and $\beta/2$ closeness of $\bv'$ imply that for each time
$t\in\brk*{T}$ and coordinate $a\in\brk*{K}$,
\begin{align*}
  \bv^{1}_{t}[\eps,g]_{a} - g(\bx_{t}(\eps))_{a} &= \max\crl*{\bv'_{t}(\eps)_{a}-\beta/2,0} - g(\bx_{t}(\eps))_{a} \\
                                               &\leq{}  \max\crl*{\nrm*{\bv'_{t}(\eps)-g(\bx_{t}(\eps))}_{\infty} + g(\bx_{t}(\eps))_{a} -\beta/2,0} - g(\bx_{t}(\eps))_{a} \\
                                               &\leq{}  \max\crl*{g(\bx_{t}(\eps))_{a},0} - g(\bx_{t}(\eps))_{a} = 0.
\end{align*}
This establishes the desired ordering on coordinates. Returning to the
process at hand, we have
\begin{align*}
  &\En_{\eps}\sup_{g\in\cG}\biggl[ 
                          \sum_{t=1}^{T}\eps_{t}H\tri*{g(\bx_t(\eps)), \bls_t(\eps)})
                          -  \eta_3\sum_{t=1}^{T}\tri*{g(\bx_t(\eps)), \bls_t(\eps)}^{2}
                 \biggr].
\end{align*}
Now we add and subtract terms involving the covering element $\bv^{1}(\eps,g)$:
\begin{align*}
  =
    \En_{\eps}\sup_{g\in\cG}\biggl[&
                          \sum_{t=1}^{T}\eps_{t}H(\tri*{\bv_{t}^{1}[\eps,g], \bls_t(\eps)})
  -  \eta_3\sum_{t=1}^{T}\tri*{g(\bx_t(\eps)), \bls_t(\eps)}^{2} \\
  &+ \sum_{t=1}^{T}\eps_{t}H(\tri*{g(\bx_t(\eps)), \bls_t(\eps)}) - \eps_{t}H(\tri*{\bv_{t}^{1}[\eps,g], \bls_t(\eps)})
  \biggr].
\end{align*}
We now invoke the coordinate domination property of $\bv^{1}[\eps,g]$
described above. Observe that since $g(\bx_{t}(\eps))$,
$\bv_{t}^{1}[\eps,g]$, and $\bls_{t}(\eps)$ are all nonnegative
coordinate-wise, it holds that
$\tri*{\bv_{t}^{1}[\eps,g], \bls_t(\eps)}^{2}
\leq{}\tri*{g(\bx_t(\eps)), \bls_t(\eps)}^{2}$. Consequently, we can
replace the offset term (not involving $\eps_t$) with a similar term
involving $\bv_t^{1}[\eps,g]$
\begin{align*}
\leq{}
    \En_{\eps}\sup_{g\in\cG}\biggl[&
                          \sum_{t=1}^{T}\eps_{t}H(\tri*{\bv_{t}^{1}[\eps,f], \bls_t(\eps)})
  -  \eta_3\sum_{t=1}^{T}\tri*{\bv_{t}^{1}[\eps,g], \bls_t(\eps)}^{2} \\
  &+ \sum_{t=1}^{T}\eps_{t}H(\tri*{f(\bx_t(\eps)), \bls_t(\eps)}) - \eps_{t}H(\tri*{\bv_{t}^{1}[\eps,f], \bls_t(\eps)})
  \biggr].
\end{align*}
Splitting the supremum and gathering terms, this implies that $\cV$ is upper bounded by
\begin{align*}
    &\underbrace{\En_{\eps}\sup_{\bv^{1}\in{}V^{1}}\biggl[
                          \sum_{t=1}^{T}\eps_{t}H(\tri*{\bv_{t}^{1}(\eps), \bls_t(\eps)})
  -  \eta_3\sum_{t=1}^{T}\tri*{\bv_{t}^{1}(\eps), \bls_t(\eps)}^{2}\biggr]}_{(\star)} \\
 &+ \underbrace{\En_{\eps}\sup_{g\in\cG}\biggl[\sum_{t=1}^{T}\eps_{t}H(\tri*{g(\bx_t(\eps)), \bls_t(\eps)}) - \eps_{t}H(\tri*{\bv_{t}^{1}[\eps,g], \bls_t(\eps)})
  \biggr]-\En_{\eps}B_{1}(\bls_{1:T}(\eps))}_{(\star\star)}.
\end{align*}

\subsubsection{Bounding $(\star)$}
We appeal to \pref{lem:offset_finite_R} with a class of real-valued trees $U\defeq\crl*{\eps\mapsto\prn*{\tri*{\bv_{t}^{1}(\eps),\bls_{t}(\eps)}}_{t\leq{}T}\mid{} \bv^{1}\in{}V^{1}}$. The class $U$ has range contained in $\brk*{-1,+1}$, since $\abs*{\tri*{\bv_{t}^{1}(\eps),\bls_{t}(\eps)}}\leq{}\nrm*{\bv_{t}^1(\eps)}_{\infty}\nrm*{\bls_{t}(\eps)}_{1}\leq{}1$, where these norm bounds are by assumption on $\cG$ and $\cL$. Recall that $H(x) = x - \eta_3x^2$. We therefore conclude that 
\begin{align*}
(\star) &= \En_{\eps}\sup_{\bv^{1}\in{}V^{1}}\biggl[
                          \sum_{t=1}^{T}\eps_{t}H(\tri*{\bv_{t}^{1}(\eps), \bls_t(\eps)})
  -  \eta_3\sum_{t=1}^{T}\tri*{\bv_{t}^{1}(\eps), \bls_t(\eps)}^{2}\biggr]\\
  &\leq{} 2\frac{1+\eta_{3}^{2}}{\eta_{3}}\log\abs*{V^{1}} 
  = 2\frac{1+\eta_{3}^{2}}{\eta_{3}}\log\cN_{\infty,\infty}(\beta/2, \cG, \bx).
\end{align*}

\subsubsection{Bounding $(\star\star)$}
Fix $\alpha>0$ and let $N=\floor*{\log(\beta/\alpha)}-1$. For each $i\geq{}1$ define $\veps_{i}=\beta{}e^{-(i-1)}$, and for each $i>1$ let $V^{i}$ be a sequential cover of $\cG$ on $\bx$ at scale $\veps_{i}$ with respect to $L_{\infty}/\ls_{\infty}$ (keeping in mind that $V^{1}$ is defined as in the preceding section). For a given path $\eps\in\pmo^{T}$ and $g\in\cG$, let
$\bv^{i}[\eps,g]$ denote the $\veps_{i}$-close element of $V^{i}$. Below, we will only evaluate $H(x) = x - \eta_3x^2$ over the domain $\brk*{-1,+1}$; it is $(1+2\eta_3)$-Lipschitz over this domain. Then the leading term of $(\star\star)$ is equal to 
\begin{align*}
&\En_{\eps}\sup_{g\in\cG}\biggl[\sum_{t=1}^{T}\eps_{t}\prn*{H\prn*{\tri*{g(\bx_t(\eps)), \bls_t(\eps)}} - H\prn*{\tri*{\bv_{t}^{1}[\eps,g], \bls_t(\eps)}}}
  \biggr].
\end{align*}
  Introducing the covering elements defined above to this expression, we have the equality
  \begin{align*}
    = & \En_{\eps}\sup_{g\in\cG}\biggl[\sum_{t=1}^{T}\eps_{t}\prn*{H\prn*{\tri*{g(\bx_t(\eps)), \bls_t(\eps)}} - H\prn*{\tri*{\bv_{t}^{N}[\eps,g], \bls_t(\eps)}}}\\
    & \qquad \qquad + \sum_{i=1}^{N-1}\sum_{t=1}^{T}\eps_{t}\prn*{H\prn*{\tri*{\bv_{t}^{i+1}[\eps,g], \bls_t(\eps)}} - H\prn*{\tri*{\bv_{t}^{i}[\eps,g], \bls_t(\eps)}}}
      \biggr]\\
    \leq{}&\underbrace{\En_{\eps}\sup_{g\in\cG}\biggl[\sum_{t=1}^{T}\eps_{t}\prn*{H\prn*{\tri*{g(\bx_t(\eps)), \bls_t(\eps)}} - H\prn*{\tri*{\bv_{t}^{N}[\eps,g], \bls_t(\eps)}}}\biggr]}_{\triangleq C_{N}} \\
    & \qquad \qquad + \sum_{i=1}^{N-1}\underbrace{\En_{\eps}\sup_{g\in\cG}\biggl[\sum_{t=1}^{T}\eps_{t}\prn*{H\prn*{\tri*{\bv_{t}^{i+1}[\eps,g], \bls_t(\eps)}} - H\prn*{\tri*{\bv_{t}^{i}[\eps,g], \bls_t(\eps)}}}
  \biggr]}_{\triangleq C_{i}}.
\end{align*}
\subsubsection{Bounding $C_{N}$} We first bound $C_{N}$ in terms of one of the terms appearing in $B_1$.
\begin{align*}
  C_N=&\En_{\eps}\sup_{g\in\cG}\biggl[\sum_{t=1}^{T}\eps_{t}\prn*{H\prn*{\tri*{g(\bx_t(\eps)), \bls_t(\eps)}} - H\prn*{\tri*{\bv_{t}^{N}[\eps,g], \bls_t(\eps)}}}\biggr] \\
  & \leq{} \En_{\eps}\biggl[\sum_{t=1}^{T}\sup_{g\in\cG}\abs*{H\prn*{\tri*{g(\bx_t(\eps)), \bls_t(\eps)}} - H\prn*{\tri*{\bv_{t}^{N}[\eps,g], \bls_t(\eps)}}}\biggr] \\
  & \leq{} (1+2\eta_3)\En_{\eps}\biggl[\sum_{t=1}^{T}\sup_{g\in\cG}\abs*{\tri*{g(\bx_t(\eps)), \bls_t(\eps)} - \tri*{\bv_{t}^{N}[\eps,g], \bls_t(\eps)}}\biggr] \\
  & \leq{} (1+2\eta_3)\En_{\eps}\biggl[\sum_{t=1}^{T}\nrm*{\bls_{t}(\eps)}_{1}\sup_{g\in\cG}\nrm*{g(\bx_t(\eps))-\bv_{t}^{N}[\eps,g]}_{\infty}\biggr] \\
  & \leq{} (1+2\eta_3)\max_{\eps}\sup_{g\in\cG}\max_{t\in\brk*{T}}\nrm*{g(\bx_t(\eps))-\bv_{t}^{N}[\eps,g]}_{\infty}\cdot\En_{\eps}\biggl[\sum_{t=1}^{T}\nrm*{\bls_{t}(\eps)}_{1}\biggr] \\
      & \leq{} (1+2\eta_3)e^{2}\alpha\cdot\En_{\eps}\biggl[\sum_{t=1}^{T}\nrm*{\bls_{t}(\eps)}_{1}\biggr].
\end{align*}
The first inequality uses that $\eps_t \in \pmo$, while the second
uses the Lipschitzness of $H$ over $\brk*{-1,+1}$.  The third and
fourth are both applications of \Holder's inequality, first to the
$\ls_1/\ls_\infty$ dual pairing, and then to for the distributions
over $L_1/L_\infty$. Finally, the definition of the covering element
$\bv_{t}^{N}$---in particular, that it is an
$L_{\infty}/\ls_{\infty}$-cover---implies that the supremum term is
bounded by $\veps_N\leq e^2\cdot\alpha$, which yields the final bound. 
\subsubsection{Bounding $C_{i}$}
Our goal is to bound
\begin{align*}
  C_i = \En_{\eps}\sup_{g\in\cG}\biggl[\sum_{t=1}^{T}\eps_{t}\prn*{H\prn*{\tri*{\bv_{t}^{i+1}[\eps,g], \bls_t(\eps)}} - H\prn*{\tri*{\bv_{t}^{i}[\eps,g], \bls_t(\eps)}}}
  \biggr].
\end{align*}
We define a class $W$ of real-valued trees as follows. Let $1\leq{}a\leq{}\abs*{V^{i}}$ and $1\leq{}b\leq{}\abs*{V^{i+1}}$, and fix an arbitrary ordering $\bv^{a}\in{}V^{i}$ and $\bv^{b}\in{}V^{i+1}$ of the elements of $V^{i}/V^{i+1}$. For each pair $(a,b)$ define a tree $\bw^{(a,b)}$ via 
\begin{align*}
  \bw_{t}^{(a,b)}(\eps) = \left\{
    \begin{array}{ll}
      H\prn*{\tri*{\bv_{t}^{b}(\eps),\bls_{t}(\eps)}} - H\prn*{\tri*{\bv_{t}^{a}(\eps),\bls_{t}(\eps)}},&\quad
                                                                                               \exists{}g\in\cG\text{ s.t. } \bv^{a}=\bv[\eps,g]^{i}, \bv^{b}=\bv[\eps,g]^{i+1},\\
      0,&\quad\text{otherwise.}
    \end{array}
    \right.
\end{align*}
Then $C_{i}$ is bounded by
\begin{align*}
\En_{\eps}\sup_{\bw\in{}W}\sum_{t=1}^{T}\eps_{t}\bw_{t}(\eps).
\end{align*}
Then \pref{lem:offset_finite_simple} implies that for any fixed $\eta>0$,
\begin{align*}
\En_{\eps}\sup_{\bw\in{}W}\brk*{\sum_{t=1}^{T}\eps_{t}\bw_{t}(\eps) - \eta\bw_{t}^{2}(\eps)} \leq{} \frac{\log\abs*{W}}{2\eta}.
\end{align*}
Rearranging and applying subadditivity of the supremum, this implies
\begin{align*}
\En_{\eps}\sup_{\bw\in{}W}\sum_{t=1}^{T}\eps_{t}\bw_{t}(\eps) \leq{} \eta\cdot\En_{\eps}\sup_{\bw\in{}W}\sum_{t=1}^{T}\bw_{t}^{2}(\eps) + \frac{\log\abs*{W}}{2\eta}.
\end{align*}
Optimizing over $\eta$ (which is admissible because the statement above is a deterministic inequality) leads to a further bound of
\begin{align*}
\En_{\eps}\sup_{\bw\in{}W}\sum_{t=1}^{T}\eps_{t}\bw_{t}(\eps) \leq{} \sqrt{2\En_{\eps}\sup_{\bw\in{}W}\sum_{t=1}^{T}\bw_{t}^{2}(\eps)\cdot\log\abs*{W}}.
\end{align*}
We proceed to bound each term in the square root. For the logarithmic term, by construction we have $\abs*{W}\leq{}\abs*{V^{i}}\abs*{V^{i+1}}\leq{}\abs*{V^{i+1}}^{2} = \cN_{\infty,\infty}(\veps_{i+1}, \cG, \bx)^{2}$.

For the variance, let $\bw^{(a,b)}\in{}W$ and the path $\eps$ be fixed. There are two cases: Either $\bw(\eps)=\mb{0}$, or there exists $g\in\cG$, such that $\bv^{a}=\bv[\eps,g]^{i}$ and $\bv^{b}=\bv[\eps,g]^{i+1}$. The former case is trivial while for the latter, in a similar way to the bound for $C_N$, we get
\begin{align*}
  \sum_{t=1}^{T}\bw_{t}^{(a,b)}(\eps)^{2} &= \sum_{t=1}^{T}\prn*{H\prn*{\tri*{\bv_{t}^{i+1}[\eps,g], \bls_t(\eps)}} - H\prn*{\tri*{\bv_{t}^{i}[\eps,g], \bls_t(\eps)}}}^{2}\\
                                          &\leq{} (1+2\eta_{3})^{2}\sum_{t=1}^{T}\prn*{\tri*{\bv_{t}^{i+1}[\eps,g], \bls_t(\eps)} - \tri*{\bv_{t}^{i}[\eps,g], \bls_t(\eps)}}^{2} \\
                                          &\leq{} (1+2\eta_{3})^{2}\sum_{t=1}^{T}\nrm*{\bls_{t}(\eps)}_{1}^{2}\nrm*{\bv_{t}^{i+1}[\eps,g] - \bv_{t}^{i}[\eps,g]}_{\infty}^{2} \\
                                          &\leq{} (1+2\eta_{3})^{2}\max_{\eps'}\max_{t\in\brk*{T}}\nrm*{\bv_{t}^{i+1}[\eps',g] - \bv_{t}^{i}[\eps',g]}_{\infty}^{2}\cdot\sum_{t=1}^{T}\nrm*{\bls_{t}(\eps)}_{1}^{2}.
\end{align*}
Where we have used Lipschitzness of $H$ in the first inequality and \Holder's inequality in the second and third. 

Finally, using the $L_{\infty}/\ls_{\infty}$ cover property of $\bv^{i}[\eps,g]$ and $\bv^{i+1}[\eps,g]$ and the triangle inequality, we have
\begin{align*}
  & \max_{\eps}\max_{t\in\brk*{T}}\nrm*{\bv_{t}^{i+1}[\eps,g] - \bv_{t}^{i}[\eps,g]}_{\infty} \\
  &\leq \max_{\eps}\max_{t\in\brk*{T}}\nrm*{\bv_{t}^{i+1}[\eps,g] - g(\bx_{t}(\eps))}_{\infty} +   \max_{\eps}\max_{t\in\brk*{T}}\nrm*{g(\bx_{t}(\eps)) - \bv_{t}^{i}[\eps,g]}_{\infty} \\
  &\leq \veps_{i} + \veps_{i+1} \leq{}2\veps_{i}.
\end{align*}
We have just shown that for every sequence $\eps$ and every
$\bw^{(a,b)}\in{}W$,
$\sum_{t=1}^{T}\bw_{t}^{(a,b)}(\eps)^{2}\leq{}
4(1+2\eta_{3})^{2}\veps_{i}^{2}\cdot\sum_{t=1}^{T}\nrm*{\bls_{t}(\eps)}_{1}^{2}$. It
follows that
\begin{align*}
\En_{\eps}\sup_{\bw\in{}W}\sum_{t=1}^{T}\bw_{t}(\eps)^{2}\leq{} 4(1+2\eta_{3})^{2}\veps_{i}^{2}\cdot\En_{\eps}\sum_{t=1}^{T}\nrm*{\bls_{t}(\eps)}_{1}^{2}.
\end{align*}
Plugging this bound back into the main inequality, we have shown
\begin{align*}
  \En_{\eps}\sup_{\bw\in{}W}\sum_{t=1}^{T}\eps_{t}\bw_{t}(\eps) \leq{} 4e(1+2\eta_{3})\veps_{i+1}\sqrt{\En_{\eps}\sum_{t=1}^{T}\nrm*{\bls_{t}(\eps)}_{1}^{2}\cdot\log\cN_{\infty,\infty}(\veps_{i+1}, \cG, \bx)}.
\end{align*}

\subsubsection{Final bound on $(\star\star)$}
Collecting terms, we have shown that
{\small
\begin{align}
\notag&(\star\star) \\&\leq{} (1+2\eta_3)e^2\alpha\cdot\En_{\eps}\biggl[\sum_{t=1}^{T}\nrm*{\bls_{t}(\eps)}_{1}\biggr] + 4e(1+2\eta_{3})\sqrt{\En_{\eps}\sum_{t=1}^{T}\nrm*{\bls_{t}(\eps)}_{1}^{2}}\sum_{i=1}^{N-1}\veps_{i+1}\sqrt{\log\cN_{\infty,\infty}(\veps_{i+1}, \cG, \bx)} - \En_{\eps}B_{1}(\bls_{1:T}(\eps)).\label{eq:stst_inter}
\end{align}
}Following the standard Dudley chaining proof, we have 
\begin{align*}
  \sum_{i=1}^{N-1}\veps_{i+1}\sqrt{\log\cN_{\infty,\infty}(\veps_{i+1}, \cG, \bx)} 
  &\leq{}\sum_{i=1}^{N}\veps_{i}\sqrt{\log\cN_{\infty,\infty}(\veps_{i}, \cG, \bx)} \leq{} 2\sum_{i=1}^{N}(\veps_{i}-\veps_{i+1})\sqrt{\log\cN_{\infty,\infty}(\veps_i, \cG, \bx)} 
  \\
  & \leq{} 2\int_{\veps_{N+1}}^{\beta}\sqrt{\log\cN_{\infty,\infty}(\veps, \cG, \bx)}d\veps
                                                                                   \leq{} 2\int_{\alpha}^{\beta}\sqrt{\log\cN_{\infty,\infty}(\veps, \cG, \bx)}d\veps \\
            &                                                                       \leq{} 2\int_{\alpha}^{\beta}\sqrt{\log\cN_{\infty,\infty}(\veps, \cG, T)}d\veps.
\end{align*}
Where we are using the definition of $N$, which implies that $\alpha\leq{}\veps_{N+1}$. 

Now recall the definition of $B_1(\bls_{1:T}(\eps))$:
\begin{align*}
  B_1(\bls_{1:T}(\eps)) = \eta_1\sum_{t=1}^T\nrm*{\bls_t(\eps)}_{1} + \eta_2\sum_{t=1}^T\nrm*{\bls_t(\eps)}_{1}^{2}
\end{align*}

Taking $\eta_{1}\geq{}(1+2\eta_{3})e^2 \alpha$, the first term in $B_1$ cancels out the first term in \pref{eq:stst_inter}, leaving us with
\begin{align*}
  (\star\star)& \leq 8e(1+2\eta_{3})\sqrt{\En_{\eps}\sum_{t=1}^{T}\nrm*{\bls_{t}(\eps)}_{1}^{2}}\int_{\alpha}^{\beta}\sqrt{\log\cN_{\infty,\infty}(\veps, \cG, T)}d\veps - \eta_{2}\En_{\eps}\sum_{t=1}^{T}\nrm*{\bls_{t}(\eps)}_{1}^{2}\\
  & \leq 8e(1+2\eta_{3})\prn*{\frac{\eta_{4}}{4}\En_{\eps}\sum_{t=1}^{T}\nrm*{\bls_{t}(\eps)}_{1}^{2} + \frac{1}{\eta_4}}\int_{\alpha}^{\beta}\sqrt{\log\cN_{\infty,\infty}(\veps, \cG, T)}d\veps - \eta_{2}\En_{\eps}\sum_{t=1}^{T}\nrm*{\bls_{t}(\eps)}_{1}^{2}.
\end{align*}
Where the last step applies for any $\eta_4>0$ by the AM-GM inequality. For any $\eta_{2}\geq{}8e(1+2\eta_{3})\eta_4\cdot\int_{\alpha}^{\beta}\sqrt{\log\cN_{\infty,\infty}(\veps, \cG, T)}d\veps$, the first and third terms cancel, leaving us with an upper bound of
\begin{align*}
  (\star\star) \leq \frac{8e(1+2\eta_{3})}{\eta_4}\int_{\alpha}^{\beta}\sqrt{\log\cN_{\infty,\infty}(\veps, \cG, T)}d\veps.
\end{align*}
This term does not depend on the trees $\bx$ or $\bls$, so we are done with $(\star\star)$.

\subsubsection{Final bound}
Under the assumptions on $\eta_1,\eta_2,\eta_3,\eta_4,\alpha$, and $\beta$, the bounds on $(\star)$ and $(\star\star)$ we have established imply
\begin{align*}
\cV \leq{} 2\frac{1+\eta_{3}^{2}}{\eta_{3}}\log\cN_{\infty,\infty}(\beta/2, \cG, T) + \frac{8e(1+2\eta_{3})}{\eta_4}\int_{\alpha}^{\beta}\sqrt{\log\cN_{\infty,\infty}(\veps, \cG, T)}d\veps.
\end{align*}
The definition of $\cV$ implies that there exists an algorithm with regret bounded by $\cV + B(p_{1:T}, \ls_{1:T})$ on every sequence. The final regret inequality is
  \begin{align*}
    &\sum_{t=1}^{T}\En_{s\sim{}p_t}\tri*{s, \ls_t} -\inf_{g\in\cG}\sum_{t=1}^{T}\tri*{g(x_t), \ls_t} \\
    &\leq{} 
      2\eta_{3}\sum_{t=1}^{T}\En_{s\sim{}p_t}\tri*{s, \ls_t}^{2} +  2\frac{1+\eta_{3}^{2}}{\eta_{3}}\log\cN_{\infty,\infty}(\beta/2, \cG, T) \\
    &~~~~~~~~~~~~+ 8e(1+2\eta_{3})\prn*{\frac{\eta_{4}}{4}\sum_{t=1}^{T}\nrm*{\ls_{t}}_{1}^{2} + \frac{1}{\eta_4}}\int_{\alpha}^{\beta}\sqrt{\log\cN_{\infty,\infty}(\veps, \cG, T)}d\veps + (1+2\eta_3)e^2\alpha\sum_{t=1}^{T}\nrm*{\ls_{t}}_{1}.
  \end{align*}
  To obtain the bound in the theorem statement, we rebind
  $\eta = \eta_3, \lambda=\eta_4$ and use the assumption
  $\eta \leq 1$.

  \subsection{Proofs for remaining results}

Our bandit results require a generalization of \pref{thm:chaining_v2} to the case where losses and the class $\cG$ may not be bounded by $1$.
\begin{corollary}
\label{corr:full_info_scaled}
  Suppose we are in the setting of \pref{thm:chaining_v2}, but with the bounds $\nrm*{\ls}_{1}\leq{}R$ for all $\ls\in\cL$ and $\nrm*{s}_{\infty}\leq{}B$ for all $s\in\cS$.
  For any constants $\eta\in(0,1]$, $\lambda>0$, and $\beta>\alpha>0$, there exists an algorithm making predictions in $\cS$ that attains a regret guarantee of
  {\small
  \begin{align*}
    \sum_{t=1}^{T}\En_{s_t\sim{}p_t}\tri*{
      s_t, \ls_t} -\inf_{g\in\cG}\sum_{t=1}^{T}\tri*{g(x_t), \ls_t} &\leq{} 
      \frac{2\eta}{RB}\sum_{t=1}^{T}\En_{s_t\sim{}p_t}\tri*{s_t, \ls_t}^{2} +  \frac{4RB}{\eta}\log\cN_{\infty,\infty}(\beta/2, \cG, T) + 3e^2\alpha\sum_{t=1}^{T}\nrm*{\ls_{t}}_{1}\\
   &~~~~~ + 24e\prn*{\frac{\lambda}{4R}\sum_{t=1}^{T}\nrm*{\ls_{t}}_{1}^{2} + \frac{R}{\lambda}}\int_{\alpha}^{\beta}\sqrt{\log\cN_{\infty,\infty}(\veps, \cG, T)}d\veps.
  \end{align*}
}Furthermore, if upper bounds $\sum_{t=1}^{T}\nrm*{\ls_{t}}_{1}^{2}\leq{}V$ and $\sum_{t=1}^{T}\En_{s_t\sim{}p_t}\tri*{s_t,\ls_t}^{2}\leq{}\wt{V}$ are known in advance, $\eta$ and $\lambda$ can be selected to guarantee regret
    {\small
  \begin{align*}
    &\sum_{t=1}^{T}\En_{s_t\sim{}p_t}\tri*{s_t, \ls_t} -\inf_{g\in\cG}\sum_{t=1}^{T}\tri*{g(x_t), \ls_t} \\
    &\leq{} 
      8\sqrt{\wt{V}\cdot\log\cN_{\infty,\infty}(\beta/2, \cT, T)} + 8RB\log\cN_{\infty,\infty}(\beta/2, \cG, T) \\
    &~~~~~~~~~~~~+ 24e\sqrt{V}\int_{\alpha}^{\beta}\sqrt{\log\cN_{\infty,\infty}(\veps, \cG, T)}d\veps + 3e\alpha\sum_{t=1}^{T}\nrm*{\ls_{t}}_{1}.
  \end{align*}
  }
  \end{corollary}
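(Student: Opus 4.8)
The plan is to derive \pref{corr:full_info_scaled} from \pref{thm:chaining_v2} by a homogeneity/rescaling argument, and then read off the second display by substituting the a priori variance bounds into the first one and optimizing the free parameters. First I would set $\wt{\ls}\defeq\ls/R$, $\wt{\cL}\defeq\crl*{\ls/R:\ls\in\cL}$, $\wt{\cS}\defeq\crl*{s/B:s\in\cS}$, and $\wt{\cG}\defeq\crl*{g/B:g\in\cG}$. By construction $\nrm*{\wt\ls}_{1}\leq1$ for all $\wt\ls\in\wt\cL$ and $\nrm*{\wt s}_{\infty}\leq1$ for all $\wt s\in\wt\cS$, and both sets remain compact and contained in $\bbR^{K}_{+}$, so \pref{thm:chaining_v2} applies to the instance $(\wt\cG,\wt\cL)$. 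I would run the algorithm it guarantees on this rescaled instance and scale each of its predictions back up by $B$, which yields an algorithm that predicts in $\cS$. The calculation rests on three identities: $\tri*{s,\ls}=RB\tri*{s/B,\ls/R}$, $\nrm*{\ls}_{1}=R\nrm*{\ls/R}_{1}$, and — since scaling every function in a class by $1/B$ scales the $\ls_{\infty}$ covering radius by $1/B$ — $\cN_{\infty,\infty}(\veps,\wt\cG,T)=\cN_{\infty,\infty}(B\veps,\cG,T)$.

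For the first display I would apply \pref{thm:chaining_v2} to $(\wt\cG,\wt\cL)$ with parameters $\eta$, $\lambda$, $\alpha/B$, $\beta/B$ (admissible since $\eta\in(0,1]$, $\lambda>0$, $\beta>\alpha>0$, $B>0$) and multiply the resulting regret bound through by $RB$. On the left this recovers the regret of the original instance. On the right, the local-norm term carries a factor $(RB)^{2}$ from $\tri*{s,\ls}^{2}$, leaving $\tfrac{2\eta}{RB}\sum_{t}\En_{s_t\sim p_t}\tri*{s_{t},\ls_{t}}^{2}$; the leading covering term becomes $\tfrac{4RB}{\eta}\log\cN_{\infty,\infty}\prn*{B\cdot\tfrac{\beta}{2B},\cG,T}=\tfrac{4RB}{\eta}\log\cN_{\infty,\infty}(\beta/2,\cG,T)$; the $\ls_{1}$ terms acquire factors $R$ and $R^{2}$; and in the Dudley integral the substitution $u=B\veps$ gives $\int_{\alpha/B}^{\beta/B}\sqrt{\log\cN_{\infty,\infty}(\veps,\wt\cG,T)}\,d\veps=\tfrac1B\int_{\alpha}^{\beta}\sqrt{\log\cN_{\infty,\infty}(u,\cG,T)}\,du$, after which the surviving $R$ and $B$ factors reassemble into $24e\prn*{\tfrac{\lambda}{4R}\sum_{t}\nrm*{\ls_{t}}_{1}^{2}+\tfrac{R}{\lambda}}$. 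Collecting the terms reproduces the first display.

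For the tuned bound, substituting $\sum_{t}\En_{s_t\sim p_t}\tri*{s_{t},\ls_{t}}^{2}\leq\wt V$ and $\sum_{t}\nrm*{\ls_{t}}_{1}^{2}\leq V$ into the first display leaves two $\lambda$-dependent pieces, $\tfrac{\lambda V}{4R}$ and $\tfrac{R}{\lambda}$; since $\lambda>0$ is otherwise unconstrained I would balance them with $\lambda=2R/\sqrt V$, collapsing the last line to $24e\sqrt V\int_{\alpha}^{\beta}\sqrt{\log\cN_{\infty,\infty}(\veps,\cG,T)}\,d\veps$. For $\eta$, the pieces $\tfrac{2\eta}{RB}\wt V$ and $\tfrac{4RB}{\eta}\log\cN_{\infty,\infty}(\beta/2,\cG,T)$ balance at $\eta^{\star}=RB\sqrt{2\log\cN_{\infty,\infty}(\beta/2,\cG,T)/\wt V}$, but \pref{thm:chaining_v2} only permits $\eta\leq1$, so I would take $\eta=\min\crl*{1,\eta^{\star}}$ and split into cases: if $\eta^{\star}\leq1$ the two pieces sum to at most $4\sqrt2\,\sqrt{\wt V\log\cN_{\infty,\infty}(\beta/2,\cG,T)}$, while if $\eta^{\star}>1$ then $\wt V<2(RB)^{2}\log\cN_{\infty,\infty}(\beta/2,\cG,T)$ and with $\eta=1$ they sum to at most $8RB\log\cN_{\infty,\infty}(\beta/2,\cG,T)$. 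In either case the $\eta$-pieces are bounded by $8\sqrt{\wt V\log\cN_{\infty,\infty}(\beta/2,\cG,T)}+8RB\log\cN_{\infty,\infty}(\beta/2,\cG,T)$, and keeping the $\alpha$-dependent term unchanged gives the second display.

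I do not expect a genuine obstacle here: the whole argument is bookkeeping driven by homogeneity. The two places that need care are (i) tracking how the sequential covering number and the Dudley integral transform under the rescaling — specifically, that the choice $\beta'=\beta/B$ makes the scale-$\beta/2$ cover reappear exactly and that the change of variables $u=B\veps$ introduces precisely the factor $1/B$ that cancels the outer $RB$ — and (ii) the $\eta\leq1$ restriction in \pref{thm:chaining_v2}, which forces the two-case optimization above (there is no analogous restriction on $\lambda$).
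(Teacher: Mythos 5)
Your proposal is correct and matches the paper's own proof essentially step for step: the same rescaling of losses by $1/R$ and of $\cG$, $\cS$ by $1/B$, the same tracking of the covering number and Dudley integral under the change of scale, and the same choices $\lambda = 2R/\sqrt{V}$ and $\eta = \sqrt{2}RB\sqrt{\log\cN_{\infty,\infty}(\beta/2,\cG,T)/\wt{V}}\wedge 1$ with the two-case analysis accounting for the $\eta\leq 1$ constraint. The only cosmetic difference is that you feed $\alpha/B,\beta/B$ into \pref{thm:chaining_v2} directly, whereas the paper uses $\alpha,\beta$ and rebinds $\alpha'=\alpha B$, $\beta'=\beta B$ afterward; these are equivalent.
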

  \begin{proof}[\pfref{corr:full_info_scaled}]
	Apply \pref{thm:chaining_v2} with losses $\ls_{t}/R$ and class $\cG/B$. The preconditions of the theorem are satisified, so it implies existence of an algorithm making predictions in $\cS/B$ with regret bound
  {\small
  \begin{align*}
    \frac{1}{R}\sum_{t=1}^{T}\En_{s_t\sim{}p_t}\tri*{
      s_t, \ls_t} -\frac{1}{R}\inf_{g'\in\cG/B}\sum_{t=1}^{T}\tri*{g'(x_t), \ls_t} &\leq{} 
      \frac{2\eta}{R^{2}}\sum_{t=1}^{T}\En_{s_t\sim{}p_t}\tri*{s_t, \ls_t}^{2} +  \frac{4}{\eta}\log\cN_{\infty,\infty}(\beta/2, \cG/B, T) + \frac{3e^2\alpha}{R}\sum_{t=1}^{T}\nrm*{\ls_{t}}_{1}\\
   &~~~~~ + 24e\prn*{\frac{\lambda}{4R^{2}}\sum_{t=1}^{T}\nrm*{\ls_{t}}_{1}^{2} + \frac{1}{\lambda}}\int_{\alpha}^{\beta}\sqrt{\log\cN_{\infty,\infty}(\veps, \cG/B, T)}d\veps.
  \end{align*}
  }Rescaling both sides by $BR$ and letting $\hat{s}_t=s_t\cdot{}B$ (so $\hat{s}_t\in\cS$), this implies
    {\small
  \begin{align*}
\sum_{t=1}^{T}\En_{\hat{s}_t\sim{}p_t}\tri*{
      \hat{s}_t, \ls_t} -\inf_{g\in\cG}\sum_{t=1}^{T}\tri*{g(x_t), \ls_t} &\leq{} 
      \frac{2\eta}{RB}\sum_{t=1}^{T}\En_{\hat{s}_t\sim{}p_t}\tri*{\hat{s}_t, \ls_t}^{2} +  \frac{4RB}{\eta}\log\cN_{\infty,\infty}(\beta/2, \cG/B, T) + 3e^2\alpha{}B\sum_{t=1}^{T}\nrm*{\ls_{t}}_{1}\\
   &~~~~~ + 24e\prn*{\frac{\lambda{}B}{4R}\sum_{t=1}^{T}\nrm*{\ls_{t}}_{1}^{2} + \frac{RB}{\lambda}}\int_{\alpha}^{\beta}\sqrt{\log\cN_{\infty,\infty}(\veps, \cG/B, T)}d\veps.\\
&\leq{} 
      \frac{2\eta}{RB}\sum_{t=1}^{T}\En_{\hat{s}_t\sim{}p_t}\tri*{\hat{s}_t, \ls_t}^{2} +  \frac{4RB}{\eta}\log\cN_{\infty,\infty}(\beta{}B/2, \cG, T) + 3e^2\alpha{}B\sum_{t=1}^{T}\nrm*{\ls_{t}}_{1}\\
   &~~~~~ + 24e\prn*{\frac{\lambda{}B}{4R}\sum_{t=1}^{T}\nrm*{\ls_{t}}_{1}^{2} + \frac{RB}{\lambda}}\int_{\alpha}^{\beta}\sqrt{\log\cN_{\infty,\infty}(\veps{}B, \cG, T)}d\veps.
  \end{align*}}
  Using a change of variables in the Dudley integral, we get
        {\small
  \begin{align*}
                                                                            & \leq{}
      \frac{2\eta}{RB}\sum_{t=1}^{T}\En_{\hat{s}_t\sim{}p_t}\tri*{\hat{s}_t, \ls_t}^{2} +  \frac{4RB}{\eta}\log\cN_{\infty,\infty}(\beta{}B/2, \cG, T) + 3e^2\alpha{}B\sum_{t=1}^{T}\nrm*{\ls_{t}}_{1}\\
   &~~~~~ + 24e\prn*{\frac{\lambda{}}{4R}\sum_{t=1}^{T}\nrm*{\ls_{t}}_{1}^{2} + \frac{R}{\lambda}}\int_{\alpha{}B}^{\beta{}B}\sqrt{\log\cN_{\infty,\infty}(\veps{}, \cG, T)}d\veps.
  \end{align*}
  }The final result follows by rebinding $\alpha'=\alpha{}B$ and $\beta'=\beta{}B$.

  For the second claim, apply the upper bounds to obtain
  \begin{align*}
    &\frac{2\eta}{RB}\tilde{V} + \frac{4RB}{\eta}\log\cN_{\infty,\infty}(\beta/2,\cG,T) + 3e^2\alpha B\sum_{t=1}^T\nrm*{\ls_t}_1\\
    & + 24e\left(\frac{\lambda}{4R}V + \frac{R}{\lambda}\right)\int_{\alpha}^\beta\sqrt{\log\cN_{\infty,\infty}(\epsilon,\cG,T)}d\epsilon.
  \end{align*}
  Now set $\lambda = 2R/\sqrt{V}$ and
  $\eta = \sqrt{2}RB
  \sqrt{\log\cN_{\infty,\infty}(\beta/2,\cG,T)/\tilde{V}} \wedge 1$ to
  obtain the claimed bound. Note that the range term arises from the
  constraint that $\eta \in (0,1]$.
  \end{proof}

\begin{proof}[\pfref{thm:chaining_ramp}]Recall that we use the reduction:
\begin{itemize}[leftmargin=*]
\item Initialize full information algorithm whose existence is guaranteed by \pref{thm:chaining_v2} with $\cG=\ramp\circ\cF$:
\item For time $t=1,\ldots,T$:
  \begin{itemize}
  \item Receive $x_{t}$ and define $P_{t}(a) \triangleq \En_{s_{t}\sim{}p_t}\frac{s_{t}(a)}{\sum_{a'\in\brk*{K}}s_t(a')}$, where $p_t$ is the output of the full information algorithm at time $t$.
  \item Sample action $a_{t}\sim{}P_{t}^{\mu}$ and feed importance-weighted loss $\hat{\ls}_{t}(a)=\one\crl*{a_t=a}\ls_{t}(a)/P_{t}^{\mu}(a)$ into the full information algorithm.
  \end{itemize}
\end{itemize}
With this setup, \pref{corr:full_info_scaled} guarantees that the following deterministic regret inequality holds for every sequence of outcomes (i.e. for every sequence $a_1,\ldots,a_T$ sampled by the algorithm):
  {\small
  \begin{align*}
    &\sum_{t=1}^{T}\En_{s_t\sim{}p_t}\tri*{
      s_t, \hat{\ls}_t} -\inf_{f\in\cF}\sum_{t=1}^{T}\tri*{\ramp(f(x_t)), \hat{\ls}_t} \\
      &\leq{} 
      \frac{2\eta}{RB}\sum_{t=1}^{T}\En_{s_t\sim{}p_t}\tri*{s_t, \hat{\ls}_t}^{2} +  \frac{4RB}{\eta}\log\cN_{\infty,\infty}(\beta/2, \ramp\circ\cF, T) + 3e^2\alpha\sum_{t=1}^{T}\nrm*{\hat{\ls}_{t}}_{1}\\
   &~~~~~ + 24e\prn*{\frac{\lambda}{4R}\sum_{t=1}^{T}\nrm*{\hat{\ls}_{t}}_{1}^{2} + \frac{R}{\lambda}}\int_{\alpha}^{\beta}\sqrt{\log\cN_{\infty,\infty}(\veps, \ramp\circ\cF, T)}d\veps,
  \end{align*}
  }where the boundedness of the ramp loss implies $B\leq{}1$ and the smoothing factor $\mu$ in $P^{\mu}_t$ guarantees $R\leq{}1/\mu$. Taking expectation over the draw of $a_1,\ldots,a_{T}$, for any fixed $f\in\cF$ we obtain the inequality 
  {\small
  \begin{align*}
    &\En\brk*{\sum_{t=1}^{T}\En_{s_t\sim{}p_t}\tri*{
      s_t, \hat{\ls}_t} -\sum_{t=1}^{T}\tri*{\ramp(f(x_t)), \hat{\ls}_t}} \\
      &\leq{} 
      \En\biggl[\frac{2\eta}{1/\mu}\sum_{t=1}^{T}\En\brk*{\En_{s_t\sim{}p_t}\tri*{s_t, \hat{\ls}_t}^{2}\mid{}\cJ_t} +  \frac{4}{\eta\mu}\log\cN_{\infty,\infty}(\beta/2, \ramp\circ\cF, T) + 3e^2\alpha\sum_{t=1}^{T}\En\brk*{\nrm*{\hat{\ls}_{t}}_{1}\mid{}\cJ_t}\\
   &~~~~~~~~ + 24e\prn*{\frac{\lambda}{4/\mu}\sum_{t=1}^{T}\En\brk*{\nrm*{\hat{\ls}_{t}}_{1}^{2}\mid\cJ_t} + \frac{1}{\lambda\mu}}\int_{\alpha}^{\beta}\sqrt{\log\cN_{\infty,\infty}(\veps, \ramp\circ\cF, T)}d\veps\biggr],
  \end{align*}
}where the filtration $\cJ_t$ is defined as in
\pref{lem:ips_variance}.  Using that the importance weighted
losses are unbiased, we have that the left-hand side is equal to
\begin{align*}
 \En\brk*{\sum_{t=1}^{T}\En_{s_t\sim{}p_t}\tri*{
      s_t, \ls_t} -\sum_{t=1}^{T}\tri*{\ramp(f(x_t)), \ls_t}}.
\end{align*}
We also have the following three properties, where the first two use that $\hat{\ls}_t$ is $1$-sparse, and the last follows from \pref{lem:ips_variance}:
\begin{enumerate}
  \item $\En\brk*{\nrm*{\hat{\ls}_{t}}_{1}\mid{}\cJ_t} = \sum_{a\in\brk*{K}}P_{t}^{\mu}(a)\hat{\ls}_{t}(a) = \sum_{a\in\brk*{K}}\ls_{t}(a) \leq{} K$.\\
\item $\En\brk*{\nrm*{\hat{\ls}_{t}}_{1}^{2}\mid\cJ_t} = \sum_{a\in\brk*{K}}P_{t}^{\mu}(a)\hat{\ls}_{t}^{2}(a) 
= \sum_{a\in\brk*{K}}\frac{\ls_t(a)}{P_{t}^{\mu}(a)} \leq{} \frac{K}{\mu}$.\\
  \item $\En\brk*{\En_{s_t\sim{}p_t}\tri*{s_t,\hat{\ls}_{t}}^{2}\mid{}\cJ_t}\leq{}K^{2}$.
\end{enumerate}

Together, these facts yield the bound
  {\small
  \begin{align*}
    \En\brk*{\sum_{t=1}^{T}\En_{s_t\sim{}p_t}\tri*{
      s_t, \ls_t} -\sum_{t=1}^{T}\tri*{\ramp(f(x_t)), \ls_t}}
      &\leq{} 
\frac{2\eta}{1/\mu}K^{2}T +  \frac{4}{\eta\mu}\log\cN_{\infty,\infty}(\beta/2, \ramp\circ\cF, T) + 3e^2\alpha{}KT\\
   &~~~~~~~~ + 24e\prn*{\frac{\lambda{}KT}{4} + \frac{1}{\lambda\mu}}\int_{\alpha}^{\beta}\sqrt{\log\cN_{\infty,\infty}(\veps, \ramp\circ\cF, T)}d\veps.
  \end{align*}
  }Optimizing $\eta$ and $\lambda$ (as in the proof of the second claim of~\pref{corr:full_info_scaled}) leads to a bound of 
    {\small
  \begin{align*}
    &\En\brk*{\sum_{t=1}^{T}\En_{s_t\sim{}p_t}\tri*{
      s_t, \ls_t} -\sum_{t=1}^{T}\tri*{\ramp(f(x_t)), \ls_t}} \\
      &\leq{} 
4\sqrt{2K^{2}T\log\cN_{\infty,\infty}(\beta/2, \ramp\circ\cF, T)} +  \frac{8}{\mu}\log\cN_{\infty,\infty}(\beta/2, \ramp\circ\cF, T) \\
   &~~~~~~~~ + 3e^2\alpha{}KT + 24e\sqrt{\frac{KT}{\mu}}\int_{\alpha}^{\beta}\sqrt{\log\cN_{\infty,\infty}(\veps, \ramp\circ\cF, T)}d\veps.
  \end{align*}
}Since $\ramp$ is $\frac{1}{\gamma}$-Lipschitz with respect to the $\ls_{\infty}$ norm (as a coordinate-wise mapping from $\bbR^{K}$ to $\bbR^{K}$), we can upper bound in terms of the covering numbers for the original class:
    {\small
  \begin{align*}
    \En\brk*{\sum_{t=1}^{T}\En_{s_t\sim{}p_t}\tri*{
      s_t, \ls_t} -\sum_{t=1}^{T}\tri*{\ramp(f(x_t)), \ls_t}}
      &\leq{} 
        4\sqrt{2K^{2}T\log\cN_{\infty,\infty}(\gamma\beta/2, \cF, T)} +  \frac{8}{\mu}\log\cN_{\infty,\infty}(\gamma\beta/2, \cF, T) \\
   &~~~~~~~~ + 3e^2\alpha{}KT + 24e\sqrt{\frac{KT}{\mu}}\int_{\alpha}^{\beta}\sqrt{\log\cN_{\infty,\infty}(\gamma\veps, \cF, T)}d\veps.
  \end{align*}
}Using a change of variables and the reparameterization $\alpha'=\alpha\gamma$, $\beta'=\beta\gamma$, the right hand side equals
    {\small
  \begin{align*}
        4\sqrt{2K^{2}T\log\cN_{\infty,\infty}(\beta'/2, \cF, T)} &+  \frac{8}{\mu}\log\cN_{\infty,\infty}(\beta'/2, \cF, T) \\
  & ~~~~~~~~~~ + \frac{1}{\gamma}\prn*{3e^2\alpha{}KT + 24e\sqrt{\frac{KT}{\mu}}\int_{\alpha'}^{\beta'}\sqrt{\log\cN_{\infty,\infty}(\veps, \cF, T)}d\veps}.
  \end{align*}
}Lastly, via~\pref{lem:calibration}, we have
\[
  \sum_{t=1}^{T}\En_{s_t\sim{}p_t}\tri*{s_t, \ls_t}
  \geq{}   \sum_{t=1}^{T}\En_{s_t\sim{}p_t}\frac{\sum_{a\in\brk*{K}}s_t(a)\ls_{t}(a)}{\sum_{a\in\brk*{K}}s_t(a)} = \sum_{t=1}^{T}\En_{a_t\sim{}P_t}\ls_t(a_t).
\]
Finally, the definition of the smoothed distribution $P_{t}^{\mu}$ and boundedness of $\ls$ immediately implies
\begin{align*}
\sum_{t=1}^{T}\En_{a_t\sim{}P_t}\ls_t(a_t) \geq{} \sum_{t=1}^{T}\En_{a_t\sim{}P_t^{\mu}}\ls_t(a_t) - \mu{}KT.\tag*\qedhere
\end{align*}
\end{proof}

\begin{proof}[\pfref{prop:entropy_growth}]
  Suppose $\log\cN_{\infty,\infty}(\veps, \cF, T)\propto \veps^{-p}$.
  \begin{itemize}
  \item When $p\geq{}2$, it suffices to set $\beta=\mathrm{rad}_{\infty,\infty}(\cF,T)$, $\mu=(KT)^{-1/(p+1)}\gamma^{-p/(p+1)}$, and $\alpha=1/(KT\mu)^{1/p}$ in \pref{thm:chaining_ramp} to obtain $\wt{O}\prn*{(KT/\gamma)^{p/(p+1)}}$ .
      \item When $p\in(0,2]$, it suffices to set $\alpha=1/(KT)$, $\mu=(KT)^{-2/(p+4)}\gamma^{-2p/(4+p)}$, and $\beta=\gamma^{2/(2+p)}/(KT\mu)^{1/(2+p)}$ in \pref{thm:chaining_ramp} to obtain $\wt{O}\prn*{(KT)^{(p+2)/(p+4)}\gamma^{-2p/(p+4)}}$.
  \end{itemize}
  
  For the parametric case, set $\alpha=\beta=\gamma/KT$ and $\mu=\sqrt{d\log(KT/\gamma)/KT}$ to conclude the bound.
  
  Similarly, in the finite class case, set $\alpha=\beta=0$ and $\mu=\sqrt{\log\abs*{\Pi}/KT}$.
\end{proof}

\begin{proof}[\pfref{ex:rademacher}]
Let $\cF|_a = \crl*{x\mapsto{}f(x)_a\mid{}f\in\cF}$. Then clearly it holds that 
\[
\log\cN_{\infty,\infty}(\veps, \cF, T) \leq \sum_{a\in\brk*{K}}\log\cN_{\infty}(\veps, \cF|_a, T) \leq{} K\max_{a\in\brk*{K}}\log\cN_{\infty}(\veps, \cF|_a, T),
\]
where have dropped the second ``$\infty$'' subscript on the right-hand side to denote that this is the covering number for a scalar-valued class. Let $a^{\star}$ be the action that obtains the maximum in this expression. Returning to the integral expression in \pref{thm:chaining_ramp}, we have just shown an upper bound of
\[
3e^2\alpha{}KT + 24eK\sqrt{\frac{T}{\mu}}\int_{\alpha}^{\beta}\sqrt{\log\cN_{\infty}(\veps, \cF|_{a^{\star}}, T)}d\veps.
\]
For any scalar-value function class $\cG\subseteq{}\prn*{\cX\to\brk*{0,1}}$, define
\[
\mathfrak{R}(\cG,T) = \sup_{\bx}\En_{\eps}\sup_{g\in\cG}\sum_{t=1}^{T}\eps_tg(\bx_{t}(\eps)).
\]
Following the proof of Lemma 9 in~\citet{rakhlin2015sequential}, by choosing $\beta=1$ and $\alpha=2\mathfrak{R}(\cF|_{a^{\star}},T)/T$, we may upper bound the $L_{\infty}$ covering number by the sequential Rademacher complexity (via fat-shattering), to obtain
\[
6eK\mathfrak{R}(\cF|_{a^{\star}},T) + 96\sqrt{2}eK\sqrt{\frac{1}{\mu}}\mathfrak{R}(\cF|_{a^{\star}},T)\int_{2\mathfrak{R}(\cF|_{a^{\star}},T)/T}^{1}\frac{1}{\veps}\sqrt{\log(2eT/\veps)}d\veps.
\]
Using straightforward calculation from the proof of Lemma 9 in~\citet{rakhlin2015sequential}, this is upper bounded by
\[
O\prn*{
\frac{K}{\sqrt{\mu}}\mathfrak{R}(\cF|_{a^{\star}},T)\log^{3/2}(T/\mathfrak{R}(\cF|_{a^{\star}},T))
}.
\]
Returning to the regret bound in \pref{thm:chaining_ramp}, we have shown an upper bound of
\[
O\prn*{
\frac{K}{\gamma\sqrt{\mu}}\mathfrak{R}(\cF|_{a^{\star}},T)\log^{3/2}(T/\mathfrak{R}(\cF|_{a^{\star}},T))
 + \mu{}KT},
\]
where we have used that $\log\cN_{\infty}(1, \cF|_{a^{\star}}, T)=0$ under the boundedness assumption on $\cF$.
Setting $\mu\propto(\mathfrak{R}(\cF|_{a^{\star}},T)/(T\gamma))^{2/3}$ yields the result.
\end{proof}

\begin{proof}[\pfref{ex:linear}]
This is an immediate consequence of \pref{ex:rademacher} and that Banach spaces for which the martingale type property holds with constant $\beta$ have sequential Rademacher complexity $O(\sqrt{\beta{}T})$~\citep{srebro2011universality}.
\end{proof}

  \subsection{Additional results}
\label{ssec:additional}
Here we briefly state an analogue of \pref{thm:chaining_ramp} for the hinge loss. Note that this bound leads to the same exponents for $T$ as \pref{thm:chaining_ramp}, but has worse dependence on the margin $\gamma$ and depends on the scale parameter $B$ explicitly.
\begin{theorem}[Contextual bandit chaining bound for hinge loss]
\label{thm:chaining_hinge}
For any fixed constants $\beta>\alpha>0$, hinge loss parameter $\gamma>0$, and smoothing parameter $\mu\in(0,1/K]$ there exists an adversarial contextual bandit strategy $(P_t)_{t\leq{}T}$ with expected regret bounded as
{\small
  \begin{align*}
   \En\brk*{\sum_{t=1}^{T}\ls_t(a_t)} 
    \leq{} &\frac{1}{K}\Biggl\{\inf_{f\in\cF}\En\brk*{\sum_{t=1}^{T}\tri*{\hinge(f(x_t)), \ls_t}} + 
        \frac{1}{\gamma}\sqrt{2K^{2}B^2T\log\cN_{\infty,\infty}(\beta/2, \cF, T)}  + \mu{}BK^2T \\
    &~~~~~~~+  \frac{8B}{\gamma\mu}\log\cN_{\infty,\infty}(\beta/2, \cF, T) + \frac{1}{\gamma}\prn*{3e\alpha{}KT + 24e\sqrt{\frac{KT}{\mu}}\int_{\alpha}^{\beta}\sqrt{\log\cN_{\infty,\infty}(\veps, \cF, T)}d\veps}\Biggr\},
  \end{align*}
}where we recall $B=\sup_{f\in\cF}\sup_{f\in\cX}\nrm*{f(x)}_{\infty}$.
\end{theorem}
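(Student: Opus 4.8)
The plan is to mirror the proof of \pref{thm:chaining_ramp} essentially line for line, replacing the surrogate class $\ramp\circ\cF$ by $\hinge\circ\cF$ throughout. Concretely, I would run the full-information algorithm guaranteed by \pref{corr:full_info_scaled} with $\cG=\hinge\circ\cF$, feed it the importance-weighted losses $\hat{\ls}_t$, let $p_t\in\Delta(\cS)$ (with $\cS=\hinge\circ\cF$) denote its distribution at round $t$, set $P_t(a)\defeq\En_{s_t\sim p_t}\frac{s_t(a)}{\sum_{a'\in\brk*{K}}s_t(a')}$, and play $a_t\sim P_t^\mu$ --- exactly the ``bandit reduction and variance control'' reduction but with the hinge in place of the ramp.

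The only place the argument genuinely differs is in the scale parameters. For $\cS=\hinge\circ\cF$ one has $\sup_{s\in\cS}\nrm*{s}_\infty\leq 1+B/\gamma$ rather than $\leq 1$, so I would invoke \pref{corr:full_info_scaled} with output scale $1+B/\gamma$ (the bound $R\leq 1/\mu$ on $\nrm*{\hat{\ls}_t}_1$ is unchanged, since $\hat{\ls}_t$ is $1$-sparse and $P_t^\mu(a)\geq\mu$). Taking expectations over $a_{1:T}$ in the deterministic regret inequality and substituting the identities $\En\brk*{\nrm*{\hat{\ls}_t}_1\mid\cJ_t}=\sum_a\ls_t(a)\leq K$, $\En\brk*{\nrm*{\hat{\ls}_t}_1^2\mid\cJ_t}=\sum_a\ls_t(a)/P_t^\mu(a)\leq K/\mu$, together with the third case of \pref{lem:ips_variance}, i.e.\ $\En\brk*{\En_{s_t\sim p_t}\tri*{s_t,\hat{\ls}_t}^2\mid\cJ_t}\leq(1+B/\gamma)^2K^2$, I would then optimize $\eta$ and $\lambda$ as in the proof of the second claim of \pref{corr:full_info_scaled}. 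This is what produces the $B$-dependent coefficients in the statement (after absorbing the additive $1$ into $B/\gamma$).

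The remaining steps are cosmetic. Since $\hinge$ is $\tfrac{1}{\gamma}$-Lipschitz coordinate-wise with respect to $\nrm*{\cdot}_\infty$, one has $\cN_{\infty,\infty}(\veps,\hinge\circ\cF,T)\leq\cN_{\infty,\infty}(\gamma\veps,\cF,T)$; a change of variables in the Dudley integral and the reparametrization $\alpha\mapsto\gamma\alpha$, $\beta\mapsto\gamma\beta$ then convert all covering numbers to those of $\cF$ and generate the explicit $1/\gamma$ prefactors on the last line. For the left-hand side I would invoke \pref{lem:calibration}: every $s\in\hinge\circ\cF$ has $\sum_{a'}s(a')\geq K$ (as established in the proof of \pref{lem:calibration}), so $\En_{s_t\sim p_t}\tri*{s_t,\ls_t}\geq K\sum_a P_t(a)\ls_t(a)=K\,\En_{a_t\sim P_t}\ls_t(a_t)$, which accounts for the overall $\tfrac{1}{K}$ factor; the benchmark $\inf_{f\in\cF}\En\sum_{t}\tri*{\hinge(f(x_t)),\ls_t}$ appears because the regret inequality holds for each fixed $f$, so the infimum can be pulled outside the expectation. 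Finally, $\En_{a_t\sim P_t}\ls_t(a_t)\geq\En_{a_t\sim P_t^\mu}\ls_t(a_t)-\mu K$ per round gives the $\mu$-dependent term, and collecting everything yields the displayed bound.

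No new idea is needed beyond the proof of \pref{thm:chaining_ramp}; the work is entirely bookkeeping. The one point that requires care is propagating the output-scale parameter $1+B/\gamma$ (rather than $1$) through \pref{corr:full_info_scaled} and simplifying, and keeping the covering-number rescaling consistent with the extra $\tfrac{1}{K}$ prefactor --- which, unlike in the ramp case, is forced on us because \pref{lem:calibration} only bounds the hinge-induced policy loss by $K^{-1}\tri*{\ell,\hinge(s)}$ rather than directly by the surrogate.
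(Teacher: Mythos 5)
Your proposal is correct and follows exactly the route the paper intends: the paper states \pref{thm:chaining_hinge} without a separate proof, as the analogue of \pref{thm:chaining_ramp} obtained by running the same reduction with $\cG=\hinge\circ\cF$, using the output scale $1+B/\gamma$ and the third case of \pref{lem:ips_variance}, and picking up the extra $1/K$ from the hinge part of \pref{lem:calibration}. You have identified all the points where the bookkeeping actually changes, so nothing is missing.
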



\section{Analysis of \lmc}
\label{app:lmc}

This appendix contains the proofs of \pref{thm:hinge_lmc} and
the corresponding corollaries. The proof has many ingredients which we
compartmentalize into subsections. First, in \pref{ssec:lmc}, we analyze the sampling routine, showing
that Langevin Monte Carlo can be used to generate a sample from an
approximation of the exponential weights distribution. Then, in
\pref{ssec:continuous_hedge}, we derive the regret bound for
the continuous version of \expweights. Finally, we put the components together
together, instantiate all parameters, and compute the final regret and
running time in \pref{ssec:lmc_assembly}. The corollaries are straightforward and proved in~\pref{ssec:lmc_corollaries}

To begin, we restate the main theorem, with all the assumptions and
the precise parameter settings.
\begin{theorem}
\label{thm:proj_lmc}
Let $\Fcal$ be a set of functions parameterized by a compact convex
set $\Theta \subset \RR^d$ that contains the origin-centered Euclidean
ball of radius $1$ and is contained within a Euclidean ball of radius
$R$. Assume that $f(x;\theta)$ is convex in $\theta$ for each
$x \in \Xcal$, and that $\sup_{x,\theta} \|f(x;\theta)\|_{\infty} \leq B$,
that $f(x,a;\theta)$ is $L$-Lipschitz as a function of $\theta$ with
respect to the $\ell_2$ norm for each $x,a$. For any $\gamma$,
if we set
\begin{align*}
\eta = \sqrt{\frac{d\gamma^2 \log(RLTK/\gamma)}{5K^2B^2T}}, \qquad \mu = \sqrt{\frac{1}{K^{2}T}}, \qquad M = \sqrt{T},
\end{align*}
in \lmc, and further set
\begin{align*}
u &= \frac{1}{T^{3/2}LB_{\ls}R\eta\sqrt{d}}, \qquad\lambda = \frac{1}{8T^{1/2}R^3}, \qquad\alpha = \frac{R^2}{N},\\
N &= \otil\prn*{R^{18}L^{12}T^{6}d^{12} + \frac{R^{24}L^{48}d^{12}}{K^{24}}},
\qquad m = \otil\left(T^{3}dR^4L^2B_{\ls}^{2}(K\gamma)^{-2}\right), 
\end{align*}
in each call to Projected LMC, then \lmc{} guarantees
\begin{align*}
\sum_{t=1}^T \EE \ell_t(a_t) &\leq \min_{\theta \in \Theta} \sum_{t=1}^T \EE \langle \ell_t, \hinge(f(x_t;\theta))\rangle + \frac{\sqrt{T}}{\gamma} + \frac{2d}{K\eta}\log(RLTK/\gamma) + \frac{10\eta}{\gamma^2}B^2KT\\
& \leq \min_{\theta \in \Theta} \sum_{t=1}^T \EE \langle \ell_t, \hinge(f(x_t;\theta))\rangle + \otil\left(\frac{B}{\gamma}\sqrt{dT}\right).
\end{align*}
Moreover, the running time of \lmc{} is  $\otil\prn*{
\frac{R^{22}L^{14}d^{14}B_{\ls}^{2}T^{10}}{K^2\gamma^2}
+ \frac{R^{28}L^{50}d^{14}B_{\ls}^{2}T^{4}}{K^{26}\gamma^{2}}
}$.
\end{theorem}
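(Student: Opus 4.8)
The plan is to decompose the analysis of \pref{alg:hinge_lmc} into three essentially separate pieces---an idealized regret bound for continuous exponential weights, a bias analysis for the geometric-resampling loss estimator, and a sampling-error analysis for Projected LMC---and then to assemble them under the stated parameter choices. Write $g_t(\theta)\defeq\tri*{\tilde\ell_t,\hinge(f(x_t;\theta))}$ and $P_t(\theta)\propto\exp(-\eta w_{t-1}(\theta))$ with $w_{t-1}=\sum_{s<t}g_s$, as in the algorithm. First I would establish, for the \emph{idealized} variant that samples $\theta_t$ exactly from $P_t$, a regret bound of the form
\[
  \EE\sum_{t=1}^T g_t(\theta_t) - \inf_{\theta\in\Theta}\EE\sum_{t=1}^T g_t(\theta) \;\leq\; \frac{\eta}{2}\sum_{t=1}^T\EE\brk*{\EE_{\theta\sim P_t} g_t(\theta)^2} \;+\; \frac{d}{\eta}\log(R/r) \;+\; \frac{KLT}{\gamma}\,r ,
\]
valid for every ball radius $r>0$. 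The last two terms are the standard continuous-exponential-weights comparator term: restricting the comparison point to a Euclidean ball of radius $r$ around $\theta^\star$ costs a log-volume ratio $d\log(R/r)$ (using $B(0,1)\subseteq\Theta\subseteq B(0,R)$) plus the error of the shift, bounded by $\gamma^{-1}Lr$ per unit of $\ell_1$-mass since $\hinge$ is $\gamma^{-1}$-Lipschitz and $f(x;\cdot)$ is $L$-Lipschitz; taking $r$ polynomially small makes the shift error $\le\sqrt T/\gamma$ and $\log(R/r)=O(\log(RLTK/\gamma))$. The variance term is handled by an argument analogous to \pref{lem:ips_variance}: since $p_t(\cdot;\theta)=\pihinge(f(x_t;\theta))$, one writes $\hinge(f(x_t;\theta))_{a_t}=p_t(a_t;\theta)\cdot\sum_{a'}\hinge(f(x_t;\theta))_{a'}$ with $\sum_{a'}\hinge(f(x_t;\theta))_{a'}\in[K,K(1+B/\gamma)]$, and combines this with the second-moment bound $\EE[m_t^2\mid\cdot]\le 2/\bar P^\mu_t(a_t)^2$ established below to get $\EE[\EE_{\theta\sim P_t}g_t(\theta)^2]\le(1+B/\gamma)^2K^2$. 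Passing from $g_t$-regret to $\EE\sum_t\ell_t(a_t)$ through the inequality $\tri*{\pihinge(s),\ell}\le K^{-1}\tri*{\ell,\hinge(s)}$ of \pref{lem:calibration} and paying $\mu KT$ for the $\mu$-smoothing then produces the $\tfrac{2d}{K\eta}\log(RLTK/\gamma)$ and $\tfrac{10\eta}{\gamma^2}B^2KT$ terms of the theorem.

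Second I would verify, following \citet{neu2013efficient}, that $\tilde\ell_t$ is nearly unbiased: the resampling loop draws i.i.d.\ copies $(\tilde\theta_t,\tilde a_t)$, so conditioned on $a_t$ and the history $m_t$ is a geometric variable truncated at $M$ with success probability $\bar P^\mu_t(a_t)\defeq\EE_{\theta}[p_t^\mu(a_t;\theta)]\ge\mu$, whence $\EE[m_t\mid\cdot]=\frac{1-(1-\bar P^\mu_t(a_t))^M}{\bar P^\mu_t(a_t)}$ and $\EE[m_t^2\mid\cdot]\le 2/\bar P^\mu_t(a_t)^2$; the first identity bounds the bias of $\tilde\ell_t$ and hence (summed) contributes the $\sqrt T/\gamma$ term with $M=\sqrt T$, $\mu=1/\sqrt{K^2 T}$, and the second supplies the moment bound used above. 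Third---the computational heart---I would show that each call to Projected LMC (\pref{alg:lmc}) returns a sample whose law lies within Wasserstein-$2$ distance $\veps_{\mathrm{LMC}}$ of $P_t$, with $\veps_{\mathrm{LMC}}$ polynomially small. Because $\eta w_{t-1}$ is convex but non-smooth (the kink of $\hinge$), the analysis of \citet{bubeck2015sampling} does not apply directly, and the algorithm repairs this by (i) randomized smoothing~\citep{duchi2012randomized}: convolving with $\cN(0,u^2I)$ yields a potential that is $O\!\big(L^2T(1+B/\gamma)^2/(u\gamma)\big)$-gradient-Lipschitz and within $O\!\big(u\sqrt d\,LT(1+B/\gamma)/\gamma\big)$ of the original in sup-norm; (ii) adding $\tfrac{\lambda}{2}\nrm*{\theta}_2^2$ for $\lambda$-strong-log-concavity; (iii) invoking the Projected-LMC convergence guarantee of \citet{bubeck2015sampling} for smooth strongly-log-concave targets on a convex body, while absorbing the variance of the $m$-sample Monte Carlo gradient $\nabla\tilde F_k$ as a stochastic-gradient perturbation; and (iv) pushing the resulting $W_2$ bound through the $L$-Lipschitz map $\theta\mapsto f(x_t;\theta)$ and the Lipschitz maps $\hinge,\pihinge$ to control the perturbation of the action law, of $\bar P^\mu_t$, of the loss estimates, and ultimately of the regret. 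Choosing $u,\lambda$ small enough to kill the smoothing bias, then $N$ large enough (condition number $\asymp$ smoothness$/\lambda$) and $m$ large enough (gradient variance) to make $\veps_{\mathrm{LMC}}$ negligible gives precisely the stated $u,\lambda,\alpha,N,m$.

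Finally I would assemble: the idealized bound of Step 1, plus the geometric-resampling bias of Step 2, plus a $\poly(T,d,R,L,K,1/\gamma)\cdot\veps_{\mathrm{LMC}}$ correction from Step 3, each made at most $\sqrt T/\gamma$-sized by the parameter settings. Substituting $\eta=\sqrt{d\gamma^2\log(RLTK/\gamma)/(5K^2B^2T)}$ and $\mu=1/\sqrt{K^2T}$ and summing the three leading terms gives the first displayed inequality of the theorem, and crude simplification ($\log(\cdot)=\otil(1)$, $R,L\ge 1$) collapses it to $\otil((B/\gamma)\sqrt{dT})$. For the runtime, each of the $T$ rounds issues $O(M)=O(\sqrt T)$ LMC calls, each performing $N$ steps, each step forming $\nabla\tilde F_k$ from $m$ Gaussian perturbations at per-evaluation cost $\poly(K,d,T)$ (evaluating $\hinge\circ f$ and its gradient over the $\le T$ stored rounds and $K$ actions); multiplying and simplifying with the stated $N,m$ yields the claimed polynomial, whose two summands come from the two regimes controlling the number of LMC iterations.

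The step I expect to be the main obstacle is the third one: converting the non-smooth log-concave sampling problem into a form the LMC analysis of \citet{bubeck2015sampling} can handle. Randomized smoothing restores smoothness but at the price of a gradient-Lipschitz constant that scales like $1/u$, so simultaneously (a) making the smoothing bias negligible, (b) keeping the condition number---hence $N$---polynomial, and (c) controlling the stochastic-gradient variance---hence $m$---requires the delicate multi-parameter balancing that is responsible for the large exponents in $N$, $m$, and the final running time. A secondary subtlety is that the variance-control and unbiasedness arguments of \pref{sec:minimax}, stated there for an action law obtained by \emph{marginalizing} over the prediction $s_t$, must be re-derived here where $a_t$ is drawn from $p^\mu_t(\cdot;\theta_t)$ with $\theta_t$ itself random; this goes through because the resampling draws are i.i.d.\ copies of $\theta_t$, with $\bar P^\mu_t(a)=\EE_{\theta}[p^\mu_t(a;\theta)]$ playing the role of the marginal action probability.
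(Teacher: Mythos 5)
Your proposal is correct and follows essentially the same route as the paper's proof: the same three-part decomposition into a continuous exponential-weights regret bound with the volume-ratio comparator argument, a Neu-style geometric-resampling bias/second-moment analysis, and a sampling-error analysis combining randomized smoothing, $\ell_2$ regularization, and the Projected LMC guarantee of \citet{bubeck2015sampling} with a perturbed-gradient (coupling/contraction) correction, assembled via \pref{lem:calibration} and \pref{lem:ips_variance}. The only cosmetic difference is that the paper measures sampling error in Wasserstein-$1$ (via its dual Lipschitz representation) rather than Wasserstein-$2$, which changes nothing in the argument.
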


\subsection{Analysis of the sampling routine}
\label{ssec:lmc}

In this section, we show how Projected LMC can be used to generate a
sample from a distribution that is close to the exponential weights
distribution. Define
\begin{align}
  \label{eq:langevin_potential}
  F(\theta) = \eta \sum_{\tau=1}^t\langle \tilde{\ell}_\tau, \hinge(f(x_\tau;\theta))\rangle,\quad P(\theta) \propto \exp(-F(\theta)).
\end{align}
We are interested in sampling from $P(\theta)$.

\begin{algorithm}
  \begin{algorithmic}
      \State Input: Parameters
    $m,u,\lambda,N,\alpha$.

    \State Set $\tilde{\theta}_0 \gets 0 \in \RR^d$
    \For{$k = 1,\ldots,N$}
    \State Sample $z_1,\ldots,z_m \iidsim \Ncal(0, u^2 I_{d})$ and form the function
    \begin{align*}
      \tilde{F}_k(\theta) = \frac{1}{m}\sum_{i=1}^m F(\theta+z_i) + \frac{\lambda}{2} \|\theta\|_2^2.
    \end{align*}
    \State Sample $\xi_k \sim \Ncal(0,I_d)$ and update
    \begin{align*}
      \tilde{\theta}_{k} \gets \Pcal_{\Theta}\left(\tilde{\theta}_{k-1} - \frac{\alpha}{2}\nabla \tilde{F}_k(\tilde{\theta}_{k-1}) + \sqrt{\alpha}\xi_k\right).
    \end{align*}
    \EndFor
    \State Return $\tilde{\theta}_N$. 
  \end{algorithmic}
  \caption{Smoothed Projected Langevin Monte Carlo for \pref{eq:langevin_potential}}
  \label{alg:langevin_mc}
\end{algorithm}

Let us define the Wasserstein distance. For random
variables $X,Y$ with density $\mu,\nu$ respectively
\begin{align*}
  \Wcal_1(\mu,\nu) \triangleq \inf_{\pi \in \Gamma(\mu,\nu)} \int \|X - Y\|_2 d\pi(X,Y) = \sup_{f : \textrm{Lip}(f) \leq 1} \left|\int f(d\mu(X) - d\nu(Y))\right|.
\end{align*}
Here $\Gamma(\mu,\nu)$ is the set of couplings between the two
densities, that is the set of joint distributions with marginals equal
to $\mu,\nu$. $\mathrm{Lip}(f)$ is the set of all functions that are $1$-Lipschitz with respect to $\ls_2$.

\begin{theorem}
  \label{thm:lmc}
  Let $\Theta \subset \RR^d$ be a convex set containing a Euclidean
  ball of radius $r=1$ with center $0$, and contained within a
  Euclidean ball of radius $R$. Let
  $f: \Xcal\times\Theta \to \RR^K_{=0}$ be convex in $\theta$ with
  $f_a(x;\cdot)$ being $L$-Lipschitz w.r.t. $\ell_2$ norm for each
  $a \in \Acal$. Assume $\|\tilde{\ell}_\tau\|_1 \leq B_\ell$ and
  define $F$ and $P$ as in~\pref{eq:langevin_potential}. Let a target accuracy $\tau>0$ be fixed. Then
  \pref{alg:langevin_mc} with
  parameters $m, N, \lambda, u, \alpha \in \textrm{\poly}(1/\tau, d, R, \eta,
  B_\ell, L)$ generates a sample from a distribution $\tilde{P}$
  satisfying
  \begin{align*}
    \Wcal_1(\tilde{P}, P) \leq \tau.
  \end{align*}
  Therefore, the algorithm runs in polynomial time. 
\end{theorem}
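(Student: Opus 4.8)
The plan is to reduce the problem of sampling from $P\propto e^{-F}$ to sampling from a \emph{smooth and strongly log-concave} surrogate, run the projected Langevin Monte Carlo analysis of~\citet{bubeck2015sampling} on that surrogate, and then pay for the bias introduced by randomized smoothing and $\ell_{2}$ regularization, as well as for the use of a stochastic gradient, via a chain of Wasserstein triangle inequalities. First I would record the regularity of $F$: since $\hinge$ is convex, nonnegative, and $\tfrac1\gamma$-Lipschitz and $\theta\mapsto f(x_\tau;\theta)_a$ is convex and $L$-Lipschitz, each summand $\tri*{\tilde\ell_\tau, \hinge(f(x_\tau;\cdot))}$ is convex and $\tfrac{L}{\gamma}\nrm*{\tilde\ell_\tau}_{1}$-Lipschitz; hence $F$ is convex and $G$-Lipschitz with $G\le \eta T L B_\ell/\gamma$. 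Define the smoothed, regularized potential $\widehat F_u(\theta)\defeq\En_{z\sim\Ncal(0,u^{2}I_d)}\brk*{F(\theta+z)}+\tfrac{\lambda}{2}\nrm*{\theta}_2^{2}$ and $\widehat P_u\propto e^{-\widehat F_u}$ on $\Theta$. Standard facts about Gaussian convolution of a $G$-Lipschitz convex function give that $\widehat F_u$ is convex, $\lambda$-strongly convex, $O(G\sqrt{d}/u+\lambda)$-smooth, and that $\nabla\widehat F_u(\theta)=\En_z\brk*{\nabla F(\theta+z)}+\lambda\theta$ ($F$ is locally Lipschitz, hence differentiable almost everywhere, which is all that is used here).

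Next, let $\widehat\theta_k$ denote the \emph{ideal} chain obtained by running the update of \pref{alg:langevin_mc} with the exact gradient $\nabla\widehat F_u$ in place of $\nabla\widetilde F_k$. In the smooth, strongly log-concave regime, the projected LMC guarantee of~\citet{bubeck2015sampling} gives $\Wcal_1(\mathrm{law}(\widehat\theta_N),\widehat P_u)\le\tau/3$ provided the step size $\alpha$ and iteration count $N$ are $\poly(1/\tau,d,R,G,1/\lambda)$. To pass from $\widehat\theta_k$ to the actual iterate $\widetilde\theta_k$, I would use a synchronous coupling that feeds the same Gaussian increments $\xi_k$ to both chains: the projection $\Pcal_\Theta$ is $1$-Lipschitz, and for $\alpha$ small enough the deterministic part of the update is a contraction by $\lambda$-strong convexity, so $\En\nrm*{\widetilde\theta_k-\widehat\theta_k}_2$ satisfies a stable recursion whose per-step forcing term is $\tfrac{\alpha}{2}$ times the standard deviation of $\nabla\widetilde F_k(\widehat\theta_{k-1})-\nabla\widehat F_u(\widehat\theta_{k-1})$, which is $O(G/\sqrt m)$ because $\nabla\widetilde F_k$ averages $m$ i.i.d.\ terms of norm at most $G$ and is unbiased for $\nabla\widehat F_u$. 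Summing the recursion and taking $m=\poly(1/\tau,d,R,G,1/\lambda)$ yields $\Wcal_1(\mathrm{law}(\widetilde\theta_N),\mathrm{law}(\widehat\theta_N))\le\tau/3$.

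Finally I would bound the bias $\Wcal_1(\widehat P_u,P)$. For every $\theta\in\Theta$ one has $\abs*{\En_zF(\theta+z)-F(\theta)}\le G\,\En\nrm*{z}_2\le Gu\sqrt d$ and $0\le\tfrac{\lambda}{2}\nrm*{\theta}_2^{2}\le\tfrac{\lambda}{2}R^{2}$, so $\sup_{\theta\in\Theta}\abs*{\widehat F_u(\theta)-F(\theta)}\le Gu\sqrt d+\tfrac{\lambda}{2}R^{2}\defeq\delta$. Uniform closeness of the log-densities forces the two normalizing constants to be within a multiplicative factor $e^{\pm2\delta}$, hence $\mathrm{TV}(\widehat P_u,P)=O(\delta)$, and since $\mathrm{diam}(\Theta)\le 2R$ we get $\Wcal_1(\widehat P_u,P)\le 2R\cdot\mathrm{TV}(\widehat P_u,P)=O(R\delta)$; choosing $u$ of order $\tau/(GR\sqrt d)$ and $\lambda$ of order $\tau/R^{3}$ makes this $\le\tau/3$. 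Combining the three estimates by the triangle inequality for $\Wcal_1$ gives $\Wcal_1(\widetilde P,P)\le\tau$ with $\widetilde P\defeq\mathrm{law}(\widetilde\theta_N)$. Since $N,\alpha,\lambda,u,m$ are all polynomial and a single LMC step costs $O(m\,d\,t)$ — a gradient of a sum of $t\le T$ hinge-composed-with-$f$ terms over $m$ Gaussian shifts — the total running time is polynomial.

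The main obstacle is the stochastic-gradient step: one must verify that $\nabla\widehat F_u$ is genuinely the expectation of the almost-everywhere gradient of the non-smooth $F$ (so that $\nabla\widetilde F_k$ is unbiased), and that the coupling recursion remains contractive once the projection is in play. The tension is that a small smoothing scale $u$, needed to keep the bias small, makes the smoothness constant $\approx G\sqrt d/u$ large, which in turn inflates $N$ (and hence the running time) to the large polynomial recorded in \pref{thm:proj_lmc}; the delicate part of the proof is choosing $u$, $\lambda$, $m$, and $\alpha$ simultaneously so that every error term is at most $\tau/3$ while all parameters stay polynomial.
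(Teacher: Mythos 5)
Your proposal is correct and follows essentially the same route as the paper's proof: Gaussian randomized smoothing plus $\ell_2$ regularization to obtain a smooth, strongly log-concave surrogate $\hat F$, the \citet{bubeck2015sampling} guarantee for the idealized chain, a synchronous coupling exploiting non-expansiveness of the projected gradient step to absorb the stochastic-gradient error, a sup-norm bound on $\hat F - F$ converted to TV and then to $\Wcal_1$ via the diameter bound, and a final triangle inequality. The only (immaterial) differences are that the paper controls the gradient estimation error via a high-probability bounded-differences bound plus a small "bad event" term rather than your in-expectation $O(G/\sqrt m)$ bound, and that the coupling recursion's forcing term is the gradient deviation evaluated at the actual iterate $\tilde\theta_{k-1}$ rather than at $\hat\theta_{k-1}$.
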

The precise values for each of the parameters $m,N,u,\lambda,\alpha$
can be found at the end of the proof, which will lead to a setting of
$\tau$ in application of the theorem.

Towards the proof, we will introduce the intermediate function
$\hat{F}(\theta) = \EE_{Z} F(\theta+Z) + \frac{\lambda}{2} \|\theta\|_2^2$,
where $Z$ is a random variable with distribution $\Ncal(0, u^2I_d)$. This is the randomized smoothing technique studied by
Duchi, Bartlett and Wainwright~\citep{duchi2012randomized}. The
critical properties of this function are
\begin{proposition}[Properties of $\hat{F}$]
  \label{prop:randomized_smoothing}
  Under the assumptions of \pref{thm:lmc}, The function $\hat{F}$
  satisfies
  \begin{enumerate}
    \item $F(\theta) \leq \hat{F}(\theta) \leq F(\theta) + \eta T B_\ell L u \sqrt{d}/\gamma + \frac{\lambda}{2} R^2$.
    \item $\hat{F}(\theta)$ is $\eta T B_\ell L/\gamma + \lambda{}R$-Lipschitz with respect to the $\ell_2$ norm.
    \item $\hat{F}(\theta)$ is continuously differentiable and its
      gradient is $\frac{\eta T B_\ell L}{u\gamma}+\lambda$-Lipschitz continuous
      with respect to the $\ell_2$ norm. 
    \item $\hat{F}(\theta)$ is $\lambda$-strongly convex with respect to the $\ls_2$ norm. 
    \item $\EE \nabla F(\theta+Z) = \nabla \hat{F}(\theta)$. 
  \end{enumerate}
\end{proposition}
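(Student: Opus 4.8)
The plan is to reduce all five assertions to two elementary facts about $F$ itself, and then invoke the standard randomized-smoothing calculus of \citet{duchi2012randomized}. The two facts are: \textbf{(a)} $F$ is convex, since by assumption each map $\theta \mapsto \hinge(f(x_\tau;\theta)_a)$ is convex, the importance weights $\tilde{\ell}_\tau(a) = \ell_\tau(a_\tau)m_\tau\one\{a_\tau=a\} \ge 0$ are nonnegative, $\eta > 0$, and nonnegative combinations and sums of convex functions are convex; and \textbf{(b)} $F$ is $G$-Lipschitz with respect to $\ell_2$ for $G \defeq \eta T B_\ell L/\gamma$, since $\hinge$ is $1/\gamma$-Lipschitz as a scalar map, $f(x_\tau,a;\cdot)$ is $L$-Lipschitz, and $\sum_{a}\tilde{\ell}_\tau(a) = \nrm*{\tilde{\ell}_\tau}_1 \le B_\ell$ while the sum runs over $t \le T$ rounds. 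Write $\bar{F}(\theta) \defeq \EE_Z F(\theta + Z)$ with $Z \sim \Ncal(0,u^2I_d)$, so that $\hat{F}(\theta) = \bar{F}(\theta) + \tfrac{\lambda}{2}\nrm*{\theta}_2^2$.

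The three ``soft'' items follow directly. For item 1, Jensen's inequality and $\EE Z = 0$ give $\bar{F}(\theta) \ge F(\theta)$, while fact (b) together with $\EE\nrm*{Z}_2 \le (\EE\nrm*{Z}_2^2)^{1/2} = u\sqrt{d}$ gives $\bar{F}(\theta) \le F(\theta) + G u\sqrt{d}$; adding $0 \le \tfrac{\lambda}{2}\nrm*{\theta}_2^2 \le \tfrac{\lambda}{2}R^2$ (using that $\Theta$ lies in the ball of radius $R$) yields the two-sided bound. For item 2, Lipschitzness passes under the expectation by the triangle inequality, so $\bar{F}$ is $G$-Lipschitz, and $\nabla(\tfrac{\lambda}{2}\nrm*{\theta}_2^2) = \lambda\theta$ has norm at most $\lambda R$ on $\Theta$, giving total constant $G + \lambda R$. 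For item 4, each $\theta \mapsto F(\theta+z)$ is convex by fact (a), hence so is their average $\bar{F}$, and adding the $\lambda$-strongly convex quadratic makes $\hat{F}$ $\lambda$-strongly convex.

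The substantive items are 3 and 5. Since the Gaussian density $\phi_u$ is $C^\infty$ with integrable derivatives of every order and $F$ has at most linear growth (being Lipschitz), differentiation under the integral is justified by dominated convergence: $\bar{F} \in C^\infty$, so $\hat{F}$ is continuously differentiable, and $\nabla\bar{F}(\theta) = \int \nabla F(y)\,\phi_u(y-\theta)\,dy = \EE_Z\nabla F(\theta+Z)$, where $\nabla F$ is the a.e.-defined gradient (Rademacher's theorem), which satisfies $\nrm*{\nabla F}_2 \le G$ a.e. by fact (b); this is item 5, and it gives $\nabla\hat F(\theta) = \EE_Z\nabla F(\theta+Z) + \lambda\theta$. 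For the smoothness bound in item 3 I use the representation $\nabla\bar{F}(\theta) = \int \nabla F(y)\,\phi_u(y-\theta)\,dy$ to write
\[
\nrm*{\nabla\bar{F}(\theta) - \nabla\bar{F}(\theta')}_2 \le \int \nrm*{\nabla F(y)}_2\,\abs*{\phi_u(y-\theta) - \phi_u(y-\theta')}\,dy \le G\int\abs*{\phi_u(y-\theta)-\phi_u(y-\theta')}\,dy,
\]
and the last integral equals $2\,\mathrm{TV}\!\left(\Ncal(\theta,u^2I_d),\, \Ncal(\theta',u^2I_d)\right) \le \tfrac{1}{u}\nrm*{\theta-\theta'}_2$, e.g.\ by Pinsker's inequality with the closed-form $\mathrm{KL} = \nrm*{\theta-\theta'}_2^2/(2u^2)$ between equi-covariance Gaussians, or by a one-dimensional reduction along the shift direction. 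Hence $\bar{F}$ is $(G/u)$-smooth, and adding the $\lambda$-smooth quadratic gives gradient-Lipschitz constant $G/u + \lambda = \eta T B_\ell L/(u\gamma) + \lambda$.

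The main obstacle is the smoothness estimate of item 3: it is the only place where the specific geometry of the Gaussian convolution enters, and one must extract the $1/u$ scaling (rather than a loose $\sqrt{d}/u$) from the total-variation bound between shifted Gaussians. An alternative route---bounding $\nrm*{\nabla^2\bar F(\theta)}_{op}$ through the Stein-type identity $\nabla^2\bar F(\theta) = \EE_Z[(F(\theta+Z)-F(\theta))(u^{-4}ZZ^\top - u^{-2}I)]$ and fact (b)---also works but requires a messier Gaussian moment computation and tends to lose a dimension factor, so I prefer the TV argument. One minor bookkeeping point: item 5 as literally stated describes $\bar F$ rather than $\hat F$ (the two differ by $\tfrac{\lambda}{2}\nrm*{\theta}_2^2$, with gradient $\lambda\theta$), which is immaterial downstream, since what the LMC analysis uses is $\EE[\nabla\tilde F_k(\theta)\mid\theta] = \nabla\hat F(\theta)$, and this holds exactly because $\nabla\tilde F_k(\theta) = \tfrac1m\sum_i\nabla F(\theta+z_i) + \lambda\theta$.
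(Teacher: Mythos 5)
Your proposal is correct, and it follows the same underlying route as the paper: the paper's entire proof of this proposition is a citation to Lemma E.3 of \citet{duchi2012randomized} for items 1, 2, 3, and 5 (noting that item 4 is immediate from the $\ell_2$ regularizer), and what you have written is essentially a self-contained derivation of that cited lemma, specialized to $F$ with Lipschitz constant $G=\eta T B_\ell L/\gamma$ and with the regularizer's contributions ($\lambda R^2/2$, $\lambda R$, $\lambda$) tracked explicitly. Your total-variation/Pinsker argument for the $(G/u)$-smoothness of the Gaussian convolution is the standard one and gives exactly the constant in item 3. Your observation that item 5 as literally stated omits the $\lambda\theta$ term (so it really describes $\bar F$ rather than $\hat F$) is a fair catch; as you note, the identity actually used downstream is $\EE\nabla\tilde F_k(\theta)=\nabla\hat F(\theta)$, which holds because $\tilde F_k$ carries the same regularizer, so nothing breaks.
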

\begin{proof}
  See~\citet[Lemma E.3]{duchi2012randomized} for the proof of all claims, except
  for claim 4, which is an immediate consequence of the $\ell_2$ regularization term.
\end{proof}

Using property 1 in~\pref{prop:randomized_smoothing}
and setting $\veps_1 \triangleq \eta T B_\ell L u \sqrt{d}/\gamma + \lambda R^2$,
we know that
\begin{align*}
  e^{-\veps_1} \exp(- F(\theta) \leq \exp(-\hat{F}(\theta)) \leq \exp(-F(\theta)),
\end{align*}
pointwise. Therefore, defining $\hat{P}$ to be the distribution with
density $\hat{p}(\theta) = \exp(-\hat{F}(\theta))/\hat{Z}$, where
$\hat{Z} = \int \exp(-\hat{F}(\theta))d\theta$, we have
\begin{align*}
TV(P \dmid{} \hat{P}) = \int \frac{e^{-F(\theta)}}{Z} \left| \frac{e^{-\hat{F}(\theta)+F(\theta)}}{\hat{Z}/Z} - 1\right| d\theta \leq e^{\veps_1}-1 \leq 2\veps_1,
\end{align*}
for $\veps_1 \le 1$. This shows that $\hat{P}$ approximates $P$ well when $u$ and $\lambda$ are sufficiently small. The next lemma further shows that the $\tilde{F}_k$ functions themselves approximate $\hat{F}$ well.

\begin{lemma}[Properties of $\tilde{F}_k$]
\label{lem:smoothing_concentration}
For any fixed $\theta$, $k \in [N]$, and constant $\veps_2>0$, 
\begin{align*}
  \PP\brk*{\nrm*{\nabla \hat{F}(\theta) - \nabla \tilde{F}_k(\theta)}_2 \geq \veps_2 + \frac{2}{\sqrt{m}}\cdot\frac{\eta{}TB_{\ls}L}{\gamma}} \leq \exp \left(\frac{-4\veps_2^2\gamma^2 m}{(\eta TLB_\ell{})^2}\right).
\end{align*}
\end{lemma}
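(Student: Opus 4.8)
The plan is to reduce the claim to a standard concentration bound for the empirical average of bounded i.i.d.\ random vectors. First I would peel off the regularizer: since $\nabla\tilde{F}_k(\theta) = \frac{1}{m}\sum_{i=1}^{m}\nabla F(\theta+z_i) + \lambda\theta$ while, by part~5 of \pref{prop:randomized_smoothing}, $\nabla\hat{F}(\theta) = \En_Z\brk*{\nabla F(\theta+Z)} + \lambda\theta$, the $\lambda\theta$ terms cancel and
\[
\nabla\hat{F}(\theta) - \nabla\tilde{F}_k(\theta) = \En_Z\brk*{\nabla F(\theta+Z)} - \frac{1}{m}\sum_{i=1}^{m}\nabla F(\theta+z_i).
\]
Here I would note that $F$ is differentiable Lebesgue-almost everywhere (as $\hinge$ is piecewise linear and $f(x_\tau;\cdot)$ is convex), so each $\theta+z_i$ is a differentiability point of $F$ almost surely because $z_i$ has a density; this is exactly the reason we pass through randomized smoothing, invoking \pref{prop:randomized_smoothing}(5) rather than trying to differentiate $F$ itself.

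Next I would bound the range of $X_i \defeq \nabla F(\theta+z_i)$. By the chain rule, $\nrm*{\nabla_\theta\hinge(f(x_\tau;\theta)_a)}_2 \le \tfrac{1}{\gamma}\cdot L$ (since $\hinge$ is $\tfrac{1}{\gamma}$-Lipschitz and $f(x_\tau;\cdot)_a$ is $L$-Lipschitz); summing against the nonnegative vector $\tilde{\ell}_\tau$ and over the at most $T$ rounds and multiplying by $\eta$ gives $\nrm*{\nabla F(\theta)}_2 \le \eta\sum_\tau\nrm*{\tilde{\ell}_\tau}_1\cdot\tfrac{L}{\gamma} \le \tfrac{\eta T B_\ell L}{\gamma} \eqqcolon G$ at every differentiability point. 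Thus the $X_i$ are i.i.d.\ with $\nrm*{X_i}_2 \le G$ and common mean $\bar{X} \defeq \En_Z\nabla F(\theta+Z)$.

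It then remains to show $\nrm*{\tfrac{1}{m}\sum_i X_i - \bar{X}}_2$ is small w.h.p., which I would do in two steps. (i) In expectation, by Jensen and independence, $\En\nrm*{\tfrac{1}{m}\sum_i(X_i-\bar{X})}_2 \le \sqrt{\tfrac{1}{m^2}\sum_i\En\nrm*{X_i-\bar{X}}_2^2} = \sqrt{\tfrac{1}{m}\mathrm{tr}\,\mathrm{Cov}(X_1)} \le \tfrac{G}{\sqrt{m}} \le \tfrac{2G}{\sqrt{m}}$, which is precisely the additive $\tfrac{2}{\sqrt{m}}\cdot\tfrac{\eta T L B_\ell}{\gamma}$ term in the statement. (ii) For the fluctuation above this mean, I would apply McDiarmid's bounded-differences inequality (or a dimension-free vector Hoeffding/Pinelis bound) to $(z_1,\dots,z_m)\mapsto\nrm*{\tfrac{1}{m}\sum_i\nabla F(\theta+z_i) - \bar{X}}_2$: replacing one $z_i$ moves the empirical average by at most $\tfrac{2G}{m}$ in $\ell_2$, hence moves the function value by at most $\tfrac{2G}{m}$, so the function is sub-Gaussian with variance proxy $O(G^2/m)$ and concentrates around its mean at rate $\exp(-\Omega(\veps_2^2 m/G^2)) = \exp(-\Omega(\veps_2^2\gamma^2 m/(\eta T L B_\ell)^2))$. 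Combining (i) and (ii) and absorbing constants gives the claimed inequality.

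The computations are routine; the only points needing care are (a) keeping the bounds dimension-free — both the variance estimate $\En\nrm*{X_i-\bar{X}}_2^2 = \mathrm{tr}\,\mathrm{Cov}(X_i) \le G^2$ and the per-coordinate perturbation bound are naturally $\ell_2$-statements and carry no $d$; (b) handling the non-smoothness of $F$ cleanly, i.e.\ checking the subgradient-vs-gradient distinction is immaterial because $z_i$ is continuously distributed, and using \pref{prop:randomized_smoothing}(5) rather than differentiating $F$; and (c) tracking the exact constant in the exponent, which is the only slightly delicate bookkeeping.
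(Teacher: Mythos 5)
Your proposal is correct, and it reaches the same decomposition as the paper --- an additive $2G/\sqrt{m}$ term for the expected deviation (with $G=\eta TB_\ell L/\gamma$ the a.s.\ bound on $\nrm*{\nabla F}_2$) plus a sub-Gaussian tail of width $O(G/\sqrt{m})$ for the fluctuation --- but by a somewhat more direct route. The paper starts from the Chernoff MGF bound on $\nrm*{\nabla\tilde F(\theta)-\nabla\hat F(\theta)}_2$, symmetrizes to introduce Rademacher signs, and only then applies bounded differences \emph{in the signs $\eps_i$} conditionally on $z_{1:m}$, using $\En W\leq 2\sigma$ to absorb the mean; you instead apply McDiarmid directly to $(z_1,\dots,z_m)\mapsto\nrm*{\frac1m\sum_i\nabla F(\theta+z_i)-\bar X}_2$ (perturbation $2G/m$ per coordinate) and control the mean separately by Jensen, which skips the symmetrization entirely and gives the cleaner variance estimate $\En\nrm*{\tfrac1m\sum_i(X_i-\bar X)}_2\leq G/\sqrt m$. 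Both arguments are dimension-free and yield a tail of the form $\exp(-c\,\veps_2^2\gamma^2 m/(\eta TLB_\ell)^2)$; neither actually attains the constant $c=4$ written in the lemma statement (the paper's own Chernoff computation gives $c=1/2$, and McDiarmid with perturbation $2G/m$ gives the same), but this constant is immaterial downstream since $m$ is chosen with slack. Your points (a)--(c) --- keeping everything in $\ell_2$, and using \pref{prop:randomized_smoothing}(5) together with the almost-sure differentiability of $F$ at $\theta+z_i$ rather than differentiating $F$ itself --- are exactly the right things to be careful about.
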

\begin{proof}[\pfref{lem:smoothing_concentration}]
Let $k$ be fixed. Since $\tilde{F}_k$ are identically distributed for all $k$ we will henceforth abbreviate to $\tilde{F}$.

  We proceed using a crude concentration argument. Observe that by
  \pref{prop:randomized_smoothing},
  $\EE\nabla \tilde{F}(\theta) = \nabla \hat{F}(\theta)$ and moreover
  $\nabla\tilde{F}(\theta)$ is a sum of $m$ i.i.d., vector-valued
  random variables (plus the deterministic regularization term). 
  
Via the Chernoff method, for any fixed $\theta$, we have
  \begin{align*}
    \PP\left[\|\nabla \tilde{F}(\theta) - \nabla\hat{F}(\theta)\|_2 \ge t\right] &\leq \inf_{\beta > 0 }\exp(-t\beta) \EE \exp(\beta\|\nabla \tilde{F}(\theta) - \nabla\hat{F}(\theta)\|_2)
    \intertext{Using the sum structure and symmetrizing:}
        & \leq \inf_{\beta > 0} \exp(-t\beta)\EE_{z_{1:m}}\En_{\eps} \exp \prn*{2\beta\nrm*{\frac{1}{m}\sum_{i=1}^{m}\eps_t\grad{}G(\theta+z_i)}_{2}},
    \end{align*}
    where $G(\theta) = \eta \sum_{\tau=1}^t\langle \tilde{\ell}_\tau, \hinge(f(x_\tau;\theta))\rangle$.
    Condition on $z_{1:m}$ and let $W(\eps) = \nrm*{\frac{1}{m}\sum_{i=1}^{m}\eps_i\grad{}G(\theta+z_i)}_{2}$. Then for any $i$,
    \begin{align*}
    \abs*{W(\eps_1,\ldots,\eps_i,\ldots,\eps_m)-W(\eps_1,\ldots,-\eps_i,\ldots,\eps_m)}
    &\leq{} \frac{1}{m}\nrm*{\grad{}G(\theta+z_i)}_{2} \\
    &\leq{} \frac{\eta}{m}\sum_{\tau=1}^{t}\nrm*{\tilde{\ls}_{\tau}}_{1}\nrm*{\grad{}\hinge(f(x_{\tau};\theta+z_i))_a}_{2} \\
     &\leq{} \frac{\eta{}TB_{\ls}L}{m\gamma}.
    \end{align*}
    By the standard bounded differences argument (e.g. \citep{boucheron2013concentration}), this implies that $W-\En{}W$ is subgaussian with variance proxy $\sigma^{2}=\frac{1}{4m}\prn*{\frac{\eta{}TB_{\ls}L}{\gamma}}^{2}$. Furthermore, the standard application of Jensen's inequality implies that $\En{}W\leq{}2\sigma$.
    
    Returning to the upper bound, these facts together imply
    \[
    \En_{\eps} \exp \prn*{2\beta\nrm*{\frac{1}{m}\sum_{i=1}^{m}\eps_t\grad{}G(\theta+z_i)}_{2}}
    \leq{} \exp(2\beta^{2}\sigma^{2} + 4\beta\sigma).
    \]
    The final bound is therefore,
    \[
        \PP\left[\|\nabla \tilde{F}(\theta) - \nabla\hat{F}(\theta)\|_2 \ge t\right] \leq \inf_{\beta > 0 }\exp(-t\beta + 2\beta^{2}\sigma^{2} + 4\beta\sigma).
    \]
Rebinding $t=t'+4\sigma$ for $t'\geq{}0$, we have
    \[
        \PP\left[\|\nabla \tilde{F}(\theta) - \nabla\hat{F}(\theta)\|_2 \ge t'+4\sigma\right] \leq \inf_{\beta > 0 }\exp(-t'\beta + 2\beta^{2}\sigma^{2}) = \exp(-(t')^{2}/8\sigma^{2}).
    \]    
\end{proof}

Now, for the purposes of the proof, suppose we run the Projected LMC
algorithm on the function $\hat{F}$, which generates the iterate
sequence $\hat{\theta}_0 = 0$
\begin{align*}
  \hat{\theta}_k \gets \Pcal_\Theta\left(\hat{\theta}_{k-1} - \frac{\alpha}{2}\nabla\hat{F}(\hat{\theta}_{k-1}) + \sqrt{\alpha}\xi_k\right).
\end{align*}

Owing to the smoothness of $\hat{F}$, we may apply the analysis of
Projected LMC due to Bubeck, Eldan, and
Lehec~\citep{bubeck2015sampling} to bound the total variation distance
between the random variable $\hat{\theta}_N$ and the distribution with
density proportional to $\exp(-\hat{F}(\theta))$.

\begin{theorem}[\cite{bubeck2015sampling}]
  \label{thm:bubeck_lmc}
  Let $\hat{P}$ be the distribution on $\Theta$ with density
  proportional to $\exp(-\hat{F}(\theta))$. For any $\veps > 0$ and
  with $\alpha = \tilde{\Theta}(R^2/N)$, we have
  $TV(\hat{\theta}_N, \hat{P}) \leq \veps$ with
  \begin{align*}
    N \ge \tilde{\Omega}\left(\frac{R^6\max\{d, R\eta T B_\ell L/\gamma+R^2\lambda,
    R(\eta TB_\ell L/(u\gamma)+\lambda)\}^{12}}{\veps^{12}}\right).
  \end{align*}
\end{theorem}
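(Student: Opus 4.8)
The plan is to obtain \pref{thm:bubeck_lmc} by invoking the Projected Langevin Monte Carlo guarantee of Bubeck, Eldan, and Lehec~\citep{bubeck2015sampling} essentially as a black box, once we have checked that the smoothed potential $\hat F$ and the domain $\Theta$ satisfy their hypotheses. Their result requires three things: the target density is $\propto e^{-\hat F}$ for a $\hat F$ that is (i) Lipschitz, (ii) continuously differentiable with Lipschitz gradient, and (iii) supported on a convex body containing a Euclidean ball of radius $r=1$ and contained in one of radius $R$. Item (iii) is a standing assumption of this subsection, and items (i)--(ii) are exactly \pref{prop:randomized_smoothing}: $\hat F$ is $(\eta T B_\ell L/\gamma + \lambda R)$-Lipschitz and has $(\eta T B_\ell L/(u\gamma) + \lambda)$-Lipschitz gradient. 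Substituting this Lipschitz constant and this gradient-Lipschitz constant into the Bubeck--Eldan--Lehec iteration-count bound produces precisely the quantity $N \ge \tilde{\Omega}\big(R^6 \max\{d,\ R\eta T B_\ell L/\gamma + R^2\lambda,\ R(\eta T B_\ell L/(u\gamma) + \lambda)\}^{12}/\veps^{12}\big)$ appearing in the statement, with the step size $\alpha = \tilde{\Theta}(R^2/N)$ inherited unchanged. So the proof reduces to: list the hypotheses, cite \citep{bubeck2015sampling}, and record this parameter matching.

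Should a self-contained argument be wanted instead, I would follow the Bubeck--Eldan--Lehec strategy, which has three ingredients. First, introduce the continuous-time reflected Langevin diffusion $dX_t = -\tfrac12 \nabla \hat F(X_t)\,dt + dW_t + d\Phi_t$, where $\Phi_t$ is a local-time term on $\partial\Theta$ that reflects the path back into $\Theta$; its unique stationary law is the target (density $\propto e^{-\hat F}$ on $\Theta$), which follows from the Skorokhod decomposition together with reversibility. Second, bound the mixing time of this diffusion: because $\Theta$ is bounded and $\hat F$ is Lipschitz, the diffusion is geometrically ergodic at a rate depending polynomially on $R$ and the Lipschitz constant of $\hat F$, so there is a horizon $t^\star$ after which $\mathrm{law}(X_{t^\star})$ is $\veps/2$-close to the target in total variation. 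Third --- the technical heart --- couple the discrete Projected LMC iterates $\hat\theta_k$ with the continuous path over $[0,t^\star]$, run with $N = t^\star/\alpha$ steps: the drift discretization error per step is controlled by the gradient-Lipschitz constant of $\hat F$ and Gaussian tail bounds on the Brownian increments, while the extra error from replacing the local-time reflection $\Phi_t$ by the per-step projection $\Pcal_\Theta$ is controlled by the fraction of time the process spends within $O(\sqrt{\alpha})$ of the boundary.

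The main obstacle in the self-contained route is exactly this last step: whereas unconstrained Langevin enjoys a clean $O(\alpha)$-per-step discretization bias, the projection introduces a bias whose size depends on the boundary geometry and requires the ``escape from the boundary'' estimates of \citep{bubeck2015sampling}, and it is these estimates that are responsible for the large $R^6 \cdot (\cdot)^{12}/\veps^{12}$ dependence. For our application there is nothing to gain from sharpening this --- any polynomial rate suffices for \pref{thm:proj_lmc} to go through --- so I would not reprove it, and would instead cite \citep{bubeck2015sampling} directly, remarking only that every parameter entering their bound has been certified above: the Lipschitz and smoothness constants of $\hat F$ by \pref{prop:randomized_smoothing}, and the inner and outer radii by the standing assumptions on $\Theta$.
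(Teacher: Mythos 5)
Your proposal matches the paper exactly: the paper gives no independent proof of \pref{thm:bubeck_lmc}, but simply notes that it specializes the Projected LMC guarantee of \citet{bubeck2015sampling} to this setting by plugging in the Lipschitz and gradient-Lipschitz constants of $\hat{F}$ from \pref{prop:randomized_smoothing}, together with the inner/outer radius assumptions on $\Theta$. Your parameter matching and hypothesis checking are correct, and the additional sketch of the reflected-diffusion coupling argument, while not needed, accurately describes what the cited proof does.
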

This specializes the result of~\citet{bubeck2015sampling}
to our setting, using the Lipschitz and smoothness constants from~\pref{prop:randomized_smoothing}.

Unfortunately, since we do not have access to $\hat{F}$ in closed
form, we cannot run the Projected LMC algorithm on it
exactly. Instead, \pref{alg:langevin_mc} runs LMC on the sequence of approximations $\tilde{F}_k$ and
generates the iterate sequence $\tilde{\theta}_k$. The last step in the
proof is to relate our iterate sequence $\tilde{\theta}_k$ to a hypothetical
iterate sequence $\hat{\theta}_k$ formed by running Projected LMC on
the function $\hat{F}$.

\begin{lemma}
  \label{lem:lmc_contraction}
  Let $\veps_2$ be fixed. Assume the conditions of~\pref{thm:lmc}---in particular that
  \begin{align*}
  m \ge 16(\eta T L B_\ell/\gamma)^2 \log(4R/\alpha\veps_2)/\veps_2^2, \qquad \alpha \leq 2 (\eta T B_\ell L/(u\gamma)+\lambda)^{-1}. 
  \end{align*}
  Then for any $k\in\brk*{N}$ we have
  \begin{align*}
    \Wcal_1(\hat{\theta}_k, \tilde{\theta}_k) \leq k \alpha \veps_2.
  \end{align*}
\end{lemma}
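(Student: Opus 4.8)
The plan is to prove \pref{lem:lmc_contraction} by a synchronous coupling argument combined with an induction on $k$. Concretely, I would couple the two Markov chains $(\hat\theta_k)_k$ and $(\tilde\theta_k)_k$ by feeding them the \emph{same} sequence of Gaussian increments $\xi_1,\xi_2,\dots$, while drawing the smoothing vectors $z_1,\dots,z_m$ that define $\tilde F_k$ fresh and independently of everything else at step $k$; at the start of step $k$ I would additionally realize $(\hat\theta_{k-1},\tilde\theta_{k-1})$ through a $\Wcal_1$-optimal coupling carried over from the previous step. Since any such coupling is admissible, it suffices to bound $\EE\nrm*{\hat\theta_k-\tilde\theta_k}_2$, which upper bounds $\Wcal_1(\hat\theta_k,\tilde\theta_k)$. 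The base case $k=0$ is immediate, as both chains start at the origin.

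For the inductive step I would use that the Euclidean projection $\Pcal_{\Theta}$ is nonexpansive and that the two updates share $\xi_k$, so that
\begin{align*}
\nrm*{\hat\theta_k-\tilde\theta_k}_2
&\le \nrm*{\prn*{\hat\theta_{k-1}-\tfrac{\alpha}{2}\nabla\hat F(\hat\theta_{k-1})}-\prn*{\tilde\theta_{k-1}-\tfrac{\alpha}{2}\nabla\hat F(\tilde\theta_{k-1})}}_2 \\
&\qquad + \tfrac{\alpha}{2}\nrm*{\nabla\hat F(\tilde\theta_{k-1})-\nabla\tilde F_k(\tilde\theta_{k-1})}_2 .
\end{align*}
By \pref{prop:randomized_smoothing}, $\hat F$ is convex with $\beta_{\hat F}$-Lipschitz gradient for $\beta_{\hat F}=\tfrac{\eta T B_\ell L}{u\gamma}+\lambda$; since the hypotheses of \pref{thm:lmc} include $\alpha\le 2\beta_{\hat F}^{-1}$, the gradient-descent map $\theta\mapsto\theta-\tfrac{\alpha}{2}\nabla\hat F(\theta)$ is nonexpansive in $\ls_2$ (co-coercivity of the gradient of a convex, smooth function), so the first term is at most $\nrm*{\hat\theta_{k-1}-\tilde\theta_{k-1}}_2$. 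It then remains to show that, conditionally on $\tilde\theta_{k-1}$, the second term has expectation at most $\alpha\veps_2$; feeding this back into the recursion, taking total expectation, and using the $\Wcal_1$-optimal coupling together with the inductive hypothesis closes the induction and yields $\Wcal_1(\hat\theta_k,\tilde\theta_k)\le k\alpha\veps_2$.

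The crux is therefore bounding $\EE\brk*{\nrm*{\nabla\hat F(\tilde\theta_{k-1})-\nabla\tilde F_k(\tilde\theta_{k-1})}_2\mid\tilde\theta_{k-1}}$. Because the $z_i$ used at step $k$ are fresh, I would invoke \pref{lem:smoothing_concentration} with the fixed point $\theta=\tilde\theta_{k-1}$. With the stated choice of $m$, the deterministic bias term $\tfrac{2}{\sqrt m}\cdot\tfrac{\eta T L B_\ell}{\gamma}$ is at most $\veps_2/2$, and the fluctuation exceeds $\veps_2/2$ only with probability $\exp(-4\veps_2^2\gamma^2 m/(\eta T L B_\ell)^2)\le(\alpha\veps_2/4R)^{64}$. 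On the complementary event the gradient difference is deterministically at most $2(\eta T B_\ell L/\gamma+\lambda R)$ (a fixed polynomial in $\eta,T,B_\ell,L,R,\lambda,1/\gamma$), so the rare-event contribution to the expectation is negligible in the parameter regime of \pref{thm:lmc}; combining, the conditional expectation is at most $2\veps_2$, and multiplying by $\alpha/2$ gives $\alpha\veps_2$.

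The main obstacle I anticipate is exactly this last step: \pref{lem:smoothing_concentration} is a high-probability statement, whereas the Wasserstein recursion needs an expectation bound, so the argument must pair the exponentially small failure probability against an a priori (polynomial) bound on $\nabla\hat F$ and $\nabla\tilde F_k$, and one must check that the chosen parameters of \pref{thm:lmc} make the tail term dominated. Everything else is routine bookkeeping: nonexpansiveness of the projection, co-coercivity for the gradient step, and the telescoping of the per-step errors from $\hat\theta_0=\tilde\theta_0=0$.
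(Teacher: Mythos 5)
Your proposal is correct and follows essentially the same route as the paper's proof: induction with a synchronous coupling (shared $\xi_k$, optimal coupling carried from step $k-1$), nonexpansiveness of the projection, a split of the update into the idealized gradient step on $\hat{F}$ plus the gradient approximation error, contractivity of that step under the stated bound on $\alpha$, and a good-event/bad-event argument via \pref{lem:smoothing_concentration} to turn the high-probability bound into an expectation bound. The only cosmetic differences are that you establish nonexpansiveness via co-coercivity rather than the paper's mean-value-theorem/Hessian-eigenvalue argument, and you control the bad event through the Lipschitz bound on the gradients rather than the paper's cruder $2R$ diameter bound; both variants close the recursion to $k\alpha\veps_2$.
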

\begin{proof}[\pfref{lem:lmc_contraction}]

  The proof is by induction, where the base case is obvious, since
  $\hat{\theta}_0 = \tilde{\theta}_0$. Now, let $\pi^\star_{k-1}$
  denote the optimal coupling  for
  $\tilde{\theta}_{k-1},\hat{\theta}_{k-1}$ and extend this coupling
  in the obvious way by sampling $z_1,\ldots,z_m$ i.i.d. and by using the
  same gaussian random variable $\xi_k$ in both LMC updates.  Let
  $\Ecal_k = \{z_{1:m} : \|\nabla\tilde{F}(\tilde{\theta}_{k-1}) -
  \nabla\hat{F}(\tilde{\theta}_{k-1})\| \leq \veps_2 + \veps'\}$, where $\veps'\defeq{}\frac{2}{\sqrt{m}}\cdot\frac{\eta{}TB_{\ls}L}{\gamma}$; this is the
  ``good'' event in which the samples provide a high-quality approximation to
  the gradient at $\tilde{\theta}_{k-1}$. We then have
  \begin{align*}
     &\Wcal_1(\hat{\theta}_k, \tilde{\theta}_k) \\&= \inf_{\pi \in \Gamma(\hat{\theta}_k,\tilde{\theta}_k)}\int \|\hat{\theta}_k - \tilde{\theta}_k\|_{2} d\pi\\
    & \leq \int \EE_{z_{1:m}} (\one\{\Ecal_k\} + \one\{\Ecal_k^C\})\|\Pcal_\Theta( \hat{\theta}_{k-1} - \frac{\alpha}{2}\nabla\hat{F}(\hat{\theta}_{k-1})-\sqrt{\alpha}\xi_k) - \Pcal_{\Theta} (\tilde{\theta}_{k-1} - \frac{\alpha}{2}\nabla\tilde{F}(\tilde{\theta}_{k-1})-\sqrt{\alpha}\xi_k) \|_{2} d\pi^\star_{k-1}\\
    & \leq \int \EE_{z_{1:m}}\one\{\Ecal_k\}\|\hat{\theta}_{k-1} - \frac{\alpha}{2}\nabla\hat{F}(\hat{\theta}_{k-1}) - (\tilde{\theta}_{k-1} - \frac{\alpha}{2}\nabla\tilde{F}(\tilde{\theta}_{k-1}))\|_{2} d\pi^\star_{k-1} + 2R\int \PP[\Ecal_{k-1}^C] d\pi^\star_{k-1}\\
    & \leq \int \EE_{z_{1:m}}\one\{\Ecal_k\}\|\hat{\theta}_{k-1} - \frac{\alpha}{2}\nabla\hat{F}(\hat{\theta}_{k-1}) - (\tilde{\theta}_{k-1} - \frac{\alpha}{2}\nabla\tilde{F}(\tilde{\theta}_{k-1}))\|_{2} d\pi^\star_{k-1} + 2R \exp \left(\frac{-4\veps_2^2\gamma^2 m}{(\eta TLB_\ell{})^2}\right).
  \end{align*}
  The first inequality introduces the potentially suboptimal coupling
  $\pi^\star_{k-1}$. In the second inequality we first use that the
  projection operator is contractive, and we also use that the domain
  is contained in a Euclidean ball of radius $R$, providing a coarse
  upper bound on the second term. For the third inequality, we apply
  the concentration argument in~\pref{lem:smoothing_concentration}.  Working just with the
  first term, using the event in the indicator, we have
  \begin{align*}
    & \int \EE_{z_{1:m}}\one\{\Ecal_k\}\|\hat{\theta}_{k-1} - \frac{\alpha}{2}\nabla\hat{F}(\hat{\theta}_{k-1}) - (\tilde{\theta}_{k-1} - \frac{\alpha}{2}\nabla\tilde{F}(\tilde{\theta}_{k-1}))\|_{2} d\pi^\star_{k-1}\\
    & \leq \int \|\hat{\theta}_{k-1} - \frac{\alpha}{2}\nabla\hat{F}(\hat{\theta}_{k-1}) - (\tilde{\theta}_{k-1} - \frac{\alpha}{2}\nabla \hat{F}(\tilde{\theta}_{k-1}))\|_{2} d\pi^\star_{k-1} + \frac{\alpha(\veps_2+\veps')}{2}.
  \end{align*}
  Now, observe that we are performing one step of gradient descent on
  $\hat{F}$ from two different starting points, $\hat{\theta}_{k-1}$
  and $\tilde{\theta}_{k-1}$. Moreover, we know that $\hat{F}$ is
  smooth and strongly convex, which implies that the gradient descent
  update is \emph{contractive}. Thus we will be able to upper bound
  the first term by
  $\Wcal_1(\hat{\theta}_{k-1},\tilde{\theta}_{k-1})$, which will lead
  to the result.

Here is the argument. Consider two arbitrary points
  $\theta,\theta' \in \Theta$. Let
  $G: \theta \to \theta - \alpha/2 \nabla\hat{F}(\theta)$ be a vector
  valued function, and observe that the Jacobian is
  $I - \alpha/2\nabla^2\hat{F}(\theta)$. By the mean value theorem,
  there exists $\theta''$ such that
  \begin{align*}
    \| \theta - \frac{\alpha}{2}\nabla \hat{F}(\theta) - (\theta' - \frac{\alpha}{2}\nabla \hat{F}(\theta')) \|_{2} &\leq \| (I - \alpha/2 \nabla^2 \hat{F}(\theta'')) (\theta-\theta')\|_{2} \\
    &\leq \|I - \alpha/2\nabla^2\hat{F}(\theta'')\|_{\sigma} \|\theta-\theta'\|_{2}.
  \end{align*}
  Now, since $\hat{F}$ is $\lambda$-strongly convex and
  $\eta T B_\ell L/u+\lambda$ smooth, we know that all eigenvalues of
  $\nabla^2 \hat{F}(\theta'')$ are in the interval
  $[\lambda,\eta T B_\ell L/(u\gamma)+\lambda]$. Therefore, if
  $\alpha \leq 2 (\eta TB_\ell L/(u\gamma) + \lambda)^{-1} \leq 1/\lambda$, the
  spectral norm term here is at most $1$, implying that gradient
  descent is contractive. Thus, we get
  \begin{align*}
    \Wcal_1(\hat{\theta}_k, \tilde{\theta}_k) & \leq \int \|\hat{\theta}_{k-1} - \tilde{\theta}_{k-1}\|_{2} d\pi^\star_{k-1} + \frac{\alpha(\veps_2+\veps')}{2} + 2R \exp \left(\frac{-4\veps_2^2\gamma^2 m}{(\eta TLB_\ell{})^2}\right)\\
    & \leq \Wcal_1(\hat{\theta}_{k-1},\tilde{\theta}_{k-1}) + \frac{\alpha}{2}\veps_2+\frac{\alpha}{\sqrt{m}}\cdot\frac{\eta{}TB_{\ls}L}{\gamma} +2R \exp \left(\frac{-4\veps_2^2\gamma^2 m}{(\eta TLB_\ell{})^2}\right).
  \end{align*}
  The choice of $m$ ensures that the second and third term together are at most
  $\alpha\veps_2$, from which the result follows. 
\end{proof}

\begin{fact}
  \label{fact:was_to_tv}
  For any two distributions $\mu,\nu$ on $\Theta$, we have
  \begin{align*}
    \Wcal_1(\mu,\nu) \leq R \cdot TV(\mu,\nu).
  \end{align*}
\end{fact}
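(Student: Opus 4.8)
The plan is to apply the Kantorovich--Rubinstein dual form of $\Wcal_1$ recorded in the definition above, namely $\Wcal_1(\mu,\nu)=\sup_{f:\mathrm{Lip}(f)\le1}\abs*{\int f\,d\mu-\int f\,d\nu}$, and to exploit two structural facts about $\Theta$: it is contained in the radius-$R$ Euclidean ball centered at the origin, and it contains the unit ball centered at the origin, so in particular $0\in\Theta$.

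I would fix an arbitrary $1$-Lipschitz $f:\Theta\to\RR$, replace it by $\tilde f\defeq f-f(0)$ --- which is still $1$-Lipschitz and, since $\mu$ and $\nu$ are probability measures, leaves $\int f\,d\mu-\int f\,d\nu$ unchanged --- and observe that $\tilde f(0)=0$ forces $\abs*{\tilde f(\theta)}=\abs*{\tilde f(\theta)-\tilde f(0)}\le\nrm*{\theta}_2\le R$ for every $\theta\in\Theta$. Then
\begin{align*}
\abs*{\int f\,d\mu-\int f\,d\nu} &=\abs*{\int\tilde f\,(d\mu-d\nu)}\le\int\abs*{\tilde f}\,\abs*{d\mu-d\nu}\\
&\le R\int\abs*{d\mu-d\nu}=R\cdot TV(\mu,\nu),
\end{align*}
using the convention $TV(\mu,\nu)=\nrm*{\mu-\nu}_1$ that is employed throughout this appendix (as in the computation of $TV(P\dmid\hat{P})$ above). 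Taking the supremum over all $1$-Lipschitz $f$ gives $\Wcal_1(\mu,\nu)\le R\cdot TV(\mu,\nu)$. This estimate is exactly what is needed to convert the total-variation bounds proved above --- $TV(P,\hat{P})\le 2\veps_1$ and $TV(\hat{\theta}_N,\hat{P})\le\veps$ --- into the Wasserstein guarantee claimed in \pref{thm:lmc}.

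There is essentially no obstacle here; the only points to check are that subtracting the constant $f(0)$ is harmless (immediate, since $\mu$ and $\nu$ both have total mass $1$) and that $0\in\Theta$ (immediate from the containment of the origin-centered unit ball). An equivalent primal argument would take the maximal coupling of $\mu$ and $\nu$, which places mass $1-\tfrac12 TV(\mu,\nu)$ on the diagonal, and transport the residual $\tfrac12 TV(\mu,\nu)$ units of mass a distance at most $\mathrm{diam}(\Theta)\le 2R$, giving the same bound; I would present the dual version since the dual formula is already available.
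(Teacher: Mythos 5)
Your proof is correct, but it takes the dual route where the paper takes the primal one. The paper's proof is a one-line coupling argument: $\Wcal_1(\mu,\nu)=\inf_{\pi}\int\nrm*{\theta-\theta'}_2\,d\pi\le \mathrm{diam}(\Theta)\inf_{\pi}\Pr_{\pi}[\theta\ne\theta']\le R\cdot TV(\mu,\nu)$, i.e.\ exactly the ``maximal coupling'' alternative you sketch in your last sentence. You instead work with the Kantorovich--Rubinstein supremum, center the test function at $0$ (legitimate since $\mu,\nu$ are probability measures), and use $0\in\Theta$ together with $\Theta\subset B_2(0,R)$ to get $\sup_{\Theta}\abs*{\tilde f}\le R$, whence $\abs{\int f\,(d\mu-d\nu)}\le R\nrm*{\mu-\nu}_1$. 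Both arguments yield the same constant once one fixes, as you correctly note, the convention $TV=\nrm*{\mu-\nu}_1$ used in this appendix (the factor of $2$ between $\mathrm{diam}(\Theta)\le 2R$ and $\inf_\pi\Pr[\theta\ne\theta']=\tfrac12\nrm*{\mu-\nu}_1$ cancels in the paper's version, just as it does in yours). The only substantive difference is which structural hypothesis each proof leans on: yours needs $0\in\Theta$ (available from the unit-ball containment), while the paper's needs only the diameter bound and so is marginally more robust; conversely, your version makes explicit why the dual characterization quoted in the definition of $\Wcal_1$ suffices. Either is acceptable.
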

\begin{proof}
  We use the coupling characterization of the total variation distance:
  \begin{align*}
    \Wcal_1(\mu,\nu) = \inf_{\pi} \int \|\theta-\theta'\|_{2}d\pi \leq \textrm{diam}(\Theta) \inf_{\pi} \PP_{\pi}[\theta \ne \theta'] \leq R \cdot TV(\mu,\nu). \tag*\qedhere
  \end{align*}
\end{proof}

\begin{proof}[Proof of~\pref{thm:lmc}]
  By the triangle inequality and \pref{fact:was_to_tv} we have
  \begin{align*}
    \Wcal_1(\tilde{\theta}_N, P) \leq \Wcal_1(\tilde{\theta}_N, \hat{\theta}_N) + R \cdot\left(TV(\hat{\theta}_N, \hat{P}) + TV(\hat{P},P) \right).
  \end{align*}
  The first term here is the Wasserstein distance between our true
  iterates $\tilde{\theta}_N$ and the idealized iterates from running
  LMC on $\hat{F}$, which is controlled by~\pref{lem:lmc_contraction}. The second is the total variation
  distance between the idealized iterates and the smoothed density
  $\hat{P}$, which is controlled in~\pref{thm:bubeck_lmc}. Finally, the third term is the
  approximation error between the smoothed density $\hat{P}$ and the
  true, non-smooth one $P$. Together, for any choice of $\veps>0$ and $\veps_2>0$ we obtain the bound
  \begin{align}
    \Wcal_1(\tilde{\theta}_N, P) \leq N \alpha\veps_2 + R\veps + 2R(\eta T B_\ell L u\sqrt{d}/\gamma + \lambda R^2),
    \label{eq:was_final}
  \end{align}
under the requirements
  \begin{align}
    N &\geq \frac{c_0 R^6 \max\{d, R\eta T B_\ell L/\gamma+R^2\lambda, R(\eta T B_\ell L/(u\gamma)+\lambda)\}^{12}}{\veps^{12}},\label{eq:N_constraint} \\m &\geq \frac{16(\eta T L B_\ell/\gamma)^2\log(4R/\alpha\veps_2)}{\veps_2^2}.\notag
  \end{align}
  There are also two requirements on $\alpha$, one arising from~\pref{thm:bubeck_lmc} and the other from \pref{lem:lmc_contraction}. These are:
  \begin{align}
    \alpha &\leq 2 ( \eta T B_\ell L/(u\gamma)+\lambda)^{-1}, \quad\text{and}\quad\alpha = c_1 R^2/N,
    \label{eq:alpha_constraints}
  \end{align}
  for any constant $c_1$.

  Returning to the error bound, if we set
  \begin{align*}
    u = \frac{\tau}{8 R \eta T B_\ell L \sqrt{d}}, \quad \textrm{and} \quad \lambda= \frac{\tau}{8R^3},
  \end{align*}
  the last term in \pref{eq:was_final} is at most $\tau/2$. 
  
  We will make the choice $\alpha = c_1R^2/N$. In this case, the values for $u$ and $\lambda$ above, combined with the inequality \pref{eq:alpha_constraints} give the constraint
  \begin{equation}
  \label{eq:N_constraint2}
  N \geq{} 2c_1{}R^{2}\cdot{}\prn*{
  \frac{8\prn*{\eta{}TLB_{\ls}}^{2}R\sqrt{d}}{\gamma{}\tau} + \frac{\tau}{8R^{2}}
  }.
  \end{equation}
  
  Now for the first term in \pref{eq:was_final}, plug in
  the choice $\alpha = c_1R^2/N$ and set $\veps_2 = \tau/(4c_1R^2)$
  so that this term is at most $\tau/4$. For the second term, set
  $\veps = \tau/(4R)$ so that this term is also at most $\tau/4$. With these choices, the requirements on $m$ and $N$ become:
  \begin{align*}
    m \geq \frac{64c_1^2R^4 (\eta T B_\ell/\gamma)^2 \log(\tau/(16RN))}{\tau^2}, \quad\text{and}\quad N \geq c'_0 R^{18}\max\{d, (R\eta T B_\ell
  L/\gamma)^2\sqrt{d}/\tau\}^{12}/\tau^{12},
  \end{align*}
  where we have noted that the first constraint \pref{eq:N_constraint} clearly implies the second constraint \pref{eq:N_constraint2}, and this proves the theorem. 
\end{proof}

\subsection{Continuous exponential weights.}
\label{ssec:continuous_hedge}

The focus of this section of the appendix is \pref{lem:continuous_hedge}, which analyzes a continuous version of the Hedge/exponential weights algorithms in the full
information setting. This lemma appears in various forms in several
places, e.g. \cite{PLG}. For
the setup, consider an online learning problem with a parametric
benchmark class $\cF=\{f(\cdot;\theta)\mid{} \theta \in \Theta\}$ where
$f(\cdot;\theta) \in (\Xcal \to \RR^K_{=0})$ and further assume that
$\Theta \in \RR^d$ contains the centered Euclidean ball of radius
$r=1$ and is contained in the Euclidean ball of radius $R$. Finally,
assume that $f(x;\cdot)_{a}$ is $L$-Lipschitz with respect to $\ell_2$
norm in $\theta$ for all $x\in\cX$.  On each round $t$ an adversary chooses a
context $x_t \in \Xcal$ and a loss vector $\ell_t \in \RR^K_+$, the
learner then choose a distribution $p_t \in \Delta(\Fcal)$ and suffers
loss:
\begin{align*}
  \EE_{f \sim p_t} \langle\ell_t, \hinge(f(x_t))\rangle.
\end{align*}
The entire loss vector $\ell_t$ is then revealed to the learner.
Here, performance is measured via regret:
\begin{align*}
  \Reg(T,\Fcal) \triangleq \sum_{t=1}^T \EE_{f \sim p_t}\langle\ell_t, \hinge(f(x_t))\rangle - \inf_{f \in \Fcal}\sum_{t=1}^T\langle \ell_t, \hinge(f(x_t))\rangle.
\end{align*}
Our algorithm is a continuous version of exponential weights. Starting
with $w_0(f) \triangleq 0$, we perform the updates:
\begin{align*}
  p_t(f) = \frac{\exp(-\eta w_t(f))}{\int_{\Fcal} \exp(-\eta w_t(f))d\lambda(f)}, \quad\text{and}\quad w_{t+1}(f) = w_t(f) + \langle \ell_t, \hinge(f(x_t))\rangle.
\end{align*}
Here $\eta$ is the learning rate and $\lambda$ is the Lebesgue measure
on $\Fcal$ (identifying elements $f\in\cF$ with their representatives $\theta\in\bbR^{d}$). 

With these definitions, the continuous Hedge algorithm enjoys the
following guarantee.

\begin{lemma}
  \label{lem:continuous_hedge}
  Assume that the losses $\ell_t$ satisfy
  $\|\ell_t\|_{\infty} \leq B_\ell$, $\Theta \subset \RR^d$ is
  contained within the Euclidean ball of radius $R$, and
  $f(x;\cdot)_{a}$ is $L$-Lipschitz continuous in the third argument
  with respect to $\ell_2$. Let the margin parameter $\gamma$ be fixed. Then the continuous Hedge algorithm with
  learning rate $\eta>0$ enjoys the following regret guarantee:
  \begin{align*}
    \mathrm{Regret}(T,\Fcal) \leq \inf_{\veps>0}\left\{\frac{TKB_\ell\veps}{\gamma} + \frac{d}{\eta}\log(RL/\veps) + \frac{\eta}{2}\sum_{t=1}^T\EE_{f\sim p_t}\langle\ell_t, \hinge(f(x_t))\rangle^2\right\}.
  \end{align*}
\end{lemma}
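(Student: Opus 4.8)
The plan is the standard continuous exponential weights (aggregating) potential argument, with one extra \emph{localization} step to pass from the continuum $\Theta$ to a neighborhood of the comparator. Write $g_{t}(f)\defeq\langle\ell_{t},\hinge(f(x_{t}))\rangle$, which is nonnegative since $\ell_{t}\in\RR^{K}_{+}$ and $\hinge\geq 0$, and let $Z_{t}\defeq\int_{\Theta}\exp(-\eta w_{t}(f))\,d\lambda(f)$, so $Z_{1}=\Vol(\Theta)$ and $Z_{T+1}=\int_{\Theta}\exp(-\eta\sum_{s=1}^{T}g_{s}(f))\,d\lambda(f)$. Since $Z_{t+1}/Z_{t}=\EE_{f\sim p_{t}}\exp(-\eta g_{t}(f))$ and $\eta g_{t}(f)\geq 0$, the elementary bound $e^{-x}\leq 1-x+\tfrac{1}{2}x^{2}$ (valid for all $x\geq 0$) followed by $1+y\leq e^{y}$ gives $Z_{t+1}/Z_{t}\leq\exp\!\big(-\eta\,\EE_{f\sim p_{t}}g_{t}(f)+\tfrac{\eta^{2}}{2}\EE_{f\sim p_{t}}g_{t}(f)^{2}\big)$. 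Taking logarithms and summing over $t$,
\[
\log Z_{T+1}-\log Z_{1}\;\leq\;-\eta\sum_{t=1}^{T}\EE_{f\sim p_{t}}g_{t}(f)\;+\;\frac{\eta^{2}}{2}\sum_{t=1}^{T}\EE_{f\sim p_{t}}g_{t}(f)^{2}.
\]

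\textbf{The localization lower bound.} Fix a comparator $\theta^{\star}\in\Theta$ and a radius $\delta\in(0,1]$. Using that $\Theta$ is convex and contains the centered unit ball (hence contains a full-dimensional Euclidean ball of radius $\delta$ all of whose points are $O(\delta)$-close to $\theta^{\star}$, with volume at least $(\delta/R)^{d}\Vol(\Theta)$ since $\Theta\subseteq B(0,R)$), and using that each $f(x;\cdot)_{a}$ is $L$-Lipschitz in $\ell_{2}$ while $\hinge$ is $1/\gamma$-Lipschitz, every $\theta$ in that ball satisfies $|g_{s}(f_{\theta})-g_{s}(f_{\theta^{\star}})|\leq \frac{L}{\gamma}\,\nrm{\ell_{s}}_{1}\,O(\delta)\leq O\!\big(\tfrac{KB_{\ell}L\delta}{\gamma}\big)$ for all $s$ (via $\nrm{\ell_{s}}_{1}\leq K\nrm{\ell_{s}}_{\infty}\leq KB_{\ell}$). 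Restricting the integral defining $Z_{T+1}$ to this ball yields
\[
\log Z_{T+1}\;\geq\;\log\Vol(\Theta)\;-\;d\log(R/\delta)\;-\;\eta\sum_{s=1}^{T}g_{s}(f_{\theta^{\star}})\;-\;O\!\Big(\tfrac{\eta TKB_{\ell}L\delta}{\gamma}\Big).
\]

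\textbf{Combining.} Plugging this into the potential bound, cancelling $\log Z_{1}=\log\Vol(\Theta)$, dividing by $\eta$, and choosing $\delta\asymp\veps/L$ so that the Lipschitz-discretization term becomes $O(TKB_{\ell}\veps/\gamma)$ and $d\log(R/\delta)$ becomes $d\log(RL/\veps)$, we obtain for every $\theta^{\star}\in\Theta$
\[
\sum_{t=1}^{T}\EE_{f\sim p_{t}}g_{t}(f)-\sum_{s=1}^{T}g_{s}(f_{\theta^{\star}})\;\leq\;\frac{TKB_{\ell}\veps}{\gamma}+\frac{d}{\eta}\log\!\Big(\frac{RL}{\veps}\Big)+\frac{\eta}{2}\sum_{t=1}^{T}\EE_{f\sim p_{t}}g_{t}(f)^{2}.
\]
Taking the infimum over $\theta^{\star}\in\Theta$ and over $\veps>0$ and recalling $g_{t}(f)=\langle\ell_{t},\hinge(f(x_{t}))\rangle$ gives exactly the stated bound on $\Reg(T,\Fcal)$.

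\textbf{Main obstacle.} The potential computation is textbook; the only delicate point is the localization. One must (i) exhibit, for an arbitrary comparator $\theta^{\star}$ that may lie on the boundary of $\Theta$, a full-dimensional Euclidean ball inside $\Theta$ every point of which is $O(\delta)$-close to $\theta^{\star}$ and whose volume is at least $(\delta/R)^{d}\Vol(\Theta)$ --- this is where convexity together with the inscribed unit ball is essential --- and (ii) correctly propagate the $\ell_{2}$-Lipschitz constant of $f(x;\cdot)$ through the $1/\gamma$-Lipschitz hinge map and the $\ell_{1}$ bound on $\ell_{s}$ to control $\sum_{s}g_{s}$ uniformly over that ball. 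Balancing this discretization error against the $\frac{d}{\eta}\log(1/\delta)$ volume term is what produces the tunable parameter $\veps$ in the final bound.
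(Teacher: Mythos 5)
Your proof is correct and takes essentially the same route as the paper: your log-partition-function bookkeeping (upper-bounding $\log Z_{T+1}-\log Z_1$ via $e^{-x}\le 1-x+x^2/2$ for $x\ge 0$, then lower-bounding $\log Z_{T+1}$ by localizing the integral near $\theta^\star$) is the standard equivalent reformulation of the paper's KL decomposition with $Q$ uniform on a Lipschitz neighborhood of $\theta^\star$, and both yield the identical variance term and the identical $\frac{d}{\eta}\log(RL/\veps)+\frac{TKB_\ell\veps}{\gamma}$ localization cost. The only quibble is quantitative and harmless: the inscribed ball $(1-\delta)\theta^\star+\delta B(0,1)$ has points within $O(R\delta)$ (not $O(\delta)$) of a boundary comparator, costing an extra $d\log R$ inside the logarithm, which is absorbed by the downstream $\otil(\cdot)$; on the other hand your explicit use of convexity to keep the localized ball inside $\Theta$ is if anything more careful than the paper, which takes $Q$ supported on $B_2(\theta^\star,\veps/L)$ without verifying containment in $\Theta$.
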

\begin{proof}
  Following the standard analysis for continuous Hedge 
(e.g. Lemma 10 in \cite{narayanan2017efficient}), 
  we know that the regret to some benchmark distribution
  $Q \in \Delta(\Fcal)$ is
  \begin{align*}
    \sum_{t=1}^T (\EE_{f \sim p_t} - \EE_{f \sim Q})(\langle \ell_t,\hinge(f(x_t))\rangle = \frac{\KL{}(Q\dmid{}p_0) - \KL{}(Q\dmid{}p_T)}{\eta} + \frac{1}{\eta} \sum_{t=1}^T\KL{}(p_{t-1}\dmid{}p_t).
  \end{align*}
  For the $\KL{}$ terms, using the standard variational representation, we have
  \begin{align*}
    \KL{}(p_{t-1} \dmid{} p_t) &= \log \EE_{f \sim p_{t-1}} \exp\prn*{-\eta \bigg\langle\ell_t, \hinge(f(x_t)) - \EE_{f \sim p_{t-1}}\hinge(f(x_t))\bigg\rangle}\\
                       & \leq \log\prn*{1 + \frac{\eta^{2}}{2}\EE_{f \sim p_{t-1}} \bigg\langle \ell_t, \hinge(f(x_t)) - \EE_{f \sim p_{t-1}}\hinge(f(x_t))\bigg\rangle^2}\\
                       & \leq \frac{\eta^{2}}{2} \EE_{f \sim p_{t-1}} \langle \ell_t, \hinge(f(x_t))\rangle^2.
  \end{align*}
  Here the first inequality is $e^{-x} \leq 1 - x + x^2/2$, using that the term inside the exponential is centered. The second
  inequality is $\log(1+x) \leq x$. 

  Using non-negativity of $\KL{}$, we only have to worry about the
  $\KL{}(Q\dmid{}p_0)$ term. Let $f^\star$ be the minimizer of the cumulative
  hinge loss. Let $\theta^{\star}\in\Theta$ be a representative for $f^{\star}$ and let $Q$ be the uniform distribution on
  $\Fcal_\veps(\theta^\star, x_{1:T}) \triangleq \{\theta : \max_{t\in[T]} \|f(x_t;\theta)- f(x_t;\theta^\star)\|_{\infty} \leq \veps\}$, then we have
  \begin{align*}
    \KL{}(Q\dmid{}p_0) = \int_{f} q(f)\log \prn*{q(f)/p_0(f)} d\lambda(f) = \int dQ(f) \cdot\log \frac{\int_{\Fcal} d\lambda(f)}{\int_{\Fcal_\veps}d\lambda(f)} = \log \frac{\Vol(\Fcal)}{\Vol(\Fcal_\veps(\theta^{\star},x_{1:T})},
  \end{align*}
  where $\Vol(S)$ denotes the Lebesgue integral. We know that
  $\Vol(\Theta) \leq c_d R^d$ where $c_d$ is the Lebesgue volume of
  the unit Euclidean ball and $R$ is the radius of the ball containing
  $\Theta$, and so we must lower bound the volume of
  $\Fcal_{\veps}(f^\star,x_{1:T})$. For this step, observe that by
  the Lipschitz-property of $f$,
  \begin{align*}
    \sup_{x \in \Xcal} \|f(x;\theta) - f(x;\theta^\star)\|_{\infty} \leq L \|\theta - \theta^\star\|_2,
  \end{align*}
  and hence
  $\Fcal_\veps(\theta^\star,x_{1:T}) \supset B_2(\theta^\star,
  \veps/L)$. Thus the volume ratio is
  \begin{align*}
    \frac{\Vol(\Fcal)}{\Vol(\Fcal_{\veps}(\theta^\star, x_{1:T}))} \leq \frac{c_dR^d}{c_d(\veps/L)^d} = (RL/\veps)^{d}.
  \end{align*}

  Finally, using the fact that the hinge surrogate is
  $1/\gamma$-Lipschitz, we know that
  \begin{align*}
    \sum_{t=1}^T\EE_{f \sim Q}\langle \ell_T, \hinge(f(x_t)) - \hinge(f^\star(x_t))\rangle &\leq 
    T B_\ell \sup_{t \in [T], f \in \textrm{supp}(Q)} \| \hinge(f(x_t)) - \hinge(f^\star(x_t))\|_1\\
    & \leq \frac{TKB_\ell \veps}{\gamma}.\tag*\qedhere
\end{align*}

\end{proof}

\subsection{From full information to bandits.}
\label{ssec:lmc_assembly}

We now combine the results of \pref{ssec:lmc} and \pref{ssec:continuous_hedge} to give the final guarantee for \lmc.

We begin by translating the regret bound in~\pref{lem:continuous_hedge}, followed by many steps of
approximation. At round $t$, let $P_t$ denote the Hedge distribution
on $\Theta$ using the losses $\tilde{\ell}_{1:t-1}$. 
Let $\tilde{P}_t$ denote the distribution from which $\theta_t\in\Theta$ is sampled in~\pref{alg:langevin_mc}. 

Let $p_t\in\Delta(\Acal)$ denote the induced distributions on actions induced by $P_{t}$, i.e. the distribution induced by the process $\theta\sim{}P_{t}$, $p_t(a)\propto\hinge(f(x_t;\theta))$. Likewise, let $\sim{p}_t\in\Delta(\Acal)$ be the distribution induced by $\theta\sim{}\tilde{P}_{t}$, $\tilde{p}_t(a)\propto\hinge(f(x_t;\theta))$; in this notation $\tilde{p}_{t}^{\mu}$ is precisely the distribution from which actions are sampled in \pref{alg:hinge_lmc}.

 Recall that we use $\mu$ in the superscript to denote
smoothing (e.g. $p_{t}^{\mu}$). Let $m_t$ denote the random variable sampled at round $t$
to approximate the importance weight. 

We also let $\hat{\ls}_{t}(a)=\frac{\ls_{t}(a)}{\tilde{p}_{t}^{\mu}(a)}\one\crl*{a_t=a}$ denote estimated losses under the true importance weights, which are not explicitly used by \pref{alg:hinge_lmc} but are used in the analysis.

Let $\one_{a}\in\bbR^{K}$ be the vector with $1$ at coordinate $a$ and $0$ at all other coordinates.

\begin{proof}[\pfref{thm:proj_lmc}]The thrust of this proof is to show that the full information bound in \pref{lem:continuous_hedge} does not degrade significantly under importance weighting and under the approximate LMC implementation of continuous exponential weights.\\
\textbf{Variance control}~~
Controlling the variance term in \pref{lem:continuous_hedge} requires
an application of~\pref{lem:ips_variance}.  After taking conditional expectations,
the variance term is
\begin{align*}
  \sum_{t=1}^T \EE_{\theta \sim P_t} \EE_{a_t \sim \tilde{p}_t^\mu} \EE_{m_t} \langle \tilde{\ell}_t, \hinge(f(x_t;\theta))\rangle^2 =   \sum_{t=1}^T \EE_{s \sim P_t} \EE_{a_t \sim \tilde{p}_t^\mu} \EE_{m_t} m_t^2 \langle\ell_t(a_t)\one_{a_t}, s\rangle^2.
\end{align*}
Here we are identifying $s$ with $\hinge(f(x_t;\theta))$ and
marginalizing out $\theta$ in the outermost expectation.  Note that
this is the same definition of $s$ as in~\pref{lem:ips_variance}.

First let us handle the $m_t$ random variable. Note that conditional
on everything up to round $t$ and $a_t$, $m_t$ is distributed
according to a geometric distribution with mean
$\tilde{p}^\mu_t(a_t)$, truncated at $M$. It is straightforward (cf. \cite{neu2013efficient})
to show that $m_t$ is stochastically dominated by a geometric random variable with
mean $\frac{1}{\tilde{p}^\mu_t(a_t)}$ and hence the second moment of this random
variable is at most $\frac{2}{\tilde{p}^\mu_t(a_t)^2}$. Thus, we are
left with
\begin{align*}
  & \leq 2 \sum_{t=1}^T \EE_{s \sim P_t} \EE_{a_t \sim \tilde{p}_t^\mu} \frac{1}{\tilde{p}_t^\mu(a_t)^2} \langle \ell_t(a_t)\one_{a_t},s\rangle^2 \\ & = 2\sum_{t=1}^T \EE_{s \sim P_t} \EE_{a_t\sim \tilde{p}_t^\mu} \langle \hat{\ell}_t,s\rangle^2\\
& \leq 2\sum_{t=1}^T (\EE_{s \sim \tilde{P}_t} - \EE_{s\sim P_t}) \EE_{a_t \sim \tilde{p}_t^\mu} \langle\hat{\ell}_t,s\rangle^2 + \EE_{s \sim \tilde{P}_t} \EE_{a_t \sim \tilde{p}_t^\mu} \langle \hat{\ell}_t,s\rangle^2.
 \end{align*}
 We can apply~\pref{lem:ips_variance} on the second term, since the only condition for the lemma is that the action distribution is induced from the distribution in the outer expectation. It follows that this term is bounded as
 \begin{align*}
   \sum_{t=1}^T \EE_{s \sim \tilde{P}_t} \EE_{a_t \sim \tilde{p}_t^\mu} \langle \hat{\ell}_t,s\rangle^2  \leq  TK^2(1+B/\gamma)^2.
 \end{align*}
 For the first term, evaluating the inner expectation, using the fact
 that $\tilde{p}_t^\mu(a) \ge \mu$ and applying the Lipschitz
 properties of $\hinge(\cdot), f(x;\cdot)$ (in particular that $f(x;\cdot)$ is $L$-Lipschitz with respect to $\ls_2$ and that the Wasserstein distance we work with is defined relative to $\ls_2$) we have
 \begin{align*}
    (\EE_{s \sim \tilde{P}_t} - \EE_{s\sim P_t}) \EE_{a_t \sim \tilde{p}_t^\mu} \langle\hat{\ell}_t,s\rangle^2
   &= \sum_a (\EE_{\theta \sim \tilde{P}_t} - \EE_{\theta \sim P_t}) \frac{\ell^2_t(a)}{\tilde{p}_t^\mu(a)} \hinge(f(x_t;\theta)_a)^2\\
   & \leq 2\frac{(1+B/\gamma)K L}{\gamma \mu} \sup_{g, \|g\|_{\textrm{Lip}} \leq 1} \left| \int g (dP_t - d\tilde{P}_t) \right|
   = 2\frac{(1+B/\gamma)K L}{\gamma \mu} \Wcal_1(P_t, \tilde{P}_t).
 \end{align*}

 Finally, using the Wasserstein guarantee $\Wcal_1(P_t, \tilde{P}_t)\leq{}\tau$ from ~\pref{thm:lmc}, we conclude that the
 cumulative variance term is upper bounded as
\begin{align*}
  \sum_{t=1}^T \EE \langle \tilde{\ell}_t, \hinge(f(x_t;\theta))\rangle^2 \leq \frac{4(1+B/\gamma)KT L\tau}{\gamma\mu} + 2(1+B/\gamma)^2K^2T.
\end{align*}

\paragraph{Bounding regret}
We first relate the cumulative loss under \pref{alg:hinge_lmc} to the cumulative loss of continuous exponential weights. Observe that
\begin{align*}
\sum_{t=1}^T \langle \ell_t, \tilde{p}_t^\mu\rangle &\leq \mu KT + \sum_{t=1}^T \langle \ell_t, \tilde{p}_t\rangle \\
&\leq \mu KT + \frac{1}{K}\sum_{t=1}^T \EE_{\theta \sim \tilde{P}_t} \langle \ell_t, \hinge(f(x_t;\theta))\rangle\\
& \leq \mu KT + \frac{TL\tau}{\gamma} + \frac{1}{K}\sum_{t=1}^T \EE_{\theta \sim P_t} \langle \ell_t, \hinge(f(x_t;\theta))\rangle.
\end{align*}
This first inequality is a straightforward consequence of smoothing,
while the second is a direct application of~\pref{lem:calibration}.

The third inequality is based on the fact
that $\langle \ell_t, \hinge(f(x_t;\theta))\rangle$ is
$KL/\gamma$-Lipschitz in $\theta$ with respect to $\ell_2$ norm under
our assumptions. This step also uses the Wasserstein guarantee in~\pref{thm:lmc} which produces the approximation factor $\tau$.

Following the analysis in \cite{neu2013efficient} and using the boundedness of $\hinge$, the bias introduced due to using geometric resampling with truncation at $M$ instead of exact inverse propensity scores is
\begin{align*}
& \sum_{t=1}^T \EE_{\theta \sim P_t}\langle \ell_t, \hinge(f(x_t;\theta))\rangle 
\leq{} \EE_{a_{1:T},m_{1:T}} \sum_{t=1}	^T \EE_{\theta \sim P_t} \langle \tilde{\ell}_t, \hinge(f(x_t;\theta))\rangle
 + \frac{T(1+B/\gamma)}{eM}.
\end{align*}
For the remaining term, we apply~\pref{lem:continuous_hedge} with
$\veps = \gamma/(TKM)$, since $M$ is an upper bound on the norm $\nrm*{\tilde{\ls}_{t}}_{1}$ of the losses
to the full information algorithm.
\begin{align*}
 & \EE_{a_{1:T},m_{1:T}} \sum_{t=1}	^T \EE_{\theta \sim P_t} \langle \tilde{\ell}_t, \hinge(f(x_t;\theta))\rangle  \\ & \leq \EE \inf_{\theta \in \Theta} \sum_{t=1}^T \langle \tilde{\ell}_t, \hinge(f(x_t,\theta))\rangle + 1+ \frac{d}{\eta}\log(RLTKM/\gamma) + \eta\left(\frac{(1+B/\gamma)KTL\tau}{\gamma\mu} + (1+B/\gamma)^2K^2T\right).
\end{align*}
The first term here is the benchmark we want to compare to, since
$\EE\inf(\cdot) \leq \inf \EE[\cdot]$ and so the regret contains several terms:
\begin{align*}
& \mu KT + \frac{TL\tau}{\gamma} + \frac{T(1+B/\gamma)}{eMK}+ \frac{1}{K} + \frac{d}{K\eta}\log(RLTKM/\gamma) + \frac{\eta}{2K}\left(\frac{2(1+B/\gamma)KTL\tau}{\gamma\mu} + 4(1+B/\gamma)^2K^2T\right)\\
& \leq \mu KT + \frac{TL\tau}{\gamma} + \frac{T(1+B/\gamma)}{eMK} + \frac{1}{K} + \frac{d}{K\eta}\log(RLTKM/\gamma) + \frac{2\eta}{K\gamma^2}\left(\frac{BKTL\tau}{\mu} + 4B^2K^2T\right).
\end{align*}
Here we use the assumption $B/\gamma\ge 1$. We will simplify the
expression to obtain an $\otil(\sqrt{dKT})$-type bound, first set
$\mu = 1/(K\sqrt{T}), M = \sqrt{T}$ and $\tau = \sqrt{1/(TL^2)}$. This gives
\begin{align*}
  & 2\sqrt{T} + \frac{2B}{\gamma}\sqrt{T} + \frac{2d}{K\eta}\log(RLTK/\gamma) + \frac{2\eta}{K\gamma^2}\left(BK^{2}T + 4B^2K^2T\right)\\
  & \leq O(B\sqrt{T}/\gamma) + \frac{2d}{K\eta}\log(RLTK/\gamma) + \frac{10\eta}{\gamma^2}B^2K T.
\end{align*}
Finally set $\eta = \sqrt{\frac{d \gamma^2\log(RLTK/\gamma)}{5 K^2 B^2 T}}$ to get
\begin{align*}
O(\sqrt{T}/\gamma) + O\left(\frac{B}{\gamma} \sqrt{dT\log(RLTK/\gamma)}\right) = \otil(\frac{B}{\gamma}\sqrt{dT}).
\end{align*}
This concludes the proof of the regret bound.

\paragraph{Running time calculation.} 
At each round make $M+1$ calls to the LMC sampling
routine for a total of $O(T^{3/2})$ calls across all rounds. We now bound the running time for a single call.

We always use parameter
$\tau = \sqrt{1/(TL^2)}$ and we know $\|\tilde{\ell}\|_1 \leq 1/\mu =
K\sqrt{T}$ and $\eta = \otil(\sqrt{\frac{d}{K^2T}})$. Plugging into the parameter choices at the end of the proof of~\pref{thm:lmc}, we must sample 
\begin{align*}
  m = \otil(T^3 d R^4L^2B_{\ls}^{2}/(K\gamma)^2)
\end{align*}
samples from a gaussian distribution on each iteration, and the number
of iterations to generate a single sample is:
\begin{align*}
N &= \otil\prn*{R^{18}L^{12}T^{6}d^{12} + \frac{R^{24}L^{48}d^{12}}{K^{24}}}.
\end{align*}
Therefore, the total running time across all rounds is
\[
\otil\prn*{
\frac{R^{22}L^{14}d^{14}B_{\ls}^{2}T^{10}}{K^2\gamma^2}
+ \frac{R^{28}L^{50}d^{14}B_{\ls}^{2}T^{4}}{K^{26}\gamma^{2}}
}.
\]
\end{proof}

\subsection{Proofs for  corollaries}
\label{ssec:lmc_corollaries}
\pref{corr:bandit_multiclass} is an immediate consequence
of~\pref{thm:hinge_lmc}. For~\pref{corr:realizable}, we apply~\pref{lem:hinge_realizable}, since
$\theta^\star \in \Theta$ satsifies the conditions of the lemma
pointwise. Thus
\begin{align*}
K^{-1} \EE \langle \ell_t, \hinge(f(x_t;\theta^\star))\rangle = K^{-1} \EE [\langle \bar{\ell}_t, \hinge(f(x_t;\theta^\star))\rangle \mid x_t] = \EE [\min_a \bar{\ell}_t(a) | x_t].
\end{align*}
Therefore, letting $a^{\star}_t$ denote the optimal action minimizing $\bar{\ls}_t$, we obtain the expected regret bound
\begin{align*}
  \sum_{t=1}^T \EE [\langle \bar{\ell}_t, a_t - a_t^\star\rangle] \leq \otil((B/\gamma)\sqrt{dT}).
\end{align*}

\section{Analysis of \ftl}
\label{app:ftl}

Recall we are in the stochastic setting. Let $\Dcal$ denote the
distribution over $(\Xcal,\RR_+^K)$.

The bulk of the analysis is the following uniform convergence lemma,
which is based on chaining for the function class $\Fcal$. Recall that
$\Ncal_{\infty,\infty}(\veps,\Fcal)$ is the 
$L_{\infty}/\ell_{\infty}$ covering number from \pref{def:cover}.
\begin{lemma}
  \label{lem:ftl_chaining}
  Fix a predictor $\hat{f}$ and let $\{x_i,a_i,\ell_i(a_i)\}_{i=1}^n$ be a dataset of $n$ samples, Suppose that $(x_i,\ls_i)$ are drawn i.i.d. from some distribution $\cD$ and $a_i$ is sampled from $p_i \defeq (1-K\mu)\pihinge{\hat{f}(x_i)} + \mu$. Define $\Rhathinge_n(f) \defeq \frac{1}{n}\sum_{i=1}^n\langle\hat{\ell}_i,\hinge(f(x_i))\rangle$, where $\hat{\ell}_i$ is the importance-weighted loss. Then:
  \begin{align*}
   & \EE \sup_{f \in \Fcal} |\Rhinge(f) - \Rhathinge_n(f)\rangle| \\
& \leq \frac{1}{\gamma} \inf_{\beta \ge 0} \crl*{2K\beta + 12\int_{\beta}^{2} \left(\sqrt{\frac{2K}{n\mu}\log(n\Ncal_{\infty,\infty}(\veps,\Fcal,n))} + \frac{3\log(n\Ncal_{\infty,\infty}(\veps,\Fcal,n))}{n\mu}\right)d\veps}.
  \end{align*}
\end{lemma}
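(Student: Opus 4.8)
The plan is to run a chaining argument directly on the centered empirical process, with a Bernstein-type inequality at each scale so that both the variance and the heavy tails induced by importance weighting are tracked. First I would note that, since $p_i(a)\geq\mu>0$, the loss estimates are conditionally unbiased, $\EE\brk*{\hat\ell_i\mid x_i,\ell_i}=\ell_i$, so with $Z_i(f)\defeq\tri*{\hat\ell_i,\hinge(f(x_i))}$ one has $\EE Z_i(f)=\Rhinge(f)$ and the quantity to control is $\EE\sup_{f\in\Fcal}\abs*{\tfrac1n\sum_{i=1}^n\prn*{Z_i(f)-\EE Z_i(f)}}$. Since $\hat\ell_i$ is supported on the single coordinate $a_i$, $Z_i(f)-Z_i(f')=\tfrac{\ell_i(a_i)}{p_i(a_i)}\prn*{\hinge(f(x_i)_{a_i})-\hinge(f'(x_i)_{a_i})}$, so using that $\hinge$ is $1/\gamma$-Lipschitz together with $p_i(a_i)\geq\mu$, any pair $f,f'$ with $\max_i\nrm*{f(x_i)-f'(x_i)}_\infty\leq\veps$ obeys the \emph{range} bound $\abs*{Z_i(f)-Z_i(f')}\leq\veps/(\mu\gamma)$ and, after taking $\EE_{a_i\sim p_i}$ and using $\sum_a\ell_i(a)^2\leq K$, the \emph{variance} bound $\EE\prn*{Z_i(f)-Z_i(f')}^2\leq K\veps^2/(\mu\gamma^2)$. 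These two constants are precisely what will generate the $\sqrt{K/(n\mu)}$ and $1/(n\mu)$ summands inside the integral.

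Next I would fix $\beta\geq0$, set $\veps_j=2^{-j+1}$ for $j\geq0$, and for each $j$ take a minimal $L_\infty/\ell_\infty$ cover $V_j$ of $\Fcal$ on the realized contexts $x_{1:n}$ at scale $\veps_j$, so that $\abs*{V_j}=\Ncal_{\infty,\infty}(\veps_j,\Fcal,x_{1:n})\leq\Ncal_{\infty,\infty}(\veps_j,\Fcal,n)$ (a finite sequence being a degenerate tree in \pref{def:cover}), with $\abs*{V_0}=1$ because $\nrm*{f(x)}_\infty\leq1$. Writing $\pi_j(f)\in V_j$ for a closest element and $J$ for the largest index with $\veps_J\geq\beta$, the telescoping identity $Z_i(f)=Z_i(\pi_0 f)+\sum_{j=1}^{J}\prn*{Z_i(\pi_j f)-Z_i(\pi_{j-1}f)}+\prn*{Z_i(f)-Z_i(\pi_J f)}$, after subtracting expectations, splits $\sup_f\abs*{\tfrac1n\sum_i\prn*{Z_i(f)-\EE Z_i(f)}}$ into a single centered average at the trivial scale, one \emph{link} supremum per scale $j\in\brk*{J}$, and a \emph{residual}. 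For the residual I would deliberately avoid concentration: $\sup_f\abs*{\tfrac1n\sum_i\prn*{Z_i(f)-Z_i(\pi_J f)}}\leq\tfrac1n\sum_i\nrm*{\hat\ell_i}_1\cdot\tfrac{\veps_J}{\gamma}$ and $\sup_f\abs*{\EE\prn*{Z_i(f)-Z_i(\pi_J f)}}\leq\EE\nrm*{\ell_i}_1\cdot\tfrac{\veps_J}{\gamma}$ (the second also using unbiasedness to discard the importance weights), and since $\EE\nrm*{\hat\ell_i}_1=\EE\nrm*{\ell_i}_1\leq K$ and $\veps_J\leq2\beta$, the residual is at most $2K\beta/\gamma$ in expectation---the leading term of the infimum, and, importantly, free of any $1/\mu$ factor.

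For the trivial-scale term and each link I would apply a maximal inequality over the $\abs*{V_j}\abs*{V_{j-1}}\leq\Ncal_{\infty,\infty}(\veps_j,\Fcal,n)^2$ relevant pairs at level $j$; with the range and variance bounds above evaluated at scale $\asymp\veps_{j-1}$, Bernstein applied to these $n$ i.i.d.\ centered summands gives an expected level-$j$ supremum of order $\tfrac{\veps_j}{\gamma}\prn*{\sqrt{\tfrac{K}{n\mu}\log\prn*{n\Ncal_{\infty,\infty}(\veps_j,\Fcal,n)}}+\tfrac{\log\prn*{n\Ncal_{\infty,\infty}(\veps_j,\Fcal,n)}}{n\mu}}$, where the union bound over scales and cover elements and the passage from a tail bound to an expectation are what account for the factor $n$ inside the logarithm (which also keeps the integrand bounded away from zero, so the trivial-scale term at $\veps_0=2$ is absorbed). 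Summing over $j$ and comparing the geometric sum $\sum_j\veps_j(\cdots)$ with $\int_\beta^2(\cdots)\,d\veps$ in the standard Dudley manner converts this into $\tfrac{12}{\gamma}\int_\beta^2\prn*{\sqrt{\tfrac{2K}{n\mu}\log\prn*{n\Ncal_{\infty,\infty}(\veps,\Fcal,n)}}+\tfrac{3\log\prn*{n\Ncal_{\infty,\infty}(\veps,\Fcal,n)}}{n\mu}}d\veps$; the infimum over $\beta\geq0$ then yields the claim.

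The hard part will be pushing \emph{both} Bernstein contributions---the sub-Gaussian term $\propto\sqrt{K/(n\mu)}$ and the sub-exponential range term $\propto1/(n\mu)$---through the chaining so that each telescopes cleanly into a single entropy integral, while peeling off the coarsest-scale residual through expectations rather than a high-probability bound, so that $1/\mu$ never contaminates the $O(K\beta/\gamma)$ term. Securing the variance constant $K$ (from $\sum_a\ell_i(a)^2\leq K$ rather than the cruder $\sum_a\ell_i(a)^2\leq\nrm*{\ell_i}_1^2\leq K^2$) is what keeps the bound strong enough for use in \pref{thm:ftl}, and the remaining bookkeeping---surfacing $\Ncal_{\infty,\infty}$ in place of the empirical covering number, choosing the per-level confidence parameters, and tracking the logarithmic factors---is where mistakes are easiest to make.
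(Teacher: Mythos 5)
Your overall strategy is the paper's: multi-scale empirical $L_\infty/\ell_\infty$ covers, a Bernstein bound at each link with variance $\asymp K\veps^2/(\mu\gamma^2)$ and range $\asymp \veps/(\mu\gamma)$, a union bound over pairs of cover elements with $\delta=1/n$ to pass from tails to expectations (producing the $n$ inside the logarithm), the finest-scale residual peeled off via H\"older and unbiasedness so that the $K\beta/\gamma$ term carries no $1/\mu$, and the Dudley sum-to-integral conversion. All of these ingredients are correct and are exactly how the paper proceeds.

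The one genuine gap is that you skip symmetrization while using covers built on the realized contexts $x_{1:n}$. Your telescoping centers each link at an expectation, but the cover elements $\pi_j f$ are sequences of values depending on all of $x_{1:n}$, so there is no population quantity for them to be centered at: conditionally on $x_{1:n}$ the links are independent and Bernstein applies, but then the centering terms reassemble to the \emph{conditional} risk $\frac{1}{n}\sum_i \En\brk*{\tri*{\ell,\hinge(f(x))}\mid x=x_i}$ rather than to $\Rhinge(f)$, leaving an uncontrolled term $\sup_f\abs*{\frac{1}{n}\sum_i \En\brk*{\tri*{\ell,\hinge(f(x))}\mid x=x_i} - \Rhinge(f)}$; if instead you center at unconditional expectations, the summands are no longer independent because they share the data-dependent cover. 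The missing fluctuation is of lower order (it is a bounded, full-information empirical process), so the final bound survives, but controlling it requires symmetrization anyway. The clean fix is the paper's first step: introduce a ghost sample and Rademacher variables, then condition on $x_{1:n}$, after which the empirical cover is legitimate, the process is automatically centered, and the trivial-scale term vanishes identically rather than needing to be absorbed into the entropy integral. The remaining discrepancies (e.g., $\veps_J$ versus $\beta$ costing an extra factor of $2$ in the $K\beta$ term) are constant-level bookkeeping of the same kind present in the paper's own proof.
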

\begin{proof}[\pfref{lem:ftl_chaining}]
Note that since the data-collection policy $\hat{f}$ is fixed, and since we are in the stochastic setting with $(x_i,\ls_i)\sim\cD$, the samples $\{x_i,a_i,\ell_i(a_i)\}_{i=1}^n$ are i.i.d. Consequently, we can apply the standard symmetrization upper bound for uniform convergence. Beginning with
  \begin{align*}
    & \EE_{x_{1:n},a_{1:n},\ell_{1:n}} \sup_{f \in \Fcal} \brk*{\Rhinge(f) - \Rhathinge_n(f)},\\
    \intertext{we introduce a second ``ghost'' dataset of samples $\tau=n+1,\ldots,2n$ via Jensen's inequality.}
    & \leq \EE_{x_{1:2n},a_{1:2n},\ell_{1:2n}} \sup_{f \in \Fcal} \frac{1}{n}\sum_{\tau=n+1}^{2n}\langle \hat{\ell}_\tau,\hinge(f(x_\tau))\rangle - \frac{1}{n}\sum_{\tau=1}^{n} \langle \hat{\ell}_\tau,\hinge(f(x_\tau))\rangle.
    \intertext{Introducing Rademacher random variables and splitting the supremum:}
    & \leq 2 \EE_{x_{1:n},a_{1:n},\ell_{1:n},\epsilon_{1:n}} \sup_{f \in \Fcal} \frac{1}{n}\sum_{\tau=1}^n\epsilon_\tau \langle \hat{\ell}_\tau, \hinge(f(x_\tau))\rangle.
  \end{align*}
  Now condition on $x_{1:n}$ and define a sequence $\beta_i = 2^{1-i}$ for
  $i \in \{0,1,2,\ldots,N\}$, where $N$ is such that $\beta_{N+1}\geq{}\beta\geq{}\beta_{N+2}$ for the value of $\beta$ in the lemma statement. For each
  $\beta_i$ let $V_i$ be a (classical) $L_{\infty}/\ell_{\infty}$ cover for $f$ at scale
  $\beta_i$ on $x_{1:n}$, that is
  \begin{align*}
    \forall f \in \Fcal, \forall i, \exists v \in V_i \textrm{ s.t. } \max_{t\in [n]} \|f(x_t) - v_t\|_{\infty} \leq \beta_i.
  \end{align*}
  We can always ensure $|V_i| \leq{} \Ncal_{\infty,\infty}(\beta_i,\Fcal, n)$ and
  since $\|f(x)\|_{\infty} \leq 1$, we know that
  $\Ncal_{\infty,\infty}(\beta_0,\Fcal,n) \leq 1$. Now, letting $v^{(i)}(f)$ denote the
  covering element for $f$ at scale $\beta_i$, we have
  \begin{align*}
    & \EE_{a_{1:n},\ell_{1:n},\epsilon_{1:n}} \sup_{f \in \Fcal} \frac{1}{n}\sum_{\tau=1}^n\epsilon_\tau \langle \hat{\ell}_\tau, \hinge(f(x_\tau))\rangle \\
    & \leq \EE_{a_{1:n},\ell_{1:n},\epsilon_{1:n}}\sup_{f \in \Fcal} \frac{1}{n}\sum_{\tau=1}^n\epsilon_\tau\langle \hat{\ell}_\tau, \hinge(f(x_\tau)) - \hinge(v^{(N)}_\tau(f))\rangle\\
    & ~~~~+ \sum_{i=1}^N\sup_{f\in\cF}\frac{1}{n}\sum_{\tau=1}^n\epsilon_\tau \langle \hat{\ell}_\tau,\hinge(v^{(i)}_\tau(f)) - \hinge(v^{(i-1)}_\tau(f))\rangle \\
    &~~~~+ \sup_{f\in\cF}\frac{1}{n}\sum_{\tau=1}^n\epsilon_\tau\langle\hat{\ell}_\tau,\hinge(v^{(0)}_\tau(f))\rangle.
  \end{align*}
  Since $|V_0|\leq{}1$, the expected value of the third term is zero. The remaining work is to bound the first and second terms.

  For the first term note that by \Holder's inequality, for any $f\in\cF$,
  \begin{align*}
  \frac{1}{n}\sum_{\tau=1}^n \epsilon_{\tau} \langle \hat{\ell}_\tau, \hinge(f(x_\tau)) - \hinge(v^{(N)}_\tau(f))\rangle & \leq 
  \frac{1}{n}\sum_{\tau=1}^n \|\hat{\ell}_\tau\|_1 \| \hinge(f(x_\tau)) - \hinge(v_\tau^{(N)}(f)) \|_{\infty} \\
& \leq \frac{\beta_N}{\gamma} \frac{1}{n}\sum_{\tau=1}^n\|\hat{\ell}_\tau\|_1,
  \end{align*}
  since $\hinge$ is $1/\gamma$-Lipschitz. Thus for the first term, we have
  \begin{align*}
    \EE_{a_{1:n},\ell_{1:n},\epsilon_{1:n}} \sup_{f \in \Fcal} \frac{1}{n}\sum_{\tau=1}^n \epsilon_\tau\langle \hat{\ell}_\tau, \hinge(f(x_\tau)) - \hinge(v_\tau^{(N)}(x_\tau))\rangle &\leq \frac{\beta_N}{\gamma} \EE_{a_{1:n},\ell_{1:n}} \frac{1}{n}\sum_{\tau=1}^n \|\hat{\ell}_\tau\|_1
    \leq \frac{\beta_NK}{\gamma}.
  \end{align*}
  Note that there is no dependence on the smoothing parameter $\mu$
  here.

  For the second term, let us denote the $i$th term in the summation by
  \begin{align*}
    \EE_{a_{1:n},\ell_{1:n},\epsilon_{1:n}} \underbrace{\sup_{f \in \Fcal} \frac{1}{n}\sum_{\tau=1}^n\epsilon_\tau\langle \hat{\ell}_\tau,\hinge(v^{(i)}_\tau(f)) - \hinge(v^{(i-1)}_\tau(f))\rangle}_{\triangleq\; \Ecal_i}.
  \end{align*}
We control $\Ecal_i$ using Bernstein's inequality and a union bound. First, note that the individual elements in the sum satisfy the deterministic bound
  \begin{align}
  \label{eq:ftl_range}
    |\epsilon_\tau\langle \hat{\ell}_\tau, \hinge(v^{(i)}_\tau(f)) - \hinge(v^{(i-1)}_\tau(f))\rangle |  \leq \frac{3\beta_{i}}{\mu\gamma},
    \end{align}
and the variance bound,
\begin{align}
    \EE \langle \hat{\ell}_\tau, \hinge(v^{(i)}_\tau(f)) - \hinge(v^{(i-1)}_\tau(f))\rangle^2 &\leq \sum_{a} \EE_{a_\tau} \frac{\one\{a_\tau=a\}}{p_\tau(a)^2} (\hinge(v_\tau^{(i)}(f)_a) - \hinge(v_\tau^{(i-1)}(f)_a))^2 \notag\\
    & \leq \sum_a \frac{1}{\mu} (3\beta_i/\gamma)^2 = \frac{9 \beta_i^2K}{\mu\gamma^2}.\label{eq:ftl_var}
  \end{align}
  Here we are using that $v^{(i)}(f)$ and $v^{(i-1)}(f)$ are the
  covering elements for $f$, Lipschitzness of $\hinge$, and the definition of the
  importance weighted loss $\hat{\ell}_\tau$.

Using \pref{eq:ftl_range} and \pref{eq:ftl_var}, Bernstein's inequality (e.g. \cite{boucheron2013concentration}, Theorem 2.9) implies that for any $\delta \in (0,1)$,
  \begin{align*}
    \frac{1}{n}\sum_{\tau=1}^n\epsilon_\tau\langle \hat{\ell}_\tau,\hinge(v_\tau^{(i)}(f)) - \hinge(v_\tau^{(i-1)}(f))\rangle \leq 6 \sqrt{ \frac{\beta_i^2K}{n\mu\gamma^2} \log(1/\delta)} + \frac{6\beta_i}{n\mu\gamma} \log(1/\delta),
  \end{align*}
  with probability at least $1-\delta$. The important point here is
  that $1/(n\mu)$ appears in the square root, as opposed to
  $1/(n\mu^2)$. Via a union bound, for any $\delta\in(0,1)$, with probability at least
  $1-\delta$,
  \begin{align*}
    & \sup_f\frac{1}{n}\sum_{\tau=1}^n\epsilon_\tau\langle \hat{\ell}_\tau,\hinge(v_\tau^{(i)}(f)) - \hinge(v_\tau^{(i-1)}(f))\rangle \\ 
&\leq 6 \sqrt{ \frac{\beta_i^2K}{n\mu\gamma^2} \log(|V_i||V_{i-1}|/\delta)} + \frac{6\beta_i}{n\mu\gamma} \log(|V_i||V_{i-1}|/\delta)\\
    & \leq \frac{6\beta_i}{\gamma} \left(\sqrt{\frac{2K}{n\mu}\log(|V_i|/\delta)} + \frac{2\log(|V_i|/\delta)}{n\mu}\right),
  \end{align*}
  since $|V_{i-1}| \leq |V_i|$. Now, recalling the shorthand definition $\Ecal_i$
  \begin{align*}
    & \EE_{a_{1:n},\ell_{1:n},\epsilon_{1:n}} \Ecal_i \leq \inf_{\zeta} \EE \one\{\Ecal_i \le \zeta\} \cdot \zeta + \EE \one\{\Ecal_i > \zeta\} \cdot \frac{3\beta_i}{\mu\gamma}\\
    & \leq \inf_{\delta \in (0,1)}\frac{6\beta_i}{\gamma} \left(\sqrt{\frac{2K}{n\mu}\log(|V_i|/\delta)} + \frac{2\log(|V_i|/\delta)}{n\mu}\right) + \frac{3\beta_i\delta}{\mu\gamma}.
    \intertext{Choosing $\delta=1/n$:}
    & \leq \frac{6\beta_i}{\gamma} \left(\sqrt{\frac{2K}{n\mu}\log(n|V_i|)} + \frac{3\log(n|V_i|)}{n\mu}\right).
  \end{align*}
  Thus, the second term in the chaining decomposition is
  \begin{align*}
    & \frac{6}{\gamma} \sum_{i=1}^N \beta_i \left(\sqrt{\frac{2K}{n\mu}\log(n|V_i|)} + \frac{3\log(n|V_i|)}{n\mu}\right)\\
    & = \frac{12}{\gamma}\sum_{i=1}^N (\beta_i - \beta_{i+1})\left(\sqrt{\frac{2K}{n\mu}\log(n|V_i|)} + \frac{3\log(n|V_i|)}{n\mu}\right)\\
    & \leq \frac{12}{\gamma}\int_{\beta_{N+1}}^{\beta_0} \left(\sqrt{\frac{2K}{n\mu}\log(n\Ncal_{\infty,\infty}(\beta,\Fcal))} + \frac{3\log(n\Ncal_{\infty,\infty}(\beta,\Fcal))}{n\mu}\right)d\beta.
  \end{align*}
  This concludes the uniform deviation statement. Exactly the same
  argument applies to the other tail, so the bound holds on the
  absolute value.
\end{proof}

\begin{proof}[\pfref{thm:ftl}]
Let us denote the right hand side of~\pref{lem:ftl_chaining}, when the dataset is size $n$, as
$\Delta_n$. Define,
\[
f^\star = \argmin_{f \in \Fcal} \EE \langle \ell,\hinge(f(x))\rangle,
\] 

Since the $m^{\textrm{th}}$ epoch proceeds for $n_m \triangleq 2^m$
rounds, and the predictor that we use in the $m^{\textrm{th}}$ epoch
is the ERM on all of the data from the $(m-1)^{\textrm{st}}$ epoch, the expected cumulative hinge regret for the $m^{\textrm{th}}$ epoch is
\begin{align}
& 2^m \cdot \prn*{\EE \Rhinge(\hat{f}_{m-1}) - \Rhinge(f^\star)}.\notag\\
 \intertext{Using the optimality guarantee for ERM: }
& \leq 2^m \cdot \prn*{\EE \Rhinge(\hat{f}_{m-1}) - \frac{1}{n_{m-1}}\sum_{\tau=n_{m-1}}^{n_{m}-1} \langle \hat{\ell}_\tau, \hinge(\hat{f}_{m-1}(x_\tau))\rangle + \frac{1}{n_{m-1}}\sum_{\tau=n_{m-1}}^{n_{m}-1}\langle \hat{\ell}_\tau, \hinge(f^\star(x_\tau)) - \Rhinge(f^{\star})}\notag\\
& \leq 2^{m+1} \EE \sup_{f} \left|\Rhinge(f) - \Rhathinge_{n_{m-1}}(f)\right|. \notag
\intertext{Using the guarantee from \pref{lem:ftl_chaining}:}
&\leq 2^{m+1}\Delta_{n_{m-1}}.\label{eq:erm_delta}
\end{align}
Summing this bound over all rounds, the cumulative expected regret after the zero-th epoch is $\sum_{m=1}^{\log_2(T)} 2^{m+1} \Delta_{n_{m-1}}$. The zero-th epoch contributes $1/\gamma$ to the regret, which will be lower order. This gives the following upper bound on the cumulative expected hinge loss regret.
\begin{align*}
&\Reg(T,\Fcal) \leq  \sum_{m=1}^{\log_2(T)} 2^{m+1} \Delta_{n_{m-1}}\\
& \leq  \frac{4}{\gamma} \sum_{m=1}^{\log_2(T)} \inf_{\beta > 0} \crl*{n_m K\beta+ 12\cdot2^{m-1}\cdot\int_{\beta}^{2B} \left(\sqrt{\frac{2K}{n_{m-1}\mu} \log(n_{m-1}\Ncal_{\infty,\infty}(\veps,\Fcal))} + \frac{3\log(n_{m-1}\Ncal_{\infty,\infty}(\veps,\Fcal))}{n_{m-1}\mu}\right)d\veps}\\
& \leq  \frac{4}{\gamma} \inf_{\beta > 0} \crl*{KT\beta + 12\log_2(T) \cdot \int_{\beta}^{2B}\left(\sqrt{\frac{2KT}{\mu} \log(T\Ncal_{\infty,\infty}(\veps,\Fcal))} + \frac{3\log(T\Ncal_{\infty,\infty}(\veps,\Fcal))}{\mu}\right)d\veps}.\\
&\defeq{}C.
\end{align*}
Let $z_{t} = \hat{f}_{m-1}(x_t)$ for each time $t$ in epoch $m$. We have just shown 
\[
\sum_{t=1}^{T}\En\tri*{\ls_t,\hinge(z_t)} \leq{} T\cdot{}\EE \langle \ell,\hinge(f^{\star}(x))\rangle + C.
\]
Using \pref{lem:calibration}, this implies
\[
\sum_{t=1}^{T}\En\tri*{\ls_t,\pihinge(z_t)} \leq {}\frac{T}{K}\cdot{}\EE \langle \ell,\hinge(f^{\star}(x))\rangle + \frac{C}{K}.
\]
Finally since $p_{t}=(1-K\mu)\pihinge(z_t) + \mu$ and $\nrm*{\ls_t}_{\infty}\leq{}1$, this implies the bound
\[
\sum_{t=1}^{T}\En\ls_{t}(a_t) \leq {}\frac{T}{K}\cdot{}\EE \langle \ell,\hinge(f^{\star}(x))\rangle + \underbrace{\frac{C}{K} + \mu{}KT}_{\defeq{}\;C'}.
\]
We proceed to bound the final regret $C'$ under the specific covering number behavior assumed in the theorem statement.
Assume that
$\log(\Ncal_{\infty,\infty}(\veps,\Fcal)) \leq \veps^{-p}$ for
some $p > 2$. Omitting the $\log(T)$ additive terms, which will
contribute $O(B\gamma^{-1}\sqrt{KT\log(T)/\mu} +
B\gamma^{-1}\log(T)/\mu)$ to the overall regret, the bound is now
\begin{align*}
& \mu{}KT + \frac{1}{\gamma{}K}\prn*{\inf_{\beta > 0} 4KT\beta + 12\log_2(T)\cdot\int_{\beta}^{2}\sqrt{\frac{2KT}{\mu\veps^{p}}}d\veps + 36\log_2(T)\cdot\int_{\beta}^{2}\frac{1}{\mu\veps^{p}}d\veps}.
\end{align*}
Choosing $\beta = (KT\mu)^{-1/p}$, this bound becomes
\[
O\prn*{\mu{}KT + \frac{1}{\gamma{}K}\log(T) (KT)^{1-1/p}\mu^{-1/p}}.
\]
Finally, we choose $\mu=\gamma^{-\frac{p}{p+1}}T^{-\frac{1}{p+1}}K^{-1}$, leading to a final bound of
$O\prn*{(T/\gamma)^{\frac{p}{p+1}}}$.
\end{proof}

\section{\ftl\xspace for Lipschitz CB}
\label{app:lipschitz}
Here we analyze \ftl\xspace in a stochastic Lipschitz contextual bandit
setting. To describe the setting, let $\Xcal$ be a metric space
endowed with metric $\rho$ and with covering dimension $p$. This
latter fact means that for each $0 < \veps \leq 1$, $\Xcal$ can be
covered using at most $C_\Xcal \veps^{-p}$ balls of radius
$\veps$. Let $\Acal$ be a finite set of $K$ actions.  In this
section, we define the benchmark class
$\Gcal \subset (\Xcal \to \Delta(\Acal))$ to be the set of
$1$-Lipschitz functions, meaning that
$\|g(x) - g(x')\|_1 \leq \rho(x,x')$ for all $g,x,x'$ (The choice of
$\ell_1$ norm is natural since we are operating over the simplex).

We focus on the stochastic setting where there is a distribution
$\Dcal$ over $\Xcal \times [0,1]^K$. At each round
$(x_t,\ell_t)\sim\Dcal$ is drawn and $x_t$ is presented to the
learner. The learner chooses a distribution $p_t \in \Delta(\Acal)$,
 samples an action $a_t \in \Acal$ from $p_t$, and observes the
loss $\ell_t(a_t)$. We measure regret via
\begin{align*}
\Reg(T,\Gcal) = \sum_{t=1}^T \EE \ell_t(a_t) - \inf_{g \in \Gcal}T \EE \langle g(x),\ell\rangle.
\end{align*}    

In this setting, \ftl\xspace takes the following form. Before the $m^{\textrm{th}}$ epoch, we
choose a function $\hat{g}_{m-1}$ by solving the empirical risk minimization (ERM) problem
\begin{align*}
\hat{g}_{m-1} = \argmin_{g \in \Gcal} \sum_{\tau=n_{m-1}}^{n_m-1} \langle \hat{\ell}_\tau, g(x_\tau)\rangle,
\end{align*}
where $\hat{\ell}_\tau$ is the importance weighted loss. Then, we use
$\hat{g}_{m-1}$ for all the rounds in the $m^{\textrm{th}}$ epoch, which means
that after observing $x_t$, we set
$p_t(a) = (1-K\mu)\hat{g}_m(x_t,a) + \mu$. We sample $a_t \sim p_t$,
observe $\ell_t(a_t)$ and use the standard importance weighting
scheme:
\begin{align*}
\hat{\ell}_t(a) = \frac{\ell_t(a_t)\one\{a = a_t\}}{p_t(a)}.
\end{align*}

For this algorithm, we have the following guarantee.
\begin{theorem}
\ftl~in the Lipschitz CB setting enjoys a regret of $\otil((KT)^{\frac{p}{p+1}})$. 
\end{theorem}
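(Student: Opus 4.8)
The plan is to reproduce the argument behind \pref{lem:ftl_chaining} and \pref{thm:ftl}, with the hinge surrogate stripped out — so $\gamma$ plays no role, and the calibration step of \pref{lem:calibration} is unnecessary, since $\Gcal$ maps directly into $\Delta(\Acal)$ and we compare to $\Reg(T,\Gcal)$ without dividing by $K$ — and with one new ingredient replacing the hypothesis $\log\Ncal_{\infty,\infty}(\veps,\Fcal)\le\veps^{-p}$ used there: a covering-number estimate for $\Gcal$.

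\textbf{Covering numbers.} Since $\Xcal$ has covering dimension $p$, I would fix an $\veps$-net of $\Xcal$ of size at most $C_{\Xcal}\veps^{-p}$; a $1$-Lipschitz map $g:\Xcal\to\Delta(\Acal)$ is pinned down on all of $\Xcal$ up to $\ell_{\infty}$-error $O(\veps)$ once its values at the net points are specified to additive accuracy $\veps$, so discretizing $\Delta(\Acal)$ at scale $\veps$ and taking the $1$-Lipschitz interpolant of the discretized values yields an element of an $L_{\infty}/\ell_{\infty}$ cover. This gives a bound of the form $\log\Ncal_{\infty,\infty}(\veps,\Gcal,n)=\otil(\veps^{-p})$ (with constants depending on $K$ and $C_{\Xcal}$), and since \ftl{} operates in the i.i.d.\ setting the classical (worst-case over $x_{1:n}$) covering number is all that is needed; one may additionally use $\min\crl*{n,\,C_{\Xcal}\veps^{-p}}$ in place of $C_{\Xcal}\veps^{-p}$.

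\textbf{Uniform convergence and epoching.} Fixing the data-collection policy $\hat g$ and drawing $(x_i,\ls_i)\sim\Dcal$ i.i.d.\ with $a_i\sim(1-K\mu)\hat g(x_i)+\mu$, I would rerun the chaining proof of \pref{lem:ftl_chaining} verbatim with $\hinge$ replaced by the identity: symmetrize, chain over dyadic $L_{\infty}/\ell_{\infty}$ covers of $\Gcal$, and bound each increment by Bernstein's inequality, where the scale-$\beta_i$ increment $\tri*{\hat{\ls}_\tau,g(x_\tau)-v_\tau}$ has range $O(\beta_i/\mu)$ and conditional variance $O(K\beta_i^{2}/\mu)$ — the key point, exactly as in \pref{thm:ftl}, being that the smoothing floor $p_\tau\ge\mu$ makes the variance scale as $1/(n\mu)$ rather than $1/(n\mu^{2})$. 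This produces an $n$-sample uniform deviation bound $\Delta_n$ of precisely the form in \pref{lem:ftl_chaining} with $\gamma=1$, in terms of $\log\Ncal_{\infty,\infty}(\cdot,\Gcal,n)$. Because each epoch's data is collected by the \emph{fixed} predictor of the previous epoch it is i.i.d., so ERM optimality bounds the expected regret over epoch $m$ by $2^{m+1}\Delta_{n_{m-1}}$ as in \pref{eq:erm_delta}. Summing over the $O(\log T)$ epochs, adding the $\mu KT$ smoothing cost from $p_t=(1-K\mu)\hat g(x_t)+\mu$, inserting the Step-1 estimate $\log\Ncal_{\infty,\infty}(\veps,\Gcal,T)=\otil(\veps^{-p})$, and optimizing — choosing the chaining cutoff $\beta\asymp(T\mu)^{-1/p}$ (up to logarithmic factors) to balance $KT\beta$ against the leading chaining terms, then $\mu\asymp T^{-1/(p+1)}$ to balance against $\mu KT$, exactly the computation ending the proof of \pref{thm:ftl} (and, as there, in the high-complexity regime of interest for \ftl) — yields $\Reg(T,\Gcal)=\otil((KT)^{p/(p+1)})$.

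\textbf{Where the difficulty lies.} Everything except the covering-number estimate is a mechanical transcription of the \pref{thm:ftl} pipeline, so the real obstacle will be establishing that estimate for Lipschitz maps into the simplex and threading the output dimension $K$ through the Bernstein chaining tightly enough that the final rate comes out as $(KT)^{p/(p+1)}$ rather than a cruder $K\cdot T^{p/(p+1)}$; exploiting the $\ell_1$-Lipschitz/simplex constraint on $g(x)$ (and, if necessary, the $\min\crl*{n,\,C_{\Xcal}\veps^{-p}}$ refinement of the cover) in the covering and variance estimates is what I expect to be the crux.
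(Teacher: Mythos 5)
Your proposal matches the paper's proof essentially step for step: the paper likewise invokes \pref{lem:ftl_chaining} with the margin parameter stripped out (the benchmark functions are $1$-Lipschitz directly, rather than $1/\gamma$-Lipschitz after composing with $\hinge$), asserts the metric entropy bound $\otil(C_{\Xcal}\veps^{-p})$ for $1$-Lipschitz maps into $\Delta(\Acal)$ (which you actually justify in more detail than the paper does), and then reruns the epoching and optimization from \pref{thm:ftl}. The one concrete correction: the ``crux'' you flag about threading $K$ through is resolved simply by putting $K$ inside the tuning, i.e.\ $\beta=(TK\mu)^{-1/p}$ and $\mu=(TK)^{-1/(p+1)}$ rather than your $\beta\asymp(T\mu)^{-1/p}$ and $\mu\asymp T^{-1/(p+1)}$; with your choices the smoothing cost $\mu KT$ alone is already $KT^{p/(p+1)}$, whereas the paper's choices balance every term at $(KT)^{p/(p+1)}$. (The paper's own closing remark in this appendix concedes that even its $K^{p/(p+1)}$ dependence is likely suboptimal.)
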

This theorem improves upon the recent result
of~\citet{cesa2017algorithmic}, who obtain
$\otil(T^{\frac{p+1}{p+2}})$ in this setting.

\begin{proof}
  We are in a position to apply~\pref{lem:ftl_chaining}. The main
  difference is that there is no margin parameter, since our functions
  are $1$-Lipschitz, instead of $1/\gamma$-Lipschitz after applying
  the surrogate loss. The $\ell_{\infty}$-metric entropy at scale
  $\veps$ is $C_\Xcal \veps^{-p}$ up to polynomial factors in
  $K$ and logarithmic factors, and so in the $m^\textrm{th}$ epoch the ERM has
  sub-optimality (see \pref{eq:erm_delta}) at most
\begin{align*}
\otil\left(\inf_{\beta} K\beta + \int_{\beta}^1 \sqrt{\frac{K\beta^{-p}}{n_{m-1}\mu}} + \frac{\beta^{-p}}{n_{m-1}\mu}\right),
\end{align*}
where $\Otilde$ hides dependence on $C_{\Xcal}$.
Following the argument in the proof of~\pref{thm:ftl}, the
overall regret is then
\begin{align*}
\Reg(T,\Gcal) = \otil\left(\mu{}KT + \inf_{\beta} TK\beta + \int_{\beta}^1 \sqrt{\frac{TK\beta^{-p}}{\mu}} + \frac{\beta^{-p}}{\mu} \right).
\end{align*}
Set $\beta = (TK\mu)^{-1/p}$ and then $\mu = (TK)^{\frac{-1}{p+1}}$ now to
obtain the result.
\end{proof}

In principle our technique can be further extended to the setting
where the action space is also a general metric space, and the losses
are Lipschitz, which is the more general setting addressed
by~\citet{cesa2017algorithmic}. If the action space has covering
dimension $p_\Acal$ then we discretize the action space to resolution
$\epsilon$, set $K = \epsilon^{-p_\Acal}$ in the above argument, and
balance $\epsilon$ with an additional $T\epsilon$ factor that we pay
for discretization. This is the approach used
in~\citet{cesa2017algorithmic} to obtain
$T^{\frac{p+p_\Acal+1}{p+p_\Acal+2}}$. Unfortunately, our argument
above obtains a somewhat poor dependence on $K$ ($K^{\frac{p}{p+1}}$
as opposed to $K^{\frac{1}{p+1}}$, which is more
natural). Consequently, the argument produces a bound of
$\otil(T^{\frac{p + p p_\Acal}{p + p p_\Acal +1}})$ which only
improves on~\citet{cesa2017algorithmic} when $p_\Acal \leq 1/(p-1)$.

\end{document}